\def\eqref#1{equation~\ref{#1}}
\def\1{\bm{1}}
\def\eps{{\epsilon}}
\DeclareMathAlphabet{\mathsfit}{\encodingdefault}{\sfdefault}{m}{sl}
\SetMathAlphabet{\mathsfit}{bold}{\encodingdefault}{\sfdefault}{bx}{n}
\def\gI{{\mathcal{I}}}
\def\gN{{\mathcal{N}}}
\newcommand{\R}{\mathbb{R}}
\DeclareMathOperator{\Tr}{Tr}
\DeclareMathOperator{\mls}{MLS}
\DeclareMathOperator{\nmls}{NMLS}
\DeclareMathOperator{\diag}{diag}
\newcommand{\vct}[1]{\mathbf{#1}}
\newcommand{\bld}[1]{\boldsymbol{#1}}
\newcommand\norm[1]{\lVert#1\rVert}
\newcommand{\f}[2]{\frac{#1}{#2}}
\DeclareMathOperator{\vol}{Vol}
\DeclareMathOperator{\tr}{Tr}
\newcommand{\SRcomment}[1]{}
\newcommand{\SCcomment}[1]{}
\newcommand{\deletedtext}[1]{}
\def\R{\mathbb{R}}
\newtheorem{thm}{Theorem}[section]
\newtheorem{prop}[thm]{Proposition}
\newtheorem{lemma}[thm]{Lemma}
\newtheorem{defi}[thm]{Definition}
\newtheorem*{thm*}{Theorem}
\newtheorem*{conj*}{Conjecture}
\newtheorem*{openproblem*}{Open Problem}
\newtheorem*{prop*}{Proposition}
\newcommand{\m}[1]{\mathbf{#1}}
\newcommand{\Norm}[1]{ \left\|  #1 \right\| }
\def\alp{\alpha}
\def\del{\delta}             
\def\eps{\varepsilon}
\def\tet{\theta}
\def\lam{\lambda}            
\def\sig{\sigma}
\newcommand{\bt}{\bld{\tet}}
\newcommand{\cflim}{C_{f}^{\lim}}
\newcommand{\btet}{\bld{\tet}}
\newcommand{\sadapt}{S_{\text{adaptive}}}
\newcommand{\vW}{\vct{W}}
\title{A simple connection from loss flatness to compressed neural representations}
\author{\name Shirui Chen \email sc256@uw.edu \\
      \addr Department of Applied Mathematics\\
      Computational Neuroscience Center\\
      University of Washington
      \AND
      \name Stefano Recanatesi \email stefano.recanatesi@gmail.com \\
      Technion, Israel Institute of Technology
      \AND
      \name Eric Shea-Brown \email etsb@uw.edu \\
      \addr Department of Applied Mathematics\\
      Computational Neuroscience Center\\
      University of Washington}
\newif\ifrevisionhighlight
\newcommand{\revised}[1]{%
  \ifrevisionhighlight%
    \textcolor{orange}{#1}%
  \else%
    #1%
  \fi%
}
\begin{document}

\maketitle

\begin{abstract}
\revised{Despite extensive study, the fundamental significance of sharpness---the trace of the loss Hessian at local minima---remains unclear. While often associated with generalization, recent work reveals inconsistencies in this relationship. We explore an alternative perspective by investigating how sharpness relates to the geometric structure of neural representations in feature space. Specifically, we build from earlier work by Ma and Ying to broadly study compression of representations, defined as the degree to which neural activations concentrate when inputs are locally perturbed. We introduce three quantitative measures: the Local Volumetric Ratio (LVR), which captures volume contraction through the network; the Maximum Local Sensitivity (MLS), which measures maximum output change normalized by the magnitude of input perturbations; and Local Dimensionality, which captures uniformity of compression across directions.}

\revised{We derive upper bounds showing that LVR and MLS are mathematically constrained by sharpness: flatter minima necessarily limit these compression metrics. These bounds extend to reparametrization-invariant sharpness (measures unchanged under layer rescaling), addressing a key limitation of standard sharpness. We introduce network-wide variants (NMLS, NVR) that account for all layer weights, providing tighter and more stable bounds than prior single-layer analyses. Empirically, we validate these predictions across feedforward, convolutional, and transformer architectures, demonstrating consistent positive correlation between sharpness and compression metrics. Our results suggest that sharpness fundamentally quantifies representation compression rather than generalization directly, offering a resolution to contradictory findings on the sharpness-generalization relationship and establishing a principled mathematical link between parameter-space geometry and feature-space structure. Code is available at \url{https://github.com/chinsengi/sharpness-compression}.}
\end{abstract}

\section{Introduction}

There has been long-standing interest in sharpness, a geometric metric in the \textit{parameter} space that measures the flatness of the loss landscape at local minima. Flat minima refer to regions in the loss landscape where the loss function has a relatively large basin, and the loss does not change much in different directions around the minimum. Empirical studies and theoretical analyses have shown that training deep neural networks using stochastic gradient descent (SGD) with a small batch size and a high learning rate often converges to flat and wide minima \citep{ma_linear_2021,blanc_implicit_2020,geiger_landscape_2021,li_what_2022,wu_how_2018, jastrzebski_three_2018, xie_diffusion_2021, zhu_anisotropic_2019}. Many works conjecture that flat minima lead to a simpler model (shorter description length), and thus are less likely to overfit and more likely to generalize well \citep{hochreiter1997flat, keskar2016large, wu_how_2018,yang_stochastic_2023}. Based on this rationale, sharpness-aware minimization (SAM) has been a popular method for improving a model's generalization ability. However, recent work has shown that SAM does not \textit{only} minimize sharpness to achieve superior generalization performance \citep{andriushchenko2022towards, wen2023sharpness}. More confusingly, it remains unclear whether flatness correlates positively with the generalization capacity of the network \citep{dinh2017sharp, andriushchenko2023modern, yang2021taxonomizing}, and even when it does, the correlation is not perfect \citep{neyshabur2017exploring, jiang2019fantastic}. In particular, \citet{dinh2017sharp} argues that one can construct very sharp networks that generalize well through reparametrization, while \citet{andriushchenko2023modern} show that even reparametrization-invariant sharpness cannot capture the relationship between sharpness and generalization.

\begin{figure}[t]
    \centering
    \includegraphics[width=\linewidth]{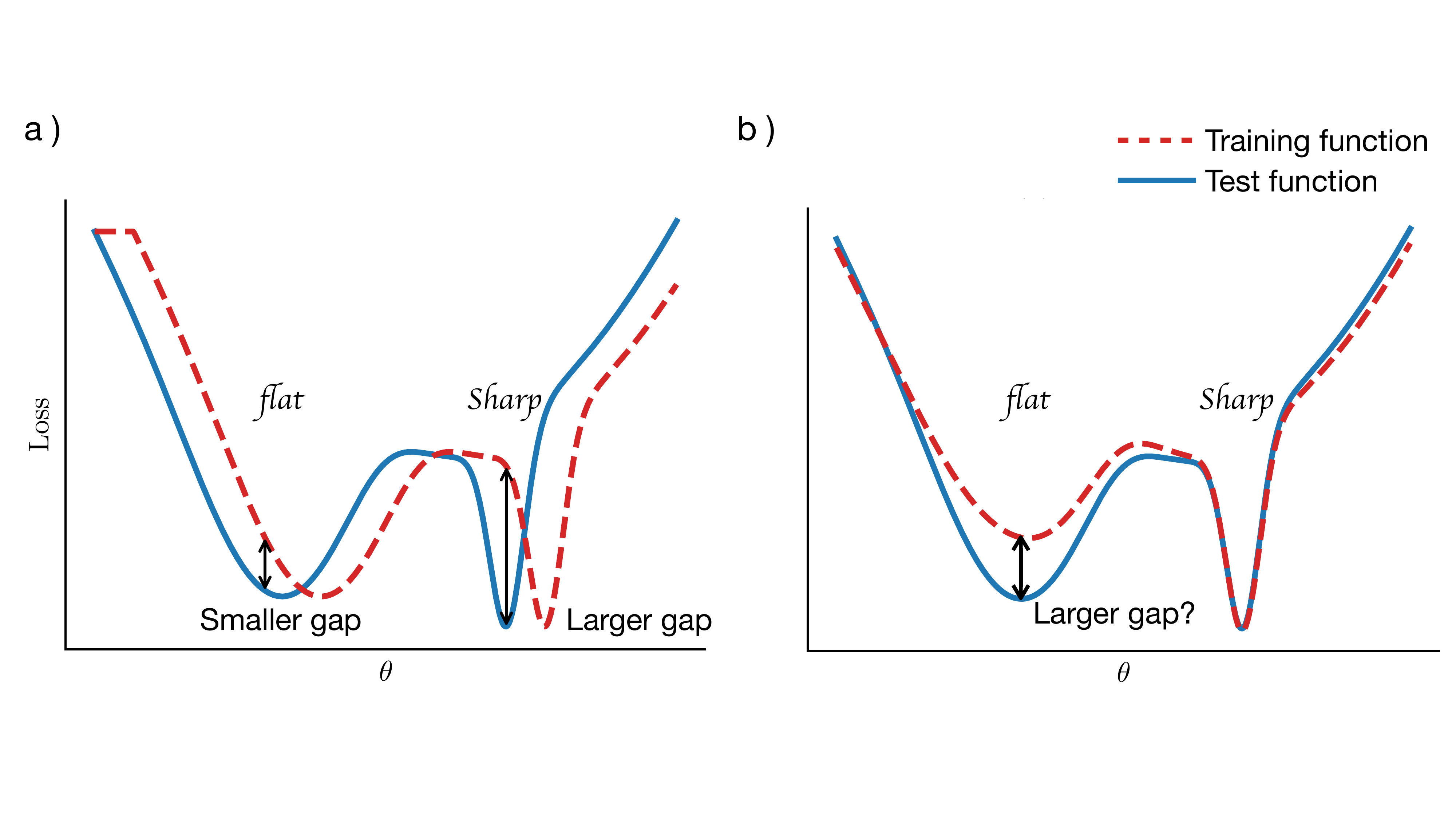}
    \caption{A schematic illustration of why flatness alone does not reliably predict generalization. \textbf{(a)}~Under the common but unjustified assumption that the test loss is a horizontal shift of the training loss, flat minima trivially exhibit a smaller generalization gap than sharp minima. \textbf{(b)}~In a hypothetical but possible scenario, the test loss differs from the training loss by an additive perturbation rather than a simple shift. Here the flat minimum can exhibit a \emph{larger} generalization gap than the sharp minimum, demonstrating that curvature alone is insufficient to predict generalization.}
    \label{fig:motivation}
\end{figure}

\Cref{fig:motivation} illustrates the crux of this issue. The common intuition for why flat minima should generalize better \citep{hochreiter1997flat, keskar2016large} implicitly assumes that the test loss landscape is a horizontally shifted copy of the training loss (\Cref{fig:motivation}a): under such a shift, the loss at a flat minimum changes little while the loss at a sharp minimum increases substantially, yielding a smaller generalization gap at the flat minimum. However, there is no principled justification for this horizontal-shift assumption. In a hypothetical setting where the test loss differs from the training loss by a random perturbation (\Cref{fig:motivation}b), the generalization gap at a flat minimum could be \emph{larger} than at a sharp minimum, contradicting the standard narrative. This suggests that sharpness, while clearly capturing meaningful geometric structure, may not be directly measuring generalization ability.

In light of this paradox, we take a different perspective: rather than asking what sharpness tells us about generalization, we ask what sharpness tells us about the \textit{representations} learned by the network. We show that there exists a more consistent relationship by investigating how sharpness near interpolation solutions in the \textit{parameter} space influences local geometric features of neural representations in the \textit{feature} space. By building a relationship between sharpness and the local compression of neural representations, we argue that sharpness, in its essence, measures the compression of neural representations. Specifically, we show that as sharpness decreases and the minimum flattens, sharpness-related quantities upper-bound certain compression measures, meaning that the neural representation must also undergo some degree of compression. We also note how local dimensionality is a compression metric of a distinct nature and therefore does not necessarily correlate with sharpness.

More specifically, our work makes the following novel contributions:

\begin{enumerate}
  \setlength\itemsep{-.2em}
\item \revised{\textbf{Refined and new compression metrics with sharpness bounds.} We refine and identify feature space quantities that quantify compression and are bounded by sharpness: local volumetric ratio (LVR) and maximum local sensitivity (MLS).  The former is a new metric in this context and the latter refines a related quantity in earlier work.  \footnote{We collectively term MLS, NMLS as "compression metrics", because these quantities measure how compressed/concentrated a set of noise-perturbed input/internal neural representations is after going through the network.} We derive explicit upper bounds showing these metrics are constrained by sharpness-related quantities. This establishes a direct mathematical link between parameter-space geometry (sharpness) and feature-space geometry (compression), applicable to feedforward networks with quadratic loss.}

\item \revised{\textbf{Improved bounds via network-wide metrics.} We improve the bound on MLS from \citet{ma_linear_2021} and propose Network MLS (NMLS), which considers sensitivity at all intermediate layers rather than only the input. This yields bounds that consistently predict positive correlation with sharpness across diverse experimental settings, overcoming limitations of single-layer analyses where prefactors can dominate and obscure the sharpness-compression relationship.}

\item \revised{\textbf{Extension to reparametrization-invariant sharpness.} We prove that using reparametrization-invariant sharpness (i.e., unchanged under layer rescaling) tightens our bounds, addressing a key criticism of standard sharpness measures and providing a novel interpretation: reparametrization-invariant sharpness quantifies robustness of outputs to internal representations.}

\item \revised{\textbf{Empirical validation across architectures.} We validate our theoretical predictions through experiments on feedforward (MLP), convolutional (VGG-11, LeNet), and transformer (ViT) architectures, demonstrating that LVR and MLS/NMLS consistently correlate with their sharpness-related bounds during training and across pretrained models.}

\item \revised{\textbf{Connection to neural collapse and intermediate representations.} We extend our analysis to penultimate and intermediate layer representations, relating our framework to neural collapse phenomena. Our bounds apply most naturally when the number of classes exceeds feature dimension, relevant for settings like language modeling and large-scale retrieval.}
\end{enumerate}

\revised{\textbf{On bound tightness.} We emphasize that our theoretical results provide \textit{upper bounds} rather than equalities. Moreover, these bounds depend not only on sharpness but also on weight norms and input magnitudes. Consequently, for two networks with identical sharpness but different weight scales, the bounds may differ. However, our extensive empirical validation demonstrates that despite this apparent looseness, the bounds are sufficient to predict consistent positive correlation between sharpness and compression metrics across training dynamics and diverse architectures. This empirical validation is essential to our claims and demonstrates the practical utility of our theoretical framework.}

With these results, we help reveal the nature of sharpness through the interplay between key properties of trained neural networks in parameter space and feature space.

\revised{\paragraph{Paper organization.} \Cref{sec:back} reviews the pioneering arguments of \citet{ma_linear_2021} that flatter minima can constrain the gradient of network output with respect to network input, and extends the formulation to the multidimensional input case. \Cref{sec:compression} proves that lower sharpness implies a lower upper bound on two metrics of the compression of the representation manifold in feature space: the local volumetric ratio and the maximum local sensitivity (MLS) (\Cref{subsec:vol}, \Cref{subsec:mls}). \Cref{sec:exp} empirically verifies our theory by calculating various compression metrics, their theoretical bounds, and sharpness for models during training as well as pretrained ones. Finally, \Cref{sec:diss} discusses how these conditions help explain why there are mixed results on the relationship between sharpness and generalization in the literature, by looking through the alternative lens of compressed neural representations.}

\revised{\paragraph{Key terminology.} Before proceeding, we clarify key terms used throughout this work. \textit{Reparametrization} refers to equivalent network parameterizations that produce identical input-output mappings; for example, scaling the weights of layer $l$ by factor $\alpha$ and layer $l+1$ by $1/\alpha$ leaves the network function with ReLU nonlinearity unchanged as demonstrated in \citet{dinh2017sharp}. Reparametrization-invariant sharpness measures remain constant under such transformations, addressing the concern that standard sharpness can be arbitrarily changed without affecting network behavior.}

\revised{The compression metrics we study are: \textit{Local Volumetric Ratio (LVR)}, which quantifies how much a small input volume contracts as it propagates through the network; \textit{Maximum Local Sensitivity (MLS)}, which measures the maximum rate of change of network output with respect to input perturbations; and \textit{Network MLS (NMLS)}, which extends MLS by considering sensitivity at all intermediate layer representations rather than only the input layer.}

\section{Background and setup} \label{sec:back}

Consider a feedforward neural network $f$ with input data $\vct{x}\in \R^M$ and parameters $\bld{\theta}$. The output of the network is:
\begin{equation}
\vct{y}=f(\vct{x}; \bld{\theta})\ ,
\end{equation}
where $\vct{y}\in \R^N$ ($N<M$). We consider a quadratic loss
$L(\vct{y}, \vct{y}_{\textrm{true}}) = \frac{1}{2}\Norm{\vct{y}- \vct{y}_{\textrm{true}}}^2$, a function of the outputs and ground truth $\vct{y}_{\textrm{true}}$. In the following, we will simply write $L(\vct{y})$, $L(f(\vct{x},\bld{\theta}))$ or simply $L(\bld{\theta})$ to highlight the dependence of the loss on the output, the network, or its parameters. 

Sharpness measures how much the loss gradient changes when the network parameters are perturbed, and is defined by the sum of the eigenvalues of the Hessian: 
\begin{equation}\label{eq:sharpness}
S(\bld{\theta}) = \textrm{Tr}(H)\ ,
\end{equation}
with $H = \nabla^2L(\bld{\theta})$ being the Hessian. The trace of the Hessian, $\operatorname{Tr}\left(\nabla^2 L(\theta)\right)$, is not the only definition of sharpness, but many sharpness minimization methods have been theoretically shown to reduce this quantity in interpolating models. Specifically, assuming that the training loss minimizers lie on a smooth manifold \citep{cooper2018loss,fehrman2020convergence}, methods like Sharpness-Aware Minimization (SAM) \citep{foret2020sharpness} when used with batch size 1 and sufficiently small learning rate and perturbation radius \citep{wen2022does,bartlett2023dynamics}, or Label Noise SGD with a small enough learning rate \citep{blanc2020implicit,damian2021label}, tend to favor interpolating solutions with a low Hessian trace. Therefore, we focus our analysis on the trace of the Hessian.
\revised{\paragraph{Scope and loss function considerations.} Our theoretical analysis is conducted using the quadratic (MSE) loss, which allows us to derive the key relationship in \Cref{eq:zloss_sharpness} connecting sharpness to network gradients. However, we note important considerations regarding other loss functions. For cross-entropy (CE) loss, \citet{granziol2020flatnessfalsefriend} showed that Hessian-based flatness measures become unreliable: solutions with large weights and low CE loss can exhibit artificially small Hessian traces, appearing deceptively "flat" despite poor generalization properties. Moreover, the Hessian rank of CE loss is bounded by the number of neurons times the number of classes, causing many eigenvalues to be zero or near-zero regardless of the solution quality. Consequently, networks with L2 regularization (which have smaller weights) can be sharper yet generalize better than unregularized networks, directly contradicting the flatness-generalization hypothesis for CE loss. This fundamentally breaks the connection between CE loss sharpness and generalization.}

\revised{Despite this limitation of CE loss sharpness, our framework can still provide insights for networks trained with CE loss through two pathways. First, our theoretical results extend to logistic loss with label smoothing because using the logistic loss with label smoothing yields the same set of minimizers and flattest minimizers as
a corresponding problem using mean squared error (see Lemma A.13 in \citet{wen2023sharpness}). Second, and more practically relevant, our compression metrics (LVR, MLS, NMLS) depend on the network function $f$ itself rather than the training loss. Once a network is trained (with any loss function), we can analyze its compression properties by examining how it transforms inputs to outputs. The empirical success of this approach is demonstrated in \Cref{fig:vit_sharp}, where we observe meaningful correlations between compression and sharpness (computed assuming MSE loss) for 181 pretrained ViT models that were trained with CE loss. For these networks, we compute sharpness using MSE loss applied to the trained network, which allows us to differentiate between solutions even though their original CE training loss was near zero.}

Following \citep{ma_linear_2021,ratzon_representational_2023}, we define $\bld{\theta}^*$ to be an ``exact interpolation solution'' on the zero training loss manifold in the parameter space (the zero loss manifold in what follows), where $f(\vct{x}_i, \bld{\theta}^*)=\vct{y}_i$ for all $i$'s (with $i\in \{1..n\}$ indexing the training set) and $L(\bld{\theta}^*)=0$. On the zero loss manifold, in particular, we have 
\begin{equation} \label{eq:zloss_sharpness}
   S(\bld{\theta}^*)=\frac{1}{n}\sum_{i=1}^{n}\norm{\nabla_{\bld{\theta}}f(\vct{x}_i,\bld{\theta}^*)}_F^2,
\end{equation}
where $\norm{\cdot}_F$ is the Frobenius norm. We state a proof of this equality, which appears in \citet{ma_linear_2021} and \citet{wen2023sharpness}, in \Cref{app:sharpness}.
In practice, the parameter $\bt$ will never reach an exact interpolation solution due to the gradient noise of SGD; however, \Cref{eq:zloss_sharpness} is a good approximation of the sharpness when the training loss is sufficiently small (see error bounds in \Cref{lem:zloss_sharpness}). 

To see why minimizing the sharpness of the solution leads to more compressed representations, we need to move from the parameter space to the input space. To do so we clarify the proof of Equation (4) in \citet{ma_linear_2021} that relates adversarial robustness to sharpness in the following. The improvements we made are summarized at the end of this section. Let $\vct{W}$ be the input weights (the parameters of the first linear layer) of the network, and $\bar{\bld{\theta}}$ be the rest of the parameters. Following \citep{ma_linear_2021}, as the weights $\vct{W}$ multiply the inputs $\vct{x}$, we have the following identities:
\begin{equation}
\begin{split}
\norm{\nabla_{\vct{W}} f(\vct{W}\vct{x}; \bar{\bld{\theta}})}_F &= \sqrt{\sum_{i,j,k} J_{jk}^2 x_i^2} = \norm{J}_F\norm{\vct{x}}_2 \geq \norm{J}_2\norm{\vct{x}}_2\ , \\
\nabla_{\vct{x}} f(\vct{W}\vct{x}; \bar{\bld{\theta}}) &= J\vct{W}\ ,
\label{eq:gradient_conj}
\end{split}
\end{equation}
where $J = \frac{\partial f(\vct{W}\vct{x}; \bar{\bld{\theta}})}{\partial (\vct{W}\vct{x})}$ is a complex expression computed with backpropagation. From \Cref{eq:gradient_conj} and the sub-multiplicative property of the Frobenius norm and the matrix 2-norm \footnote{$\norm{AB}_F\leq \norm{A}_F\norm{B}_2$, $\norm{AB}_2\leq \norm{A}_2\norm{B}_2$}, we have:
\begin{equation} \label{eq:gradient_bound}
\begin{split}
\norm{\nabla_{\vct{x}} f(\vct{W}\vct{x}; \bar{\bld{\theta}})}_2 &\leq \norm{\nabla_{\vct{x}} f(\vct{W}\vct{x}; \bar{\bld{\theta}})}_F \leq \frac{\norm{\vct{W}}_2}{\norm{\vct{x}}_2} \norm{\nabla_\vct{W} f(\vct{W x}; \bar{\bld{\theta}})}_F\ .
\end{split}
\end{equation}
We call \Cref{eq:gradient_bound} the linear stability trick. 
As a result, we have 
\begin{equation}\label{eq:2-norm}
\begin{split}
\frac{1}{n}\sum_{i=1}^{n}\norm{\nabla_{\vct{x}}f(\vct{x}_i,\bld{\theta}^*)}_2^k&\leq \frac{1}{n}\sum_{i=1}^{n}\norm{\nabla_{\vct{x}}f(\vct{x}_i,\bld{\theta}^*)}_F^k \\
& \leq  \frac{\norm{\vct{W}}^k_2}{\textrm{min}_i\norm{\vct{x}_i}^k_2}\frac{1}{n}\sum_{i=1}^{n} \norm{\nabla_{\vct{W}} f(\vct{x}_i, \bld{\theta}^*)}^k_F \\
& \leq  \frac{\norm{\vct{W}}^k_2}{\textrm{min}_i\norm{\vct{x}_i}^k_2} \frac{1}{n}\sum_{i=1}^{n} \norm{\nabla_{\bld{\theta}} f(\vct{x}_i, \bld{\theta}^*)}^k_F.
\end{split}
\end{equation}
This reveals the impact of flatness on the input sensitivity when $k = 2$. \Cref{eq:2-norm} holds for any positive $k$. Thus, the effect of input perturbations is upper-bounded by the sharpness of the loss function (cf. \Cref{eq:zloss_sharpness}). Note that \Cref{eq:2-norm} corresponds to Equation (4) in \citet{ma_linear_2021} with multivariable output.

While the experiments of \citet{ma_linear_2021} \textit{empirically} show a high correlation between the left-hand side of \Cref{eq:2-norm} and the sharpness, \Cref{eq:2-norm} does {not} explain such a correlation by itself because of the scaling factor $\norm{\vct{W}}^k_2/\textrm{min}_i\norm{\vct{x}_i}^k_2$. This factor makes the right-hand side of \Cref{eq:2-norm} highly variable, leading to mixed positive and/or negative correlations with sharpness under different experimental settings. In the next section, we will improve this bound to relate sharpness to various metrics measuring robustness and compression of representations. More specifically, compared to Equation (4) of \citet{ma_linear_2021}, we make the following improvements:
 \begin{enumerate}
     \item We replace the reciprocal of the minimum with the quadratic mean to achieve a more stable bound (\Cref{prop:mls}). This term remains relevant as common practice in deep learning does \textit{not} normalize the input by its 2-norm, as this would erase information about the modulus of the input. 
     \item While \citet{ma_linear_2021} only considers scalar output, we extend the result to consider networks with multivariable output throughout the paper.
     \item We introduce new metrics such as Network Volumetric Ratio and Network MLS (\Cref{def:nvr} and \Cref{def:nmls}) and their sharpness-related bounds (\Cref{prop:nvr} and \Cref{prop:nmls}), which have two advantages compared to prior results (cf. the right-hand side of \Cref{eq:2-norm}): 
     \begin{enumerate}
     \item our metrics consider all linear weights so that bounds remain stable to weight changes during training. 
     \item they avoid the gap between derivative w.r.t. the first layer weights and the derivative w.r.t. all weights, i.e. the second inequality in Eq. 6, thus tightening the bound. 
     \end{enumerate}
 \end{enumerate}

Moreover, we show that the underlying theory readily extends to networks with residual connections in \Cref{app:adapt}.

\section{From robustness to inputs to compression of representations} \label{sec:compression}
Building upon the linear stability bound (\Cref{eq:gradient_bound}) established in \Cref{sec:back}, this section derives our main theoretical results connecting sharpness to compression of neural representations.
\revised{\paragraph{Key assumptions.} Our analysis assumes: (i) the network has reached an approximate interpolation solution $\bld{\theta}^*$ where training loss is near zero, allowing us to use the sharpness characterization from \Cref{eq:zloss_sharpness}; (ii) the network architecture consists of linear layers (including convolutional layers, which are linear operations) with nonlinear activations, though we do not assume specific activation functions for the main bounds (the theory extends to networks with residual/skip connections as detailed in \Cref{app:adapt}); and (iii) all bounds are derived using the quadratic (MSE) loss function. While our theoretical derivations require MSE loss, the compression metrics themselves (LVR, MLS, NMLS) depend only on the network function $f$ and can be computed for networks trained with any loss function, as discussed in \Cref{sec:back}. Throughout, we work in the local regime where first-order Taylor approximations are valid.}
\subsection{Sharpness bounds local volumetric transformation in the feature space} \label{subsec:vol}

\revised{To understand how networks compress their representations, we first study how local volumes in input space transform as they pass through the network. If a network compresses its inputs, small volumes around input points should contract as they are mapped to feature space. We quantify this via the local volumetric ratio, the ratio between the volume of a hypercube of side length $h$ at $\vct{x}$, $H(\vct{x})$, and its image under transformation $f$, $f(H(\vct{x}),\bld{\theta}^*)$. By the change of variables formula from multivariate calculus, the infinitesimal volume transformation is characterized by the Jacobian matrix. Consider the mapping $f:\mathbb{R}^M \to \mathbb{R}^N$ with $N < M$ and Jacobian $J = \nabla_\vct{x}f \in \mathbb{R}^{N \times M}$. The volume scaling factor is:}
\begin{equation}\label{eq:logvol}
\begin{split}
d\vol\rvert_{f(\vct{x},\bld{\theta}^*)}&=\lim_{h\to 0}\frac{\vol(f(H(\vct{x}),\bld{\theta}^*))}{\vol(H(\vct{x}))} \\
&= \sqrt{\det(JJ^T)} \quad \text{(product of singular values of $J$)} \\
&= \sqrt{\det\left(\nabla_\vct{x}f(\nabla_\vct{x}f)^T\right)}\ ,
\end{split}
\end{equation}
\revised{where the equality holds because for a rectangular matrix $J \in \mathbb{R}^{N \times M}$ with $N < M$, the $N$-dimensional volume scaling equals the product of its $N$ singular values $\sigma_1, \ldots, \sigma_N$, which equals $\sqrt{\det(JJ^T)}$ since $JJ^T \in \mathbb{R}^{N \times N}$ is a square matrix with eigenvalues $\sigma_1^2, \ldots, \sigma_N^2$. We formalize this as a definition:}
\begin{defi} \label{def:lvr_input}
The $\textbf{Local Volumetric Ratio at input}$ $\vct{x}$ of a network $f$ with parameters $\btet$ is defined as $d\vol\rvert_{f(\vct{x},\bld{\theta})} = \sqrt{\det\left(\nabla_\vct{x}f(\nabla_\vct{x}f)^T\right)}$.
\end{defi}

Exploiting the bound on the gradients derived earlier in \Cref{eq:gradient_bound}, we derive a similar bound for the volumetric ratio:
\begin{lemma}\label{lem:volume_ineq}
\begin{equation} \label{eq:volume_ineq}
\begin{split}
d\vol\rvert_{f(\vct{x},\bld{\theta}^*)}&\leq \left(\f{\tr\nabla_\vct{x}f(\nabla_\vct{x}f)^T}{N} \right)^{N/2} = N^{-N/2}\norm{\nabla_{\vct{x}}f(\vct{x},\bld{\theta}^*)}_F^N\ ,
\end{split}
\end{equation}
\end{lemma}
\begin{proof}
The inequality follows from the arithmetic-geometric mean inequality applied to the squared singular values of $\nabla_\vct{x}f$. The equality uses the trace identity $\tr(\nabla_\vct{x}f(\nabla_\vct{x}f)^T) = \norm{\nabla_{\vct{x}}f}_F^2$. See \Cref{app:volume} for the complete derivation.
\end{proof}
Next, we introduce a measure of the volumetric ratio averaged across input samples.

\begin{defi}\label{def:lvr}
The $\textbf{Local Volumetric Ratio (LVR)}$ of a network $f$ with parameters $\btet$ is defined as the sample mean of Local Volumetric Ratio at different input samples: $dV_{f(\btet)} = \frac{1}{n}\sum_{i=1}^n d\vol\rvert_{f(\vct{x}_i,\bld{\theta})}$.
\end{defi}

\revised{We now establish our first main result: an upper bound on LVR in terms of sharpness. This bound will show that flatter minima (lower sharpness) necessarily constrain the volumetric ratio, establishing a mathematical link between parameter-space flatness and feature-space compression.}
\begin{prop} 
The local volumetric ratio is upper bounded by a sharpness related quantity:
\begin{equation}\label{eq:main}
\begin{split}
dV_{f(\bld{\theta}^*)}&\leq \frac{N^{-N/2}}{n}\sum_{i = 1}^n\norm{\nabla_{\vct{x}}f(\vct{x}_i,\bld{\theta}^*)}_F^N \leq \f{1}{n} \sqrt{\sum_{i = 1}^n \f{\norm{\vct{W}}^{2N}_2}{\norm{\m{x}_i}_2^{2N}}}\left(\frac{nS(\bld{\theta}^*)}{N}\right)^{N/2}\
\end{split}
\end{equation}
for all $N\geq 1$.
\label{prop:vol}
\end{prop}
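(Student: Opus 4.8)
The first inequality is immediate and I would dispose of it first: it is just the sample average of the per-input bound in \Cref{eq:volume_ineq}. Since $dV_{f(\bld{\theta}^*)}=\frac1n\sum_{i=1}^n d\vol\rvert_{f(\vct{x}_i,\bld{\theta}^*)}$ and each term satisfies $d\vol\rvert_{f(\vct{x}_i,\bld{\theta}^*)}\le N^{-N/2}\norm{\nabla_{\vct{x}}f(\vct{x}_i,\bld{\theta}^*)}_F^N$, summing and dividing by $n$ reproduces the first line of \Cref{eq:main}. So the real content is the second inequality, and the plan there is to combine three ingredients: the per-sample linear stability trick, Cauchy--Schwarz, and monotonicity of $\ell^p$ norms.

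First I would apply \Cref{eq:gradient_bound} to each input \emph{separately}, crucially retaining $\norm{\vct{x}_i}_2$ in the denominator rather than replacing it by $\min_i\norm{\vct{x}_i}_2$ as in \Cref{eq:2-norm}; together with $\norm{\nabla_{\vct{W}}f}_F\le\norm{\nabla_{\bld{\theta}}f}_F$ (since $\vct{W}$ is a subset of the parameters), this gives $\norm{\nabla_{\vct{x}}f(\vct{x}_i,\bld{\theta}^*)}_F^N\le \frac{\norm{\vct{W}}_2^N}{\norm{\vct{x}_i}_2^N}\norm{\nabla_{\bld{\theta}}f(\vct{x}_i,\bld{\theta}^*)}_F^N$ for every $i$. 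Then I would sum over $i$ and split the summand by Cauchy--Schwarz, setting $a_i=\norm{\vct{W}}_2^N/\norm{\vct{x}_i}_2^N$ and $b_i=\norm{\nabla_{\bld{\theta}}f(\vct{x}_i,\bld{\theta}^*)}_F^N$, so that $\sum_i a_ib_i\le\left(\sum_i a_i^2\right)^{1/2}\left(\sum_i b_i^2\right)^{1/2}$. The first factor is exactly the square-root term $\sqrt{\sum_i \norm{\vct{W}}_2^{2N}/\norm{\vct{x}_i}_2^{2N}}$ on the right of \Cref{eq:main}.

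For the second factor, writing $c_i=\norm{\nabla_{\bld{\theta}}f(\vct{x}_i,\bld{\theta}^*)}_F^2\ge0$ gives $\sum_i b_i^2=\sum_i c_i^N$, and I would invoke $\sum_i c_i^N\le\left(\sum_i c_i\right)^N$, valid for nonnegative $c_i$ and $N\ge1$. Finally $\sum_i c_i=\sum_i\norm{\nabla_{\bld{\theta}}f(\vct{x}_i,\bld{\theta}^*)}_F^2=nS(\bld{\theta}^*)$ by \Cref{eq:zloss_sharpness}, and collecting the leading constant via $N^{-N/2}(nS(\bld{\theta}^*))^{N/2}=\left(nS(\bld{\theta}^*)/N\right)^{N/2}$ reproduces the right-hand side exactly.

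The only genuinely delicate step is the $\ell^p$-monotonicity bound $\sum_i c_i^N\le\left(\sum_i c_i\right)^N$: this is precisely where the hypothesis $N\ge1$ is essential (the inequality reverses for $N<1$), and it is what lets the sum of $2N$-th powers produced by Cauchy--Schwarz be re-expressed in terms of the sum of \emph{squares} that defines sharpness. Everything else is a matter of matching exponents and bookkeeping the constants, so I expect no further obstacles.
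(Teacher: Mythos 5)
Your proposal is correct and follows essentially the same route as the paper's proof in \Cref{app:volume}: per-sample application of the linear stability trick (\Cref{eq:gradient_bound}) retaining $\norm{\vct{x}_i}_2$, the inclusion $\norm{\nabla_{\vct{W}}f_i}_F\le\norm{\nabla_{\bld{\theta}}f_i}_F$, Cauchy--Schwarz with exactly the same split, and then the superadditivity bound $\sum_i c_i^N\le\left(\sum_i c_i\right)^N$, which is precisely the paper's Lemma \ref{lem:norm_mono} (with $p=1$, $q=N$) combined with \Cref{eq:zloss_sharpness}. The only cosmetic differences are that you apply the parameter-inclusion step before rather than after Cauchy--Schwarz and cite $\ell^p$-monotonicity as a known fact where the paper proves it as a standalone lemma.
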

The proof of the above inequalities is given in \Cref{app:volume}. 

\revised{The bound in \Cref{prop:vol} only considers sensitivity at the input layer. However, we can strengthen our analysis by examining compression at all intermediate layers, which allows us to incorporate all network weights rather than just the first-layer weights. This yields tighter bounds that are more stable during training.} We denote the input to the \(l\)-th linear layer as \(\m{x}_i^l\) for \(l = 1, 2,\cdots, L\), where $\m{x}_i^1 = \m{x}_i$ is the network input. Similarly, \(\m{W}_l\) is the weight matrix of the \(l\)-th linear/convolutional layer. With a slight abuse of notation, we use \(f_l\) to denote the mapping from the input of the \(l\)-th layer to the final output.
\begin{defi} \label{def:nvr}
    The \textbf{Network Volumetric Ratio (NVR)} is defined as the sum of the local volumetric ratios $dV_{f_l}$ for all $f_l$, that is, $dV_{net} = \sum_{l = 1}^L dV_{f_l}$
\end{defi}

By applying \Cref{eq:main} to every intermediate layer, we obtain a network-wide bound:
\begin{prop} \label{prop:nvr}
The network volumetric ratio is upper bounded by a sharpness related quantity:
\begin{equation} \label{eq:nvol_bound}
\begin{split}
     \sum_{l = 1}^L dV_{f_l}&\leq \frac{N^{-N/2}}{n}\sum_{l = 1}^L\sum_{i=1}^{n}\norm{\nabla_{\vct{x}^l}f_i^l}_F^N \leq  \f{1}{n}\sqrt{\sum_{l = 1}^L\sum_{i = 1}^n \f{\norm{\vct{W}_l}_2^{2N}}{\norm{\m{x}_i^l}_2^{2N}}} \cdot  \left(\frac{nS(\bld{\theta}^*)}{N}\right)^{N/2}.
\end{split}
\end{equation}
\end{prop}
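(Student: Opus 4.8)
The plan is to reduce \Cref{prop:nvr} to a layerwise application of the single-layer argument behind \Cref{prop:vol}, taking care to aggregate the per-layer gradient contributions so that no spurious factor of $L$ survives.

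First I would dispatch the first inequality. For each fixed $l$, the sub-network $f_l$ maps the layer input $\m{x}^l$ to the final output, so the volumetric lemma \Cref{eq:volume_ineq} applies verbatim with $f$ replaced by $f_l$ and $\vct{x}$ by $\m{x}_i^l$, giving $d\vol\rvert_{f_l(\m{x}_i^l,\bld{\theta}^*)} \leq N^{-N/2}\norm{\nabla_{\vct{x}^l}f_i^l}_F^N$. Averaging over the $n$ samples gives $dV_{f_l} \leq \frac{N^{-N/2}}{n}\sum_{i=1}^n \norm{\nabla_{\vct{x}^l}f_i^l}_F^N$, and summing over $l$ yields the first inequality immediately.

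For the second inequality I would apply the linear stability trick \Cref{eq:gradient_bound} at each layer, treating $\vct{W}_l$ as the ``first-layer weights'' of $f_l$ and $\m{x}_i^l$ as its input, to obtain $\norm{\nabla_{\vct{x}^l}f_i^l}_F \leq \frac{\norm{\vct{W}_l}_2}{\norm{\m{x}_i^l}_2}\norm{\nabla_{\vct{W}_l}f_i}_F$, where I use that $\nabla_{\vct{W}_l}f_l = \nabla_{\vct{W}_l}f$ because the layers before $l$ do not depend on $\vct{W}_l$. Writing $g_{l,i} = \norm{\nabla_{\vct{W}_l}f_i}_F$ and raising to the $N$-th power, the right-hand side of the first inequality is bounded by $\frac{N^{-N/2}}{n}\sum_{l,i}\frac{\norm{\vct{W}_l}_2^N}{\norm{\m{x}_i^l}_2^N}g_{l,i}^N$. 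A Cauchy--Schwarz step over the joint index $(l,i)$ then separates the geometric factor from the gradient factor:
\[
\sum_{l,i}\frac{\norm{\vct{W}_l}_2^N}{\norm{\m{x}_i^l}_2^N}g_{l,i}^N \leq \sqrt{\sum_{l,i}\frac{\norm{\vct{W}_l}_2^{2N}}{\norm{\m{x}_i^l}_2^{2N}}}\,\sqrt{\sum_{l,i}g_{l,i}^{2N}}.
\]

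The crux, and the step I expect to be the main obstacle, is controlling $\sum_{l,i}g_{l,i}^{2N}$ by $\left(nS(\bld{\theta}^*)\right)^N$ without incurring a factor of $L$. The naive bound $g_{l,i} \leq \norm{\nabla_{\bld{\theta}}f_i}_F$ applied to each of the $L$ layers would cost an extra $\sqrt{L}$. Instead I would use that the parameters split into disjoint blocks $\vct{W}_1,\dots,\vct{W}_L$ (together with biases), so $\sum_{l=1}^L g_{l,i}^2 \leq \norm{\nabla_{\bld{\theta}}f_i}_F^2$ for every $i$, and then apply the elementary superadditivity inequality $\sum_j a_j^N \leq \left(\sum_j a_j\right)^N$ (for $a_j \geq 0$, $N \geq 1$) twice in succession: once over $l$, giving $\sum_l g_{l,i}^{2N} \leq \norm{\nabla_{\bld{\theta}}f_i}_F^{2N}$, and once over $i$, giving $\sum_i \norm{\nabla_{\bld{\theta}}f_i}_F^{2N} \leq \left(\sum_i \norm{\nabla_{\bld{\theta}}f_i}_F^2\right)^N = \left(nS(\bld{\theta}^*)\right)^N$, where the final equality is \Cref{eq:zloss_sharpness}. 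Thus $\sqrt{\sum_{l,i}g_{l,i}^{2N}} \leq \left(nS(\bld{\theta}^*)\right)^{N/2}$; substituting this back and folding $N^{-N/2}$ into $\left(nS(\bld{\theta}^*)/N\right)^{N/2}$ produces exactly the claimed bound. This double use of superadditivity is precisely what avoids the looseness of bounding each layer's gradient by the full parameter gradient separately, realizing the tightening discussed earlier in the section.
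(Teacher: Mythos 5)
Your proposal is correct and follows essentially the same route as the paper's proof: layerwise application of the AM--GM volumetric bound for the first inequality, then the linear stability trick per layer, Cauchy--Schwarz to separate the weight/input factor from the gradient factor, and the superadditivity inequality (the paper's Lemma on $p$-norm monotonicity) applied over layers and then over samples, combined with the zero-loss sharpness identity. The only cosmetic differences are that you apply Cauchy--Schwarz once over the joint index $(l,i)$ where the paper nests it (first over $i$, then over $l$) --- these yield the identical bound --- and that you conservatively write $\sum_l \norm{\nabla_{\vct{W}_l} f_i}_F^2 \leq \norm{\nabla_{\bld{\theta}} f_i}_F^2$ where the paper asserts equality.
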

Again, a detailed derivation of the above inequalities is given in \Cref{app:volume}. \Cref{prop:vol} and \Cref{prop:nvr} imply that flatter minima of the loss function in parameter space contribute to local compression of the data's representation manifold. 

\subsection{Maximum Local Sensitivity as an allied metric to track neural representation geometry} \label{subsec:mls}
We observe that the equality condition in the first line of \Cref{eq:volume_ineq} rarely holds in practice. To achieve equality, we would need all singular values of the Jacobian matrix $\nabla_\vct{x}f$ to be identical. However, our experiments in \Cref{sec:exp} show that the local dimensionality decreases rapidly with training onset; this implies that $\nabla_\vct{x}f^T\nabla_\vct{x}f$ has a non-uniform eigenspectrum (i.e., some directions being particularly elongated, corresponding to a lower overall dimension). Moreover, the volume will decrease rapidly as the smallest eigenvalue vanishes. Thus, although sharpness upper bounds the volumetric ratio and often correlates reasonably with it (see \Cref{subsec:corr}), the correlation is far from perfect. 

\revised{Fortunately, considering only the maximum eigenvalue instead of the product of all eigenvalues alleviates this discrepancy (recall that $\det\left(\nabla_\vct{x}f^T\nabla_\vct{x}f\right)$ in \Cref{def:lvr_input} is the product of all eigenvalues). This motivates an alternative compression metric based on the largest singular value.}
\begin{defi} \label{def:mls}
The \textbf{Maximum Local Sensitivity (MLS)} of network $f$ is defined to be $\mls_f = \f{1}{n}\sum_{i = 1}^n\Norm{\nabla_{\m{x}}f(\m{x}_i)}_2$, which is the sample mean of the largest singular value of $\nabla_{\m{x}} f$.
\end{defi}
Intuitively, MLS is the largest possible average local change of $f(\m{x})$ when the norm of the perturbation to $\m{x}$ is regularized. MLS is also referred to as \textit{adversarial robustness} or \textit{Lipschitz constant} of the model function in \citet{ma_linear_2021}. \revised{We now derive an analogous bound for MLS that mirrors \Cref{prop:vol} but avoids the product-of-eigenvalues issue.} 
\begin{prop} \label{prop:mls}
    The maximum local sensitivity is upper bounded by a sharpness-related quantity:
\begin{equation}\label{eq:mls}
\begin{split}
    \mls_f&= \f{1}{n}\sum_{i = 1}^n \norm{\nabla_\m{x}f(\m{x}_i, \bt^*)}_2 \leq \norm{\vct{W}}_2\sqrt{\f{1}{n}\sum_{i = 1}^n \f{1}{\norm{\m{x}_i}_2^2}} S(\bld{\theta}^*)^{1/2}\ .
\end{split}
\end{equation}
\end{prop}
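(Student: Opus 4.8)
The plan is to chain the linear stability trick of \Cref{eq:gradient_bound}, the sharpness identity \Cref{eq:zloss_sharpness}, and a single application of the Cauchy--Schwarz inequality. First I would apply \Cref{eq:gradient_bound} at each training input $\m{x}_i$ to bound the operator norm of the input Jacobian by the first-layer weight gradient, and then pass from the first-layer weights $\vct{W}$ to the full parameter vector $\bld{\theta}$ using $\norm{\nabla_{\vct{W}} f(\m{x}_i,\bt^*)}_F \le \norm{\nabla_{\bld{\theta}} f(\m{x}_i,\bt^*)}_F$ (since $\vct{W}$ is a subvector of $\bld{\theta}$, exactly the step used to pass from the second to the third line of \Cref{eq:2-norm}). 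This yields, for each $i$,
\begin{equation}
\norm{\nabla_\m{x} f(\m{x}_i,\bt^*)}_2 \le \frac{\norm{\vct{W}}_2}{\norm{\m{x}_i}_2}\,\norm{\nabla_{\bld{\theta}} f(\m{x}_i,\bt^*)}_F .
\end{equation}

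Next I would average over $i$ and factor out $\norm{\vct{W}}_2$, leaving the sample mean of $\norm{\nabla_{\bld{\theta}} f(\m{x}_i,\bt^*)}_F / \norm{\m{x}_i}_2$. The crucial step is to decouple the two $i$-dependent factors $1/\norm{\m{x}_i}_2$ and $\norm{\nabla_{\bld{\theta}} f(\m{x}_i,\bt^*)}_F$ by Cauchy--Schwarz:
\begin{equation}
\sum_{i=1}^n \frac{\norm{\nabla_{\bld{\theta}} f(\m{x}_i,\bt^*)}_F}{\norm{\m{x}_i}_2} \le \sqrt{\sum_{i=1}^n \frac{1}{\norm{\m{x}_i}_2^2}}\;\sqrt{\sum_{i=1}^n \norm{\nabla_{\bld{\theta}} f(\m{x}_i,\bt^*)}_F^2}.
\end{equation}
I would then identify the second factor as $\sqrt{n\,S(\bld{\theta}^*)}$ via \Cref{eq:zloss_sharpness}, and absorb the leftover $1/n$ and $\sqrt{n}$ into the quadratic-mean normalization $\sqrt{\tfrac{1}{n}\sum_i 1/\norm{\m{x}_i}_2^2}$, which reproduces the claimed bound. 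Note the square root on $S(\bld{\theta}^*)$ arises automatically here because MLS involves a single (not $N$-th) power of the Jacobian norm, so Cauchy--Schwarz against the Frobenius-squared-to-sharpness identity leaves exactly one half-power.

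I do not expect a genuine obstacle, since every inequality is a standard norm or sub-multiplicativity bound and \Cref{eq:zloss_sharpness} is assumed; the only point requiring care is \emph{how} the sum is split. Applying Cauchy--Schwarz to the product, rather than pulling out $1/\norm{\m{x}_i}_2$ by its worst-case value $1/\min_i\norm{\m{x}_i}_2$, is precisely what replaces the unstable factor of \citet{ma_linear_2021} with the more stable quadratic mean of the input-norm reciprocals, and this is the improvement advertised in item~1 of the list in \Cref{sec:back}. I would therefore be careful to keep the input-norm factor inside the Cauchy--Schwarz split and to verify that the operator norm $\norm{\cdot}_2$ (as opposed to the Frobenius norm used for the volumetric bounds) is what keeps the final dependence on sharpness at the single power $S(\bld{\theta}^*)^{1/2}$.
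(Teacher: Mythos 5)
Your proposal is correct and follows essentially the same route as the paper's proof in \Cref{app:mls}: the linear stability trick of \Cref{eq:gradient_bound}, Cauchy--Schwarz to decouple $1/\norm{\m{x}_i}_2$ from the parameter-gradient norm, and the sharpness identity \Cref{eq:zloss_sharpness}. The only (immaterial) difference is that you pass from $\nabla_{\vct{W}}$ to $\nabla_{\bld{\theta}}$ before applying Cauchy--Schwarz, whereas the paper does so afterward; the two steps commute and yield the identical bound.
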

The derivation of the above bound is included in \Cref{app:mls}. As an alternative measure of compressed representations, we empirically show in \Cref{subsec:corr} that MLS has a higher correlation with sharpness than the local volumetric ratio. We include more analysis of the tightness of this bound in \Cref{sec:bound} and discuss its connection to other works therein.

\revised{As with volumetric ratio, considering only the first layer can lead to unstable bounds dominated by prefactor variations. We therefore introduce a network-wide version that examines sensitivity at all intermediate layers, yielding bounds that more consistently predict positive correlation with sharpness across experimental settings.}
\begin{defi}\label{def:nmls}
    The \textbf{Network Maximum Local Sensitivity (NMLS)} of network $f$ is defined as the sum of $\mls_{f_l}$ for all l, i.e. $\sum_{l = 1}^L  \mls_{f_l}$.
\end{defi}
Recall that \(\m{x}_i^l\) is the input to the \(l\)-th linear/convolutional layer for sample $\m{x}_i$ and \(f_l\) is the mapping from the input of \(l\)-th layer to the final output. Extending the analysis from \Cref{prop:mls} to all layers yields:
\begin{prop} \label{prop:nmls}
The network maximum local sensitivity is upper bounded by a sharpness related quantity:
\begin{equation} \label{eq:nmls}
\begin{split}
    \nmls &= \f{1}{n}\sum_{l = 1}^L\sum_{i = 1}^n \norm{\nabla_{\m{x}^l}f^l(\m{x}_i^l, \bt^*)}_2 \leq \sqrt{\f{1}{n}\sum_{i = 1}^n \sum_{l = 1}^L\f{\norm{\vct{W}_l}_2^2}{\norm{\m{x}_i^l}^2}} \cdot S(\bld{\tet}^*)^{1/2}.
\end{split}
\end{equation}
\end{prop}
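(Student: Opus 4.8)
The plan is to replicate the logic of \Cref{prop:mls}, but to apply the linear stability trick once at \emph{each} linear layer and then exploit the additivity of squared parameter-gradient norms across layers, which is exactly what prevents the bound from being loosened by a factor of $\sqrt{L}$.

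First I would fix a layer index $l$ and a sample $i$ and apply the linear stability trick \Cref{eq:gradient_bound} with the substitutions $\vct{W}\mapsto \vct{W}_l$, $\m{x}\mapsto \m{x}_i^l$, and $f\mapsto f^l$ (legitimate because $\m{x}_i^l$ enters layer $l$ only through the product $\vct{W}_l\m{x}_i^l$, and $\nabla_{\vct{W}_l} f^l(\m{x}_i^l,\bt^*)=\nabla_{\vct{W}_l} f(\m{x}_i,\bt^*)$). This gives, for every pair $(l,i)$,
\begin{equation*}
\norm{\nabla_{\m{x}^l} f^l(\m{x}_i^l,\bt^*)}_2 \;\leq\; \frac{\norm{\vct{W}_l}_2}{\norm{\m{x}_i^l}_2}\,\norm{\nabla_{\vct{W}_l} f(\m{x}_i,\bt^*)}_F .
\end{equation*}
Writing $a_{l,i}=\norm{\vct{W}_l}_2/\norm{\m{x}_i^l}_2$ and $b_{l,i}=\norm{\nabla_{\vct{W}_l} f(\m{x}_i,\bt^*)}_F$, summing over both indices and dividing by $n$ bounds $\nmls$ by $\tfrac{1}{n}\sum_{l,i} a_{l,i}b_{l,i}$.

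Next I would apply the Cauchy--Schwarz inequality treating the pair $(l,i)$ as a single summation index, yielding $\tfrac{1}{n}\sum_{l,i} a_{l,i}b_{l,i}\le \tfrac{1}{n}\sqrt{\sum_{l,i}a_{l,i}^2}\,\sqrt{\sum_{l,i}b_{l,i}^2}$. The conceptual crux is evaluating $\sum_{l,i}b_{l,i}^2$. Since the weight matrices $\vct{W}_1,\dots,\vct{W}_L$ form a sub-collection of the full parameter vector $\bt$, the squared Frobenius norm of the full parameter Jacobian decomposes blockwise, $\sum_{l=1}^L \norm{\nabla_{\vct{W}_l} f(\m{x}_i,\bt^*)}_F^2 \le \norm{\nabla_{\bt} f(\m{x}_i,\bt^*)}_F^2$ (with equality when $\bt$ contains no parameters beyond the weights). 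Summing over $i$ and invoking the zero-loss sharpness identity \Cref{eq:zloss_sharpness} gives $\sum_{l,i}b_{l,i}^2 \le \sum_i \norm{\nabla_{\bt} f(\m{x}_i,\bt^*)}_F^2 = n\,S(\bt^*)$. Substituting this and recognizing $\tfrac{1}{n}\sum_{l,i}a_{l,i}^2=\tfrac{1}{n}\sum_{i=1}^n\sum_{l=1}^L \norm{\vct{W}_l}_2^2/\norm{\m{x}_i^l}_2^2$ reproduces the claimed inequality exactly.

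The main obstacle is precisely this decomposition step and the discipline of applying Cauchy--Schwarz over the joint index $(l,i)$ with the per-layer blocks $\nabla_{\vct{W}_l} f$ kept separate. Had one instead bounded each $b_{l,i}$ by the full gradient $\norm{\nabla_{\bt} f(\m{x}_i,\bt^*)}_F$ — the step the single-layer argument of \Cref{eq:2-norm} takes for the first layer — the sum over $l$ would contribute a spurious factor of $L$ inside the root, weakening the bound by $\sqrt{L}$. Summing the squared block norms and collapsing them back to the sharpness via the Pythagorean identity is what removes this penalty, and is the tightening referred to as improvement 3(b).
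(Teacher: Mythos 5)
Your proof is correct and is essentially the paper's own argument: apply the linear stability trick layer by layer, use Cauchy--Schwarz, collapse the per-layer blocks via $\sum_{l=1}^L \norm{\nabla_{\vct{W}_l} f(\vct{x}_i,\bld{\theta}^*)}_F^2 \leq \norm{\nabla_{\bld{\theta}} f(\vct{x}_i,\bld{\theta}^*)}_F^2$, and invoke the zero-loss sharpness identity \Cref{eq:zloss_sharpness}. The only cosmetic difference is that you apply Cauchy--Schwarz once over the joint index $(l,i)$, whereas the paper applies it twice (over $i$ within each layer, then over $l$); both routes produce the identical final bound, and your closing remark about avoiding the spurious $\sqrt{L}$ factor matches the paper's stated motivation for NMLS.
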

The derivation is in \Cref{app:mls}. The advantage of NMLS is that instead of only considering the robustness of the final output w.r.t. the input, NMLS considers the robustness of the output w.r.t. all hidden-layer representations. This allows us to derive a bound that not only considers the weights in the first linear layer but also all other linear weights. We observe in \Cref{subsec:main_corr} that while MLS could be negatively correlated with the right-hand side of \Cref{eq:mls}, NMLS has a positive correlation with right-hand side of \Cref{eq:nmls} consistently. We do observe that MLS is still positively correlated with sharpness (\Cref{fig:fashion_corr_FNN}). Therefore, the only possible reason for this negative correlation is the factor before sharpness in the MLS bound involving the first-layer weights and the quadratic mean (see \Cref{eq:mls}).

\revised{\paragraph{Why LVR and MLS quantify compression.} Having introduced our main compression metrics, we clarify why these quantities measure compression of neural representations. In our framework, \textit{compression} refers to the concentration or contraction of representations when inputs are locally perturbed. LVR quantifies this via volume contraction: if a small volume in input space is mapped to a smaller volume in representation space, the network has compressed that region of its representation manifold. Mathematically, LVR $< 1$ indicates volume contraction (compression), LVR $= 1$ indicates volume preservation, and LVR $> 1$ indicates volume expansion. Similarly, MLS measures compression through sensitivity: lower MLS means the network output changes less in response to input perturbations, indicating that nearby inputs are mapped to nearby (compressed) outputs. Networks with high compression have both low LVR (contracted volumes) and low MLS (reduced sensitivity).}

\revised{Importantly, our theoretical bounds do not \textit{guarantee} that networks will compress (i.e., achieve LVR $< 1$ or small MLS). The bounds only establish that flatter minima constrain these metrics to be smaller. However, our extensive empirical results in \Cref{sec:exp} demonstrate that trained networks do indeed exhibit compression: we observe decreasing LVR and MLS values during training, particularly in the late stages when networks find flatter minima. The correlation between sharpness and these compression metrics validates that flatter minima are associated with more compressed representations in practice, even though the bounds themselves are upper bounds rather than guarantees of compression.}

\revised{\paragraph{On the role of weight norms in the bounds.} A critical aspect of our bounds is their dependence on both sharpness and weight-related prefactors (e.g., $\|\vct{W}\|_2 / \|\vct{x}\|_2$ ratios). This raises a natural question: does lower sharpness \textit{always} imply more compression? Theoretically, this is not guaranteed. Consider two networks at different interpolating solutions: if one has lower sharpness but substantially larger weight norms, the bound could be looser despite the decrease in sharpness. In principle, a decrease in sharpness might not fully counteract increases in the prefactor terms. We provide detailed empirical analysis of where tightness is lost in the chain of inequalities comprising our bounds in \Cref{sec:bound}.}

\revised{However, despite this ambiguity in theory, the bounds may still apply usefully in practice. One important setting where this is the case is for a \textit{fixed network architecture during training}, when the prefactors remain relatively stable as the network evolves toward flatter minima. Then weight norms (and input statistics) do not fluctuate dramatically within a training trajectory, so the sharpness term dominates the variation in the bound. This explains the strong positive correlations we observe between sharpness and compression metrics during training (e.g., \Cref{fig:cifar_batch20}, \Cref{fig:cifar_lr0.1}). In contrast, when \textit{comparing across different network configurations}—such as in our analysis of 181 pretrained ViT models with varying architectures, datasets, and training procedures—the prefactors can vary substantially. This is precisely why we employ adaptive sharpness (which normalizes by weight norms) and normalized MLS in \Cref{fig:vit_sharp}: these normalized metrics account for the prefactor variations and reveal that the sharpness-compression relationship holds even across heterogeneous model families. The empirical robustness of this correlation, both within training trajectories and across diverse architectures when properly normalized, demonstrates that the bounds capture meaningful relationships even despite their theoretical limitations.}

\subsection{Local dimensionality is tied to, but not bounded by, sharpness} \label{subsec:dim}
\revised{While LVR and MLS capture compression via volume contraction and maximum sensitivity, they do not capture the \textit{uniformity} of compression across different directions. A network might compress representations uniformly (reducing all directions proportionally) or anisotropically (compressing certain directions more than others). This distinction is captured by local dimensionality, which measures whether the eigenvalues of the Jacobian are uniform or concentrated in a few directions. As we will show, unlike LVR and MLS, local dimensionality cannot be directly bounded by sharpness alone, as it depends on eigenvalue ratios rather than their magnitudes.}

Consider an input data point $\bar{\vct{x}}$ drawn from the training set: $\bar{\vct{x}}=\vct{x}_i$ for a specific $i \in \{1,\cdots, n\}$. Let the set of all possible perturbations around $\bar{\vct{x}}$ in the input space be samples from an isotropic normal distribution, $\mathcal{B}(\bar{\vct{x}})_\alp\sim \mathcal{N}(\bar{\m{x}}, \alp \gI)$, where $C_{\mathcal{B}(\bar{\vct{x}})_\alp} = \alpha \mathcal{I}$, with $\mathcal{I}$ as the identity matrix, is the covariance matrix. We first propagate $\mathcal{B}(\bar{\vct{x}})_\alp$ through the network transforming each point $\vct{x}$ into its corresponding image $f(\vct{x})$. Following a Taylor expansion for points within $\mathcal{B}(\bar{\vct{x}})_\alp$ in the limit $\alp \to 0$, we have:
\begin{equation}
f(\vct{x}) = f(\bar{\vct{x}}) + \nabla_\vct{x}f(\bar{\vct{x}},\bld{\theta}^*)^T(\vct{x}-\bar{\vct{x}}) + O(\Norm{\m{x} - \bar{\m{x}}}^2)\ .
\end{equation}
We can express the limit of the covariance matrix $C_{f(\mathcal{B(\vct{x})})}$ of the output $f(\vct{x})$ as 
\begin{equation} \label{eq:def_cflim}
 C_f^{\lim} := \lim_{\alpha\to 0}\f{C_{f(\mathcal{B(\vct{x})}_\alp)}}{\alp} =  \nabla_\vct{x}f(\bar{\vct{x}},\bld{\theta}^*)\nabla^T_\vct{x}f(\bar{\vct{x}},\bld{\theta}^*)\ .
\end{equation}
Our covariance expressions capture the distribution of the samples in $\mathcal{B}(\bar{\vct{x}})_\alp$ as they go through the network $f(\bar{\vct{x}},\bld{\theta}^*)$.
The local Participation Ratio based on this covariance is given by:
\begin{equation} \label{eq:dim}
D_{\textrm{PR}}(f(\bar{\vct{x}})) = \underset{\alpha \to 0}{\textrm{lim}}\frac{\textrm{Tr}[C_{f(\mathcal{B(\vct{x})_{\alp}})}]^2}{\textrm{Tr}[(C_{f(\mathcal{B(\vct{x})}_{\alp})})^2]} = \frac{\textrm{Tr}[C_{f}^{\lim}]^2}{\textrm{Tr}[(C_{f}^{\lim})^2]}\ 
\end{equation}
(\citealt{recanatesi_2022_scale}, cf. nonlocal measures in \citealt{gao2017theory,litwin2017optimal,mazzucato_stimuli_2016}).
\begin{defi}
    The \textbf{Local Dimensionality} of a network $f$ is defined as the sample mean of local participation ratio at different input samples: $D(f) = \frac{1}{n}\sum_{i = 1}^n D_{\textrm{PR}}(f(\vct{x}_i))$
\end{defi}
This quantity in some sense represents the sparseness of the eigenvalues of $\cflim$: If we let $\bld{\lambda}$ be all the eigenvalues of $C_f^{\lim}$, then the local dimensionality can be written as $D_{\textrm{PR}} = (\norm{\bld{\lambda}}_1/\norm{\bld{\lambda}}_2)^2$, which attains its maximum value when all eigenvalues are equal to each other, and its minimum when all eigenvalues except for the leading one are zero. Note that the quantity retains the same value when $\bld{\lambda}$ is arbitrarily scaled. As a consequence, it is hard to find a relationship between the local dimensionality and the fundamental quantity on which our bounds are based:  $\norm{\nabla_{\vct{x}}f(\vct{x},\bld{\theta}^*)}_F^2$, which is $\norm{\bld{\lam}}_1$.

\subsection{Relation to reparametrization-invariant sharpness}
\citet{dinh2017sharp} argues that a robust sharpness metric should have the reparametrization-invariant property, meaning that scaling the neighboring linear layer weights should not change the metric. While the bounds in \Cref{prop:mls} and \Cref{prop:nmls} are not strictly reparametrization-invariant, sharpness metrics that are redesigned \citep{tsuzuku2019normalized} to achieve invariance can be proved to tighten our bounds (see \Cref{app:tsuzuku}). Another more aggressive reparametrization-invariant sharpness is proposed in \citet{andriushchenko2023modern, kwon2021asam}, and we again show that it upper-bounds input-invariant MLS in \Cref{app:elementwise}. We also empirically evaluate the relative flatness \citep{petzka2021relative}, which is also reparametrization-invariant in \Cref{subsec:corr}, but no significant correlation is observed. Overall, we provide a novel perspective: reparametrization-invariant sharpness is characterized by the robustness of outputs to internal neural representations. 

\subsection{Connection to neural collapse and compressed neural representations}
The neural collapse phenomenon \citep{papyan2020prevalence, zhu_geometric_2021} indicates that the within-class variance of the features in the penultimate layer vanishes at the terminal phase of training; allied studies have also found compressed neural representations at this and other points internal to trained neural networks \citealt{farrell_gradient-based_2022} (see also \citealt{farrell2023lazy,shwartz2017opening}).  We next show that the present sharpness-based approach can describe related properties at penultimate and intermediate network layers.  \revised{While neural collapse is often described as a \textit{global} phenomenon (within-class variance across all samples of a class), our results are \textit{local}, quantifying robustness of layer responses for nearby points in the input space. Nevertheless, a connection emerges when viewing within-class samples as local perturbations around their class mean, a perspective we develop below.} 

We first apply our method to study the penultimate-layer features. To accomplish this we can adapt the linear stability trick in \Cref{eq:gradient_bound} to establish a relationship between their robustness and sharpness. More concretely, we can show that

\begin{equation}\label{eq:nc}
    \norm{\nabla_{\vct{x}} g(\vct{W}\vct{x}; \bar{\bld{\theta}})}_F \leq \frac{\norm{\vct{W}}_2}{ \sigma_{\min} (\vct{W}_L)\norm{\vct{x}}_2} \norm{\nabla_\vct{W} f(\vct{W x}; \bar{\bld{\theta}})}_F\ .
\end{equation}
Here again, $\vct{W}$ denotes the first-layer weights, $\vct{W}_L$ denotes the last-layer linear classifier weights, and $g(x)$ is the penultimate-layer feature. $\sigma_{\min} (\vct{W}_L)$ is defined as the square root of the smallest eigenvalue of $\vW_L^T \vW_L$. The proof is given in \Cref{app:penlayer_trick}.

To extend this analysis to representations before the penultimate layer, note that inequality (\ref{eq:nc}) extends to any middle-layer representation, as detailed in \Cref{app:penlayer_trick}, so that very similar conclusions apply directly. 

Let the feature dimension be $d$ and the number of classes be $K$. Then, $\vW_L \in \R^{K\times d}$, and $\sigma_{\min} (\vW_L)= 0$ if $d>K$; otherwise, it is the smallest singular value of $\vW_L$. \revised{An effective bound on the robustness of penultimate-layer (or other internal-layer) features is obtained when $d\leq K$, i.e., when the number of classes is at least as large as the feature dimension. This condition is naturally satisfied in settings such as language modeling, retrieval systems, and face recognition applications, where the number of classes (vocabulary size, number of items, number of identities) is typically much larger than the feature dimension.}

\revised{Interestingly, these are also settings where generalized neural collapse phenomena have been observed \citep{jiang2023generalized}. While our bounds are local (quantifying robustness to small input perturbations), a potential connection to neural collapse emerges when we consider samples from the same class as local perturbations around the class mean. In neural collapse, the within-class variance of features vanishes, meaning all samples from a class produce nearly identical features. This can be viewed through our framework as extreme robustness: if the network maps all within-class samples (which differ from the class mean by small to moderate perturbations) to the same feature representation, the features exhibit strong compression and robustness properties that our bounds capture locally. This said, the relationship between our local robustness bounds and the global geometric structure of neural collapse, including other aspects not considered here (e.g., the simplex equiangular tight frame geometry) remains an open question for future investigation.} 
\section{Experiments} \label{sec:exp}
\textit{All networks are trained with MSE (quadratic) loss except for the pretrained ViTs in \Cref{subsec:vit}. }
\subsection{Sharpness and compression metrics during training: verifying the theory} \label{subsec:sharpcomp}
The theoretical results derived above show that when the training loss is low, measures of compression of the network's representation are upper-bounded by a function of the sharpness of the loss function in parameter space. This links sharpness and representation compression: the flatter the loss landscape, the lower the upper bound on the representation's compression metrics. 

To empirically verify whether these bounds are sufficiently tight to show a clear relationship between sharpness and representation compression, we trained a VGG-11 network \citep{simonyan2015deep} to classify images from the CIFAR-10 dataset \citep{Krizhevsky2009learning} and calculated the (approximate) sharpness (\Cref{eq:zloss_sharpness}), the log volumetric ratio (\Cref{eq:logvol}), MLS (\Cref{def:mls}) and NMLS (\Cref{def:nmls}) during the training phase (\Cref{fig:cifar_batch20} and \Cref{fig:cifar_lr0.1}).
\begin{figure}[!ht]
   \centering
   \includegraphics[width=\textwidth]{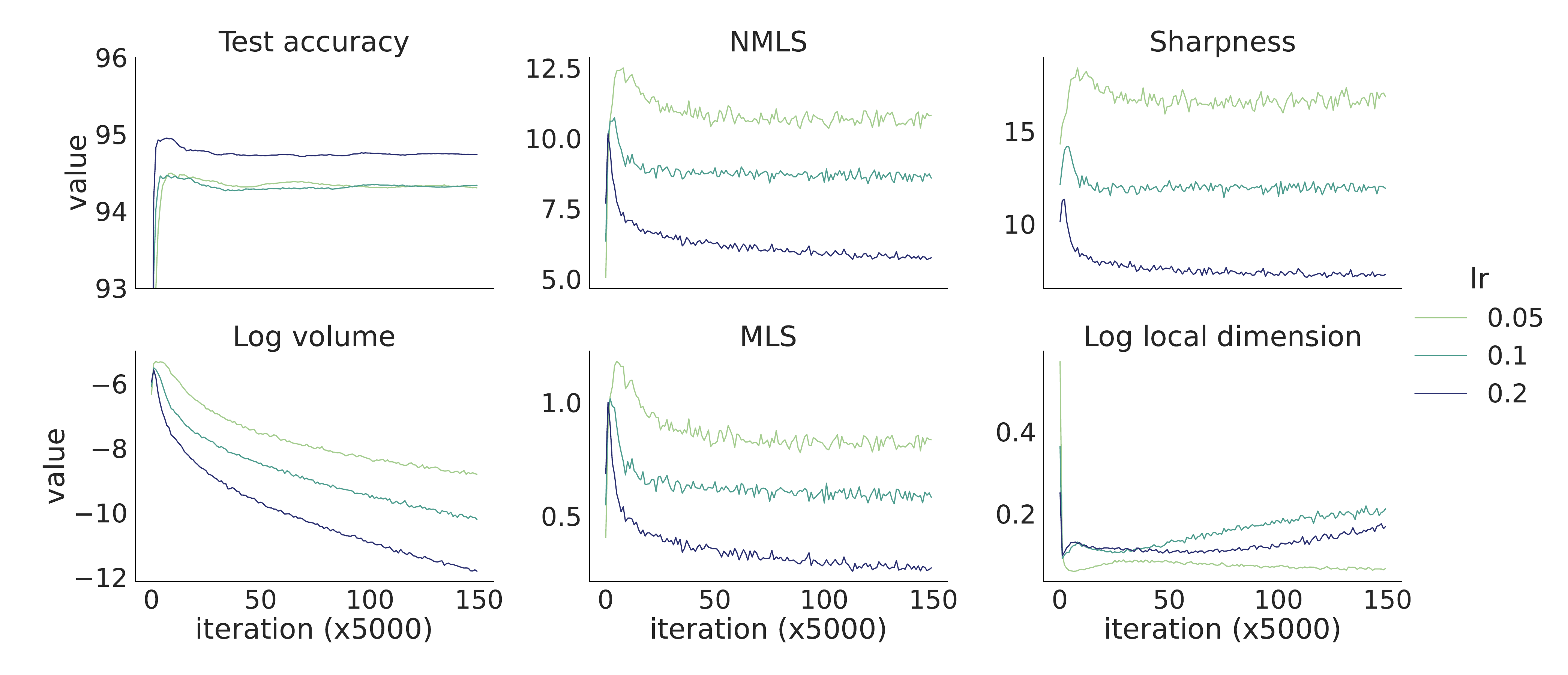}
   \caption{Trends in key variables across SGD training of the VGG-11 network with fixed batch size (equal to 20) and varying learning rates (0.05, 0.1 and 0.2). Higher learning rates lead to lower sharpness and hence stronger compression. From left to right: Test accuracy, NMLS, sharpness (square root of \Cref{eq:zloss_sharpness}), log volumetric ratio (\Cref{eq:logvol}), MLS, and local dimensionality of the network output (\Cref{eq:dim}).}
   \label{fig:cifar_batch20}
\end{figure}

We trained the network using SGD on CIFAR-10 images and explored the influence of two specific parameters that previous work has shown to affect the network's sharpness: learning rate and batch size \citep{jastrzebski_three_2018}. For each combination of learning rate and batch size parameters, we computed all quantities across 100 input samples and averaged across five different random initializations for network weights. 

In \Cref{fig:cifar_batch20}, we study the link between sharpness and representation compression with a fixed batch size (of 20). We observe that when the network reaches the interpolation regime, that is, when training loss is extremely low, the sharpness decreases with compression metrics, including MLS, NMLS, and volume. The trend is consistent across multiple learning rates for a fixed batch size, and MLS and NMLS match better with the trend of sharpness. 

In \Cref{fig:cifar_lr0.1}, we repeated the experiments while keeping the learning rate fixed at 0.1 and varying the batch size. The same broadly consistent trends emerged, linking a decrease in the sharpness to a compression in the neural representation. However, we also found that while sharpness stops decreasing after about $5 \cdot 10^4$ iterations for a batch size of 32, the volume continues to decrease as learning proceeds. This suggests that other mechanisms, beyond sharpness, may be at play in driving the compression of volumes.

We repeated the experiments with an MLP trained on the FashionMNIST dataset \citep{xiao2017fashionmnist} (\Cref{fig:fashion_lr} and \Cref{fig:fashion_batch}). The sharpness again follows the same trend as MLS and NMLS, consistent with our bound. The volume continues to decrease after the sharpness plateaus, albeit at a much slower rate, again matching our theory, while suggesting that an additional factor may be involved in its decrease. 
\begin{figure}[!ht]
   \centering
   \includegraphics[width=\textwidth]{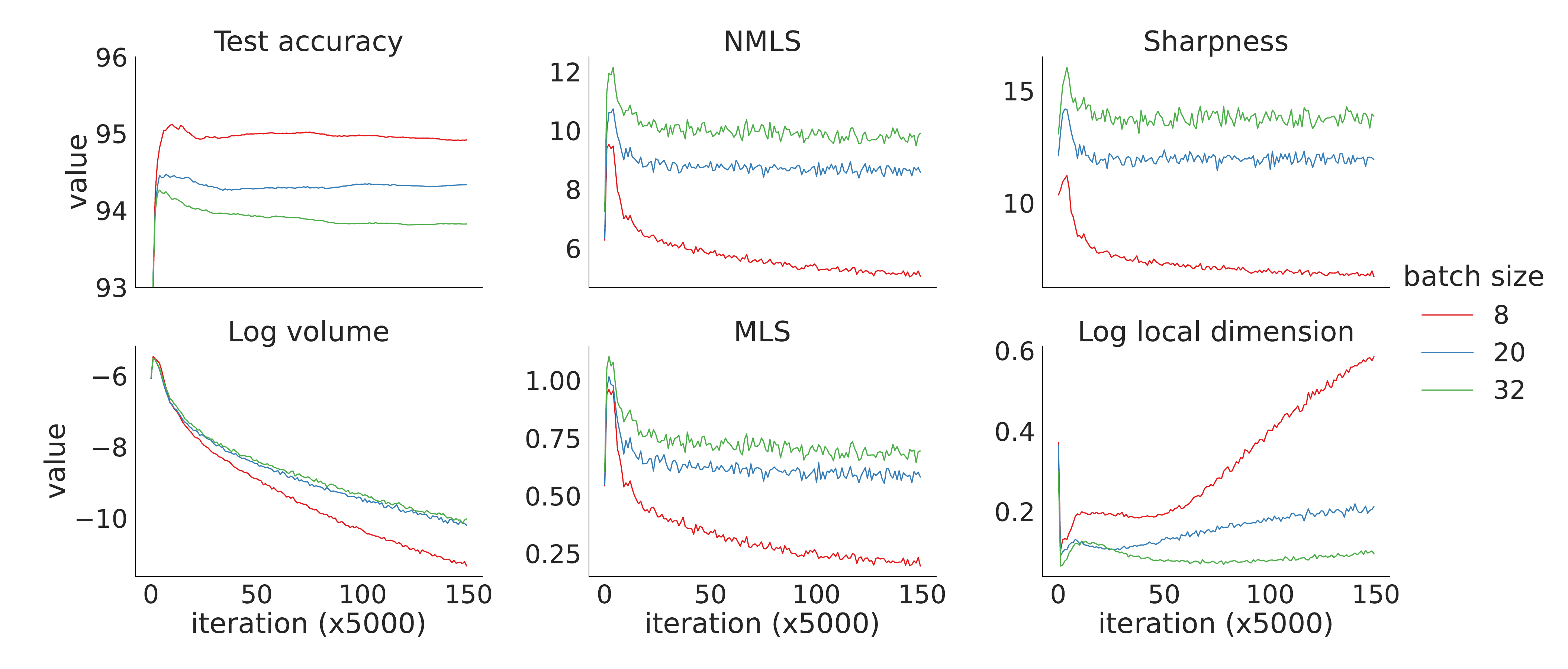}
    \caption{Trends in key variables across SGD training of the VGG-11 network with fixed learning rate size (equal to 0.1) and varying batch size (8, 20, and 32). Smaller batch sizes lead to lower sharpness and hence stronger compression. From left to right in row-wise order: test accuracy, NMLS, sharpness (square root of \Cref{eq:zloss_sharpness}), log volumetric ratio (\Cref{eq:logvol}), MLS, and local dimensionality of the network output (\Cref{eq:dim}).}
   \label{fig:cifar_lr0.1}
\end{figure}
\subsection{Correlation between compression metrics and their sharpness-related bounds} \label{subsec:main_corr}
We next test the correlation between both sides of the bounds that we derive more generally. In \Cref{fig:main_corr}, we show pairwise scatter plots between MLS (resp. NMLS) and the sharpness-related bound on MLS (resp. NMLS) (we include an exhaustive set of correlation matrices in \Cref{subsec:corr}). Interestingly, we find that quantities that only consider a single layer of weights, such as the relative flatness \citep{petzka2021relative} and the bound on the MLS (\Cref{prop:mls}), can exhibit a negative correlation between both sides of the bound in some cases (\Cref{fig:main_corr,fig:fashion_corr_FNN,fig:cifar_corr}). This demonstrates the necessity of introducing NMLS to explain the relationship between sharpness and compression. Nevertheless, we find that all compression metrics, such as MLS, NMLS, and LVR, introduced in \Cref{sec:compression}, correlate well with sharpness in all of our experiments (\Cref{subsec:corr}). Although the bound in \Cref{prop:vol} is loose, the log LVR still correlates positively with sharpness and MLS.

\begin{figure}[!htb]
    \centering
    \includegraphics[width=1\linewidth]{./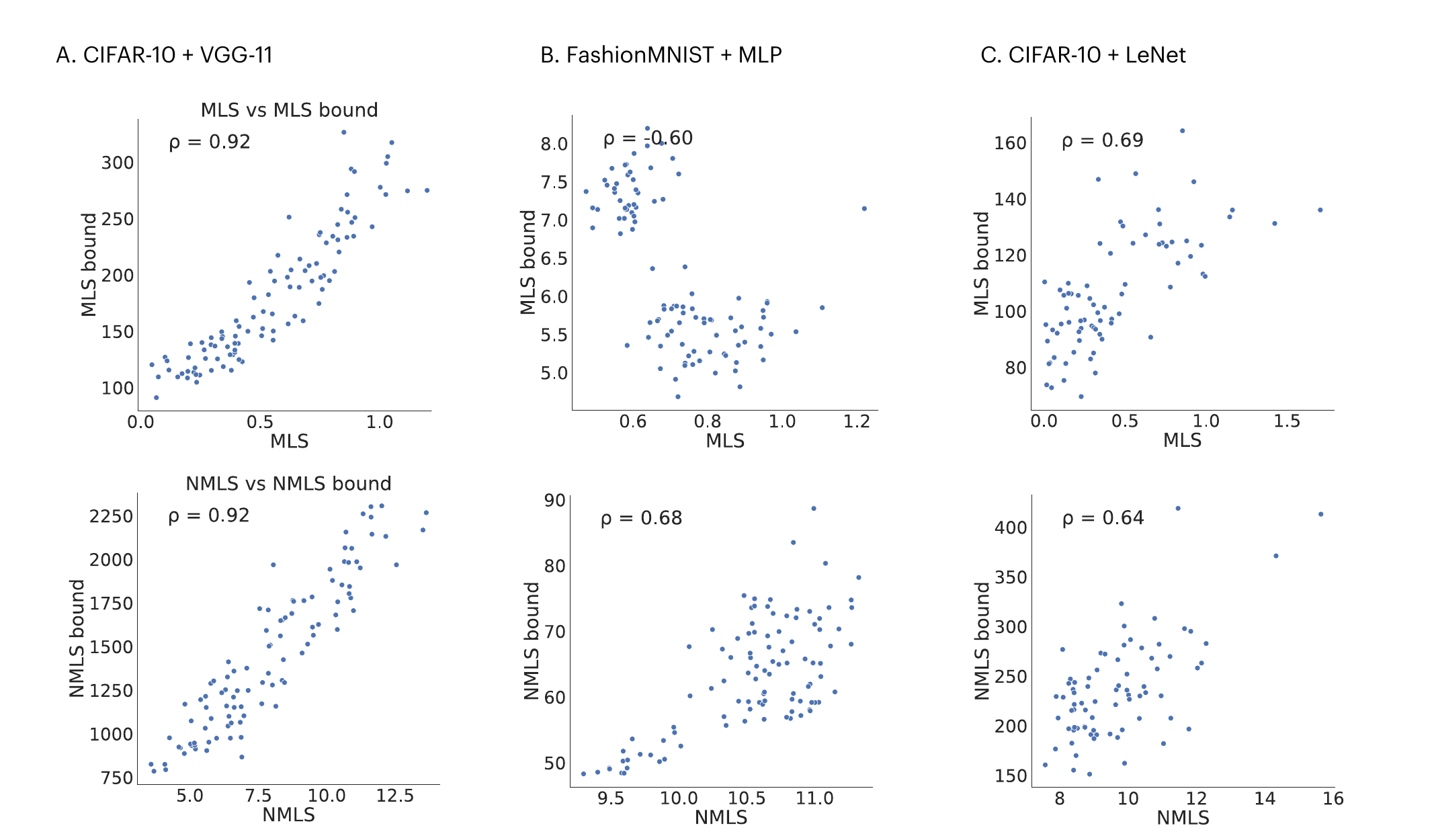}
    \caption{We trained 100 different models for each combination of datasets and networks by varying learning rates, batch size, and random initializations. Pairwise scatter plots between MLS (resp. NMLS) and the sharpness-related bound on MLS (resp. NMLS) are shown here. For MLS (resp. NMLS) bound see \Cref{prop:mls} (resp. \Cref{prop:nmls}). The Pearson correlation coefficient $\rho$ is shown in the top-left corner for each scatter plot. See \Cref{subsec:corr} for the full pairwise scatter matrix.} 
    \label{fig:main_corr}
\end{figure}
\subsection{Empirical evidence in Vision Transformers (ViTs)}\label{subsec:vit}

Since our theory applies to linear and convolutional layers as well as residual layers (\Cref{app:adapt}), relationships among sharpness and compression, as demonstrated above for VGG-11 and MLP networks, it should hold more generally in modern architectures such as the Vision Transformer (ViT) and its variants. However, naive ways of evaluating the quantities discussed in previous sections are computationally prohibitive. Instead, we look at the MLS normalized by the norm of the input and  the elementwise-adaptive sharpness defined in \citet{andriushchenko2023modern, kwon2021asam}. Both of the metrics can be estimated efficiently for large networks. Specifically, in \Cref{fig:vit_sharp} we plot the normalized MLS against the elementwise-adaptive sharpness. The analytical relationship between the normalized MLS and the elementwise-adaptive sharpness and the details of the numerical approximation we used are given in \Cref{app:elementwise} and \Cref{app:approx} respectively. For all the models, we attach a sigmoid layer to the output logits and use MSE loss to calculate the adaptive sharpness. \Cref{fig:vit_sharp} shows the results for 181 pretrained ViT models provided by the $\texttt{timm}$ package \citep{rw2019timm}. We observe that there is a general trend that lower sharpness indeed implies lower MLS. However, there are also outlier clusters, with data corresponding to the same model class; an interesting future direction would be to understand the mechanisms driving this outlier behavior. Interestingly, we did not observe this correlation between unnormalized metrics, indicating that weight scales should be taken into account when comparing between different models. 

\begin{figure}[!htb]
    \centering
    \includegraphics[width=.9\linewidth]{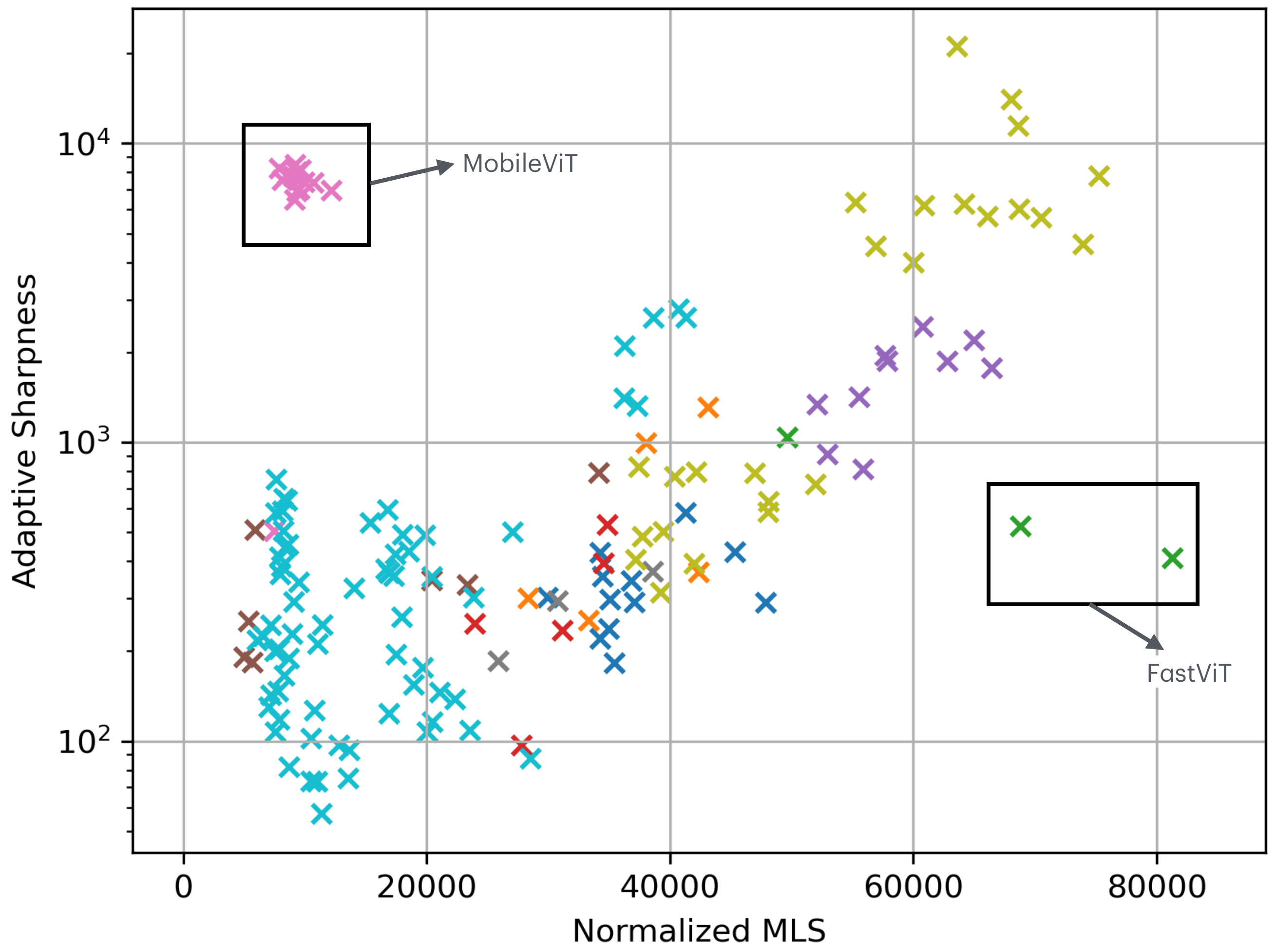}
    \caption{Adaptive sharpness vs Normalized MLS for 181 ViT models and variants. Different colors represent different model classes. For most models, there is a positive correlation between Sharpness and MLS. However, outlier clusters also exist, for MobileViT \citep{mehta2022mobilevitlightweightgeneralpurposemobilefriendly} models in the upper left corner, and two FastViT \citep{vasu2023fastvitfasthybridvision} models in the lower right corner. }
    \label{fig:vit_sharp}
\end{figure}

\subsection{Sharpness and local dimensionality}

 A priori, it is unclear whether the local dimensionality of the neural representations should increase or decrease as the volume is compressed. For example, the volume could decrease while maintaining its overall form and symmetry, thus preserving its dimensionality. Alternatively, one or more of the directions in the relevant tangent space could be selectively compressed, leading to an overall reduction in dimensionality.

\Cref{fig:cifar_batch20} and \Cref{fig:cifar_lr0.1} show our experiments computing the local dimensionality over the course of training. Here, we find that the local dimensionality of the representation decreases as the loss decreases to near 0, which is consistent with the viewpoint that the network compresses representations in feature space as much as possible, retaining only the directions that code for task-relevant features \citep{berner_analysis_2020, cohen_separability_2020}. 
However, the local dimensionality exhibits unpredictable behavior that cannot be explained by the sharpness once the network finds an approximate interpolation solution and training continues. Further experiments also demonstrate a weaker correlation of sharpness and local dimensionality compared to other metrics such as MLS and volume (\Cref{fig:cifar_corr,fig:fashion_corr_FNN,fig:cifar_corr_lenet}). This discrepancy is consistent with the bounds established by our theory, which only bound the numerator of \Cref{eq:dim}. It is also consistent with the property of local dimensionality that we described in \Cref{subsec:dim} overall: it encodes the sparseness of the eigenvalues but it does not encode the magnitude of them. This shows how local dimensionality is a distinct quality of network representations compared with volume, and is driven by mechanisms that differ from sharpness alone. We emphasize that the dimensionality we study here is a local measure, on the finest scale around a point on the ``global'' manifold of unit activities; dimension on larger scales (i.e., across categories or large sets of task inputs \citep{farrell_gradient-based_2022,kothapalli_neural_2022,zhu_geometric_2021,ansuini_intrinsic_2019,recanatesi_dimensionality_2019, papyan2020prevalence}) may show different trends.

\section{Discussion: connection to generalization} \label{sec:diss}

So far we have avoided remarking on implications of our results for the relationship between generalization and sharpness or compression because generalization is \textit{not} the main focus of our work. However, our work may have implications for future research on this relationship, and we briefly discuss this here. \revised{Our theoretical results establish that sharpness directly controls compression of neural representations—specifically, how robust intermediate representations are to input perturbations. By "robustness of representations," we mean that small changes to the input produce proportionally small changes in the network's internal feature representations (quantified by low LVR and MLS). This compression property is desirable in adversarial settings where we want representations to remain stable under input perturbations.}

\revised{However, representation robustness alone does not guarantee better generalization.} \citet{ma_linear_2021} gave a generalization bound based on adversarial robustness, but the theorem requires that most test data lie close to the training data, which demands extensive training coverage. In contrast, our correlation analysis in \Cref{subsec:corr} shows that sharpness consistently has a higher correlation with compression metrics than with the generalization gap. Additional experiments in \Cref{app:addexp} show that lower sharpness does not always imply better test accuracy. 

\revised{This phenomenon is also observed in applied machine learning literature. For example, it is well-established that large learning rates lead to lower sharpness (flatter minima), a relationship we also observe in our \Cref{fig:cifar_batch20} and consistent with \citet{cohen_separability_2020} and \citet{wu2022alignment}. If low sharpness directly implied better generalization, we would expect large learning rates to improve out-of-distribution (OOD) performance. However, \citet{Wortsman_2022_CVPR} demonstrates the opposite: large learning rates can severely hurt OOD generalization. This shows that low sharpness alone—and the compressed representations it produces—does not guarantee superior generalization.}

\revised{Thus, our work offers a possible resolution to the contradictory results on the relationship between (reparametrization-invariant) sharpness and generalization \citep{wen2023sharpness, andriushchenko2023modern}: sharpness directly determines representation compression, but whether this compression translates to superior generalization may depend on other latent factors beyond just sharpness.}

\revised{One possible factor is that compression metrics (LVR, MLS, NMLS) measure robustness to \textit{small random perturbations}—networks with low sharpness have representations that are insensitive to noise. However, good generalization sometimes requires networks to remain sensitive to \textit{semantically meaningful differences} in the input. To illustrate this distinction, consider a large language model (LLM) performing a "needle in a haystack" test, where it must extract specific information from a long context. When the query changes meaningfully (e.g., asking for a different fact), the output should change substantially to provide the correct answer—the network must be sensitive to the semantic content of the query. Yet simultaneously, the network should be robust to small random perturbations in the text, such as typos or minor paraphrasing. Sharpness controls the latter (robustness to random noise), but whether this robustness aids generalization depends on the nature of the distribution shift at test time: if test inputs primarily differ from training via noise-like perturbations, compression helps; if they differ in task-relevant semantic features that require discrimination, excessive compression may reduce the network's ability to distinguish these meaningful differences.} 

\section{Summary and Conclusion} \label{sec:conclusion}
This work presents a dual perspective, uniting views in both parameter and feature space, of several properties of trained neural networks. We identify two representation compression metrics that are bounded by sharpness -- local volumetric ratio and maximum local sensitivity -- and give new explicit formulas for these bounds. We conducted extensive experiments with feedforward, convolutional, and attention-based networks and found that the predictions of these bounds are borne out for these networks, illustrating how MLS in particular is strongly correlated with sharpness. Overall, we establish explicit links between sharpness properties in parameter spaces and compression and robustness properties in the feature space.

By demonstrating both how these links can be tight, and how and when they may also become loose, we propose that taking this dual perspective can bring more clarity to the often confusing question of what sharpness actually quantifies in practice. Indeed, many works, as reviewed in the introduction, have demonstrated how sharpness can lead to generalization, but recent studies have established contradictory results. Therefore, our work contributes to the further exploration of sharpness-based methods for improving neural network performance. Future directions include tightening the derived bounds given knowledge of the data distribution, and investigating the conditions under which representation compression translates to improved generalization. 

\newpage
\bibliography{main}

@article{gao2017theory,
  title={A theory of multineuronal dimensionality, dynamics and measurement},
  author={Gao, Peiran and Trautmann, Eric and Yu, Byron and Santhanam, Gopal and Ryu, Stephen and Shenoy, Krishna and Ganguli, Surya},
  journal={BioRxiv},
  pages={214262},
  year={2017},
  publisher={Cold Spring Harbor Laboratory}
}

@article{litwin2017optimal,
  title={Optimal degrees of synaptic connectivity},
  author={Litwin-Kumar, Ashok and Harris, Kameron Decker and Axel, Richard and Sompolinsky, Haim and Abbott, LF},
  journal={Neuron},
  volume={93},
  number={5},
  pages={1153--1164},
  year={2017},
  publisher={Elsevier}
}

@article{mazzucato_stimuli_2016,
	title = {Stimuli {Reduce} the {Dimensionality} of {Cortical} {Activity}},
	volume = {10},
	issn = {1662-5137},
	url = {https://www.ncbi.nlm.nih.gov/pmc/articles/PMC4756130/},
	doi = {10.3389/fnsys.2016.00011},
	abstract = {The activity of ensembles of simultaneously recorded neurons can be represented as a set of points in the space of firing rates. Even though the dimension of this space is equal to the ensemble size, neural activity can be effectively localized on smaller subspaces. The dimensionality of the neural space is an important determinant of the computational tasks supported by the neural activity. Here, we investigate the dimensionality of neural ensembles from the sensory cortex of alert rats during periods of ongoing (inter-trial) and stimulus-evoked activity. We find that dimensionality grows linearly with ensemble size, and grows significantly faster during ongoing activity compared to evoked activity. We explain these results using a spiking network model based on a clustered architecture. The model captures the difference in growth rate between ongoing and evoked activity and predicts a characteristic scaling with ensemble size that could be tested in high-density multi-electrode recordings. Moreover, we present a simple theory that predicts the existence of an upper bound on dimensionality. This upper bound is inversely proportional to the amount of pair-wise correlations and, compared to a homogeneous network without clusters, it is larger by a factor equal to the number of clusters. The empirical estimation of such bounds depends on the number and duration of trials and is well predicted by the theory. Together, these results provide a framework to analyze neural dimensionality in alert animals, its behavior under stimulus presentation, and its theoretical dependence on ensemble size, number of clusters, and correlations in spiking network models.},
	urldate = {2018-05-11},
	journal = {Frontiers in Systems Neuroscience},
	author = {Mazzucato, Luca and Fontanini, Alfredo and La Camera, Giancarlo},
	month = feb,
	year = {2016},
	pmid = {26924968},
	pmcid = {PMC4756130},
}

@misc{li_what_2022,
	title = {What Happens after {SGD} Reaches Zero Loss? --A Mathematical Framework},
	url = {http://arxiv.org/abs/2110.06914},
	shorttitle = {What Happens after {SGD} Reaches Zero Loss?},
	abstract = {Understanding the implicit bias of Stochastic Gradient Descent ({SGD}) is one of the key challenges in deep learning, especially for overparametrized models, where the local minimizers of the loss function \$L\$ can form a manifold. Intuitively, with a sufficiently small learning rate \${\textbackslash}eta\$, {SGD} tracks Gradient Descent ({GD}) until it gets close to such manifold, where the gradient noise prevents further convergence. In such a regime, Blanc et al. (2020) proved that {SGD} with label noise locally decreases a regularizer-like term, the sharpness of loss, \${\textbackslash}mathrm\{tr\}[{\textbackslash}nabla{\textasciicircum}2 L]\$. The current paper gives a general framework for such analysis by adapting ideas from Katzenberger (1991). It allows in principle a complete characterization for the regularization effect of {SGD} around such manifold -- i.e., the "implicit bias" -- using a stochastic differential equation ({SDE}) describing the limiting dynamics of the parameters, which is determined jointly by the loss function and the noise covariance. This yields some new results: (1) a global analysis of the implicit bias valid for \${\textbackslash}eta{\textasciicircum}\{-2\}\$ steps, in contrast to the local analysis of Blanc et al. (2020) that is only valid for \${\textbackslash}eta{\textasciicircum}\{-1.6\}\$ steps and (2) allowing arbitrary noise covariance. As an application, we show with arbitrary large initialization, label noise {SGD} can always escape the kernel regime and only requires \$O({\textbackslash}kappa{\textbackslash}ln d)\$ samples for learning an \${\textbackslash}kappa\$-sparse overparametrized linear model in \${\textbackslash}mathbb\{R\}{\textasciicircum}d\$ (Woodworth et al., 2020), while {GD} initialized in the kernel regime requires \${\textbackslash}Omega(d)\$ samples. This upper bound is minimax optimal and improves the previous \${\textbackslash}tilde\{O\}({\textbackslash}kappa{\textasciicircum}2)\$ upper bound ({HaoChen} et al., 2020).},
	number = {{arXiv}:2110.06914},
	publisher = {{arXiv}},
	author = {Li, Zhiyuan and Wang, Tianhao and Arora, Sanjeev},
	urldate = {2023-06-15},
	date = {2022-07-28},
    year = {2022},
	eprinttype = {arxiv},
	eprint = {2110.06914 [cs, stat]},
	keywords = {Statistics - Machine Learning, Computer Science - Machine Learning},
}

@misc{ma_linear_2021,
	title = {On Linear Stability of {SGD} and Input-Smoothness of Neural Networks},
	url = {http://arxiv.org/abs/2105.13462},
	abstract = {The multiplicative structure of parameters and input data in the first layer of neural networks is explored to build connection between the landscape of the loss function with respect to parameters and the landscape of the model function with respect to input data. By this connection, it is shown that flat minima regularize the gradient of the model function, which explains the good generalization performance of flat minima. Then, we go beyond the flatness and consider high-order moments of the gradient noise, and show that Stochastic Gradient Descent ({SGD}) tends to impose constraints on these moments by a linear stability analysis of {SGD} around global minima. Together with the multiplicative structure, we identify the Sobolev regularization effect of {SGD}, i.e. {SGD} regularizes the Sobolev seminorms of the model function with respect to the input data. Finally, bounds for generalization error and adversarial robustness are provided for solutions found by {SGD} under assumptions of the data distribution.},
	number = {{arXiv}:2105.13462},
	publisher = {{arXiv}},
	author = {Ma, Chao and Ying, Lexing},
	urldate = {2023-06-15},
	date = {2021-11-28},
    year         = {2021},
	eprinttype = {arxiv},
	eprint = {2105.13462 [cs]},
	keywords = {Computer Science - Machine Learning},
}

@article{yang_stochastic_2023,
	title = {Stochastic Gradient Descent Introduces an Effective Landscape-Dependent Regularization Favoring Flat Solutions},
	volume = {130},
	url = {https://link.aps.org/doi/10.1103/PhysRevLett.130.237101},
	doi = {10.1103/PhysRevLett.130.237101},
	abstract = {Generalization is one of the most important problems in deep learning, where there exist many low-loss solutions due to overparametrization. Previous empirical studies showed a strong correlation between flatness of the loss landscape at a solution and its generalizability, and stochastic gradient descent ({SGD}) is crucial in finding the flat solutions. To understand the effects of {SGD}, we construct a simple model whose overall loss landscape has a continuous set of degenerate (or near-degenerate) minima and the loss landscape for a minibatch is approximated by a random shift of the overall loss function. By direct simulations of the stochastic learning dynamics and solving the underlying Fokker-Planck equation, we show that due to its strong anisotropy the {SGD} noise introduces an additional effective loss term that decreases with flatness and has an overall strength that increases with the learning rate and batch-to-batch variation. We find that the additional landscape-dependent {SGD} loss breaks the degeneracy and serves as an effective regularization for finding flat solutions. As a result, the flatness of the overall loss landscape increases during learning and reaches a higher value (flatter minimum) for a larger {SGD} noise strength before the noise strength reaches a critical value when the system fails to converge. These results, which are verified in realistic neural network models, elucidate the role of {SGD} for generalization, and they may also have important implications for hyperparameter selection for learning efficiently without divergence.},
	pages = {237101},
	number = {23},
	journal = {Physical Review Letters},
	shortjournal = {Phys. Rev. Lett.},
	author = {Yang, Ning and Tang, Chao and Tu, Yuhai},
	urldate = {2023-07-19},
	date = {2023-06-07},
    year         = {2023},
	note = {Publisher: American Physical Society},
}

@misc{simonyan2015deep,
      title={Very Deep Convolutional Networks for Large-Scale Image Recognition}, 
      author={Karen Simonyan and Andrew Zisserman},
      year={2015},
      eprint={1409.1556},
      archivePrefix={arXiv},
      primaryClass={cs.CV}
}

@article{recanatesi_2022_scale,
  title={A scale-dependent measure of system dimensionality},
  author={Recanatesi, Stefano and Bradde, Serena and Balasubramanian, Vijay and Steinmetz, Nicholas A and Shea-Brown, Eric},
  journal={Patterns},
  volume={3},
  number={8},
  year={2022},
  publisher={Elsevier}
}

@misc{ratzon_representational_2023,
	title = {Representational drift as a result of implicit regularization},
	rights = {© 2023, Posted by Cold Spring Harbor Laboratory. This pre-print is available under a Creative Commons License (Attribution-{NonCommercial}-{NoDerivs} 4.0 International), {CC} {BY}-{NC}-{ND} 4.0, as described at http://creativecommons.org/licenses/by-nc-nd/4.0/},
	url = {https://www.biorxiv.org/content/10.1101/2023.05.04.539512v3},
	doi = {10.1101/2023.05.04.539512},
	abstract = {Recent studies show that, even in constant environments, the tuning of single neurons changes over time in a variety of brain regions. This representational drift has been suggested to be a consequence of continuous learning under noise, but its properties are still not fully understood. To uncover the underlying mechanism, we trained an artificial network on a simplified navigational task, inspired by the predictive coding literature. The network quickly reached a state of high performance, and many neurons exhibited spatial tuning. We then continued training the network and noticed that the activity became sparser with time. We observed vastly different time scales between the initial learning and the ensuing sparsification. We verified the generality of this phenomenon across tasks, learning algorithms, and parameters. This sparseness is a manifestation of the movement within the solution space - the networks drift until they reach a flat loss landscape. This is consistent with recent experimental results demonstrating that {CA}1 neurons increase sparseness with exposure to the same environment and become more spatially informative. We conclude that learning is divided into three overlapping phases: Fast familiarity with the environment, slow implicit regularization, and a steady state of null drift. The variability in drift dynamics opens the possibility of inferring learning algorithms from observations of drift statistics.},
	publisher = {{bioRxiv}},
	author = {Ratzon, Aviv and Derdikman, Dori and Barak, Omri},
	urldate = {2023-07-19},
	date = {2023-07-02},
    year         = {2023},
	langid = {english},
	note = {Pages: 2023.05.04.539512
Section: New Results},
}

@article{kothapalli_neural_2022,
	title = {Neural collapse: A review on modelling principles and generalization},
	shorttitle = {Neural collapse},
	journal = {{arXiv} preprint {arXiv}:2206.04041},
	author = {Kothapalli, Vignesh and Rasromani, Ebrahim and Awatramani, Vasudev},
	year = {2022},
}

@article{zhu_geometric_2021,
	title = {A geometric analysis of neural collapse with unconstrained features},
	volume = {34},
	pages = {29820--29834},
	journal = {Advances in Neural Information Processing Systems},
	author = {Zhu, Zhihui and Ding, Tianyu and Zhou, Jinxin and Li, Xiao and You, Chong and Sulam, Jeremias and Qu, Qing},
	year = {2021},
}

@article{farrell2023lazy,
  title={From lazy to rich to exclusive task representations in neural networks and neural codes},
  author={Farrell, Matthew and Recanatesi, Stefano and Shea-Brown, Eric},
  journal={Current opinion in neurobiology},
  volume={83},
  pages={102780},
  year={2023},
  publisher={Elsevier}
}

@article{farrell_gradient-based_2022,
	title = {Gradient-based learning drives robust representations in recurrent neural networks by balancing compression and expansion},
	volume = {4},
	pages = {564--573},
	number = {6},
	journal = {Nature Machine Intelligence},
	author = {Farrell, Matthew and Recanatesi, Stefano and Moore, Timothy and Lajoie, Guillaume and Shea-Brown, Eric},
	year = {2022},
	note = {Publisher: Nature Publishing Group {UK} London},
}

@article{ansuini_intrinsic_2019,
	title = {Intrinsic dimension of data representations in deep neural networks},
	volume = {32},
	journal = {Advances in Neural Information Processing Systems},
	author = {Ansuini, Alessio and Laio, Alessandro and Macke, Jakob H. and Zoccolan, Davide},
	year = {2019},
}

@article{recanatesi_dimensionality_2019,
	title = {Dimensionality compression and expansion in deep neural networks},
	journal = {{arXiv} preprint {arXiv}:1906.00443},
	author = {Recanatesi, Stefano and Farrell, Matthew and Advani, Madhu and Moore, Timothy and Lajoie, Guillaume and Shea-Brown, Eric},
	year = {2019},
}

@article{geiger_landscape_2021,
	title = {Landscape and training regimes in deep learning},
	volume = {924},
	issn = {0370-1573},
	url = {https://www.sciencedirect.com/science/article/pii/S0370157321001290},
	doi = {10.1016/j.physrep.2021.04.001},
	series = {Landscape and training regimes in deep learning},
	abstract = {Deep learning algorithms are responsible for a technological revolution in a variety of tasks including image recognition or Go playing. Yet, why they work is not understood. Ultimately, they manage to classify data lying in high dimension – a feat generically impossible due to the geometry of high dimensional space and the associated curse of dimensionality. Understanding what kind of structure, symmetry or invariance makes data such as images learnable is a fundamental challenge. Other puzzles include that (i) learning corresponds to minimizing a loss in high dimension, which is in general not convex and could well get stuck bad minima. (ii) Deep learning predicting power increases with the number of fitting parameters, even in a regime where data are perfectly fitted. In this manuscript, we review recent results elucidating (i, ii) and the perspective they offer on the (still unexplained) curse of dimensionality paradox. We base our theoretical discussion on the (h,α) plane where h controls the number of parameters and α the scale of the output of the network at initialization, and provide new systematic measures of performance in that plane for two common image classification datasets. We argue that different learning regimes can be organized into a phase diagram. A line of critical points sharply delimits an under-parametrized phase from an over-parametrized one. In over-parametrized nets, learning can operate in two regimes separated by a smooth cross-over. At large initialization, it corresponds to a kernel method, whereas for small initializations features can be learnt, together with invariants in the data. We review the properties of these different phases, of the transition separating them and some open questions. Our treatment emphasizes analogies with physical systems, scaling arguments and the development of numerical observables to quantitatively test these results empirically. Practical implications are also discussed, including the benefit of averaging nets with distinct initial weights, or the choice of parameters (h,α) optimizing performance.},
	pages = {1--18},
	journal = {Physics Reports},
	shortjournal = {Physics Reports},
	author = {Geiger, Mario and Petrini, Leonardo and Wyart, Matthieu},
	urldate = {2023-07-19},
	date = {2021-08-15},
    year         = {2021},
	langid = {english},
	keywords = {Curse of dimensionality, Deep learning, Feature learning, Jamming, Lazy training, Loss landscape, Neural networks, Neural tangent kernel},
}

@misc{blanc_implicit_2020,
	title = {Implicit regularization for deep neural networks driven by an Ornstein-Uhlenbeck like process},
	url = {http://arxiv.org/abs/1904.09080},
	doi = {10.48550/arXiv.1904.09080},
	abstract = {We consider networks, trained via stochastic gradient descent to minimize \${\textbackslash}ell\_2\$ loss, with the training labels perturbed by independent noise at each iteration. We characterize the behavior of the training dynamics near any parameter vector that achieves zero training error, in terms of an implicit regularization term corresponding to the sum over the data points, of the squared \${\textbackslash}ell\_2\$ norm of the gradient of the model with respect to the parameter vector, evaluated at each data point. This holds for networks of any connectivity, width, depth, and choice of activation function. We interpret this implicit regularization term for three simple settings: matrix sensing, two layer {ReLU} networks trained on one-dimensional data, and two layer networks with sigmoid activations trained on a single datapoint. For these settings, we show why this new and general implicit regularization effect drives the networks towards "simple" models.},
	number = {{arXiv}:1904.09080},
	publisher = {{arXiv}},
	author = {Blanc, Guy and Gupta, Neha and Valiant, Gregory and Valiant, Paul},
	urldate = {2023-07-19},
	date = {2020-07-22},
    year         = {2020},
	eprinttype = {arxiv},
	eprint = {1904.09080 [cs, stat]},
	keywords = {Computer Science - Machine Learning, Statistics - Machine Learning},
}

@article{berner_analysis_2020,
	title = {Analysis of the generalization error: Empirical risk minimization over deep artificial neural networks overcomes the curse of dimensionality in the numerical approximation of Black–Scholes partial differential equations},
	volume = {2},
	shorttitle = {Analysis of the generalization error},
	pages = {631--657},
	number = {3},
	journal = {{SIAM} Journal on Mathematics of Data Science},
	author = {Berner, Julius and Grohs, Philipp and Jentzen, Arnulf},
	year = {2020},
	note = {Publisher: {SIAM}},
}

@article{cohen_separability_2020,
	title = {Separability and geometry of object manifolds in deep neural networks},
	volume = {11},
	pages = {746},
	number = {1},
	journal = {Nature communications},
	author = {Cohen, Uri and Chung, {SueYeon} and Lee, Daniel D. and Sompolinsky, Haim},
	year = {2020},
	note = {Publisher: Nature Publishing Group {UK} London},
}

@article{shwartz2017opening,
  title={Opening the black box of deep neural networks via information},
  author={Shwartz-Ziv, Ravid and Tishby, Naftali},
  journal={arXiv preprint arXiv:1703.00810},
  year={2017}
}

@inproceedings{wu_how_2018,
	title = {How {SGD} {Selects} the {Global} {Minima} in {Over}-parameterized {Learning}: {A} {Dynamical} {Stability} {Perspective}},
	volume = {31},
	shorttitle = {How {SGD} {Selects} the {Global} {Minima} in {Over}-parameterized {Learning}},
	url = {https://papers.nips.cc/paper_files/paper/2018/hash/6651526b6fb8f29a00507de6a49ce30f-Abstract.html},
	abstract = {The question of which global minima are accessible by a stochastic gradient decent (SGD)  algorithm with specific learning rate and batch size is studied from the perspective of dynamical stability.  The concept of non-uniformity is introduced, which, together with sharpness, characterizes the stability property of a global minimum and hence the accessibility of a particular SGD algorithm to that global minimum. In particular, this analysis shows that  learning rate and batch size play different roles in minima selection.  Extensive empirical results seem to correlate well with the theoretical findings and provide further support to these  claims.},
	urldate = {2023-06-16},
	booktitle = {Advances in {Neural} {Information} {Processing} {Systems}},
	publisher = {Curran Associates, Inc.},
	author = {Wu, Lei and Ma, Chao and E, Weinan},
	year = {2018},
}

@misc{jastrzebski_three_2018,
	title = {Three {Factors} {Influencing} {Minima} in {SGD}},
	url = {http://arxiv.org/abs/1711.04623},
	doi = {10.48550/arXiv.1711.04623},
	abstract = {We investigate the dynamical and convergent properties of stochastic gradient descent (SGD) applied to Deep Neural Networks (DNNs). Characterizing the relation between learning rate, batch size and the properties of the final minima, such as width or generalization, remains an open question. In order to tackle this problem we investigate the previously proposed approximation of SGD by a stochastic differential equation (SDE). We theoretically argue that three factors - learning rate, batch size and gradient covariance - influence the minima found by SGD. In particular we find that the ratio of learning rate to batch size is a key determinant of SGD dynamics and of the width of the final minima, and that higher values of the ratio lead to wider minima and often better generalization. We confirm these findings experimentally. Further, we include experiments which show that learning rate schedules can be replaced with batch size schedules and that the ratio of learning rate to batch size is an important factor influencing the memorization process.},
	urldate = {2023-06-23},
	publisher = {arXiv},
	author = {Jastrzebski, Stanisław and Kenton, Zachary and Arpit, Devansh and Ballas, Nicolas and Fischer, Asja and Bengio, Yoshua and Storkey, Amos},
	month = sep,
	year = {2018},
	note = {arXiv:1711.04623 [cs, stat]},
	keywords = {Computer Science - Machine Learning, Statistics - Machine Learning, Computer Science - Computer Vision and Pattern Recognition, Computer Science - Artificial Intelligence},
	annote = {Comment: First two authors contributed equally. Short version accepted into ICLR workshop. Accepted to Artificial Neural Networks and Machine Learning, ICANN 2018},
}

@misc{xie_diffusion_2021,
	title = {A {Diffusion} {Theory} {For} {Deep} {Learning} {Dynamics}: {Stochastic} {Gradient} {Descent} {Exponentially} {Favors} {Flat} {Minima}},
	shorttitle = {A {Diffusion} {Theory} {For} {Deep} {Learning} {Dynamics}},
	url = {http://arxiv.org/abs/2002.03495},
	doi = {10.48550/arXiv.2002.03495},
	abstract = {Stochastic Gradient Descent (SGD) and its variants are mainstream methods for training deep networks in practice. SGD is known to find a flat minimum that often generalizes well. However, it is mathematically unclear how deep learning can select a flat minimum among so many minima. To answer the question quantitatively, we develop a density diffusion theory (DDT) to reveal how minima selection quantitatively depends on the minima sharpness and the hyperparameters. To the best of our knowledge, we are the first to theoretically and empirically prove that, benefited from the Hessian-dependent covariance of stochastic gradient noise, SGD favors flat minima exponentially more than sharp minima, while Gradient Descent (GD) with injected white noise favors flat minima only polynomially more than sharp minima. We also reveal that either a small learning rate or large-batch training requires exponentially many iterations to escape from minima in terms of the ratio of the batch size and learning rate. Thus, large-batch training cannot search flat minima efficiently in a realistic computational time.},
	urldate = {2023-06-14},
	publisher = {arXiv},
	author = {Xie, Zeke and Sato, Issei and Sugiyama, Masashi},
	month = jan,
	year = {2021},
	note = {arXiv:2002.03495 [cs, stat]},
	keywords = {Computer Science - Machine Learning, Statistics - Machine Learning},
	annote = {Comment: ICLR 2021; 28 pages; 19 figures},
}

@misc{zhu_anisotropic_2019,
	title = {The {Anisotropic} {Noise} in {Stochastic} {Gradient} {Descent}: {Its} {Behavior} of {Escaping} from {Sharp} {Minima} and {Regularization} {Effects}},
	shorttitle = {The {Anisotropic} {Noise} in {Stochastic} {Gradient} {Descent}},
	url = {http://arxiv.org/abs/1803.00195},
	doi = {10.48550/arXiv.1803.00195},
	abstract = {Understanding the behavior of stochastic gradient descent (SGD) in the context of deep neural networks has raised lots of concerns recently. Along this line, we study a general form of gradient based optimization dynamics with unbiased noise, which unifies SGD and standard Langevin dynamics. Through investigating this general optimization dynamics, we analyze the behavior of SGD on escaping from minima and its regularization effects. A novel indicator is derived to characterize the efficiency of escaping from minima through measuring the alignment of noise covariance and the curvature of loss function. Based on this indicator, two conditions are established to show which type of noise structure is superior to isotropic noise in term of escaping efficiency. We further show that the anisotropic noise in SGD satisfies the two conditions, and thus helps to escape from sharp and poor minima effectively, towards more stable and flat minima that typically generalize well. We systematically design various experiments to verify the benefits of the anisotropic noise, compared with full gradient descent plus isotropic diffusion (i.e. Langevin dynamics).},
	urldate = {2023-06-16},
	publisher = {arXiv},
	author = {Zhu, Zhanxing and Wu, Jingfeng and Yu, Bing and Wu, Lei and Ma, Jinwen},
	month = jun,
	year = {2019},
	note = {arXiv:1803.00195 [cs, stat]},
	keywords = {Computer Science - Machine Learning, Statistics - Machine Learning},
	annote = {Comment: ICML 2019 camera ready},
}

@article{papyan2020prevalence,
  title={Prevalence of neural collapse during the terminal phase of deep learning training},
  author={Papyan, Vardan and Han, XY and Donoho, David L},
  journal={Proceedings of the National Academy of Sciences},
  volume={117},
  number={40},
  pages={24652--24663},
  year={2020},
  publisher={National Acad Sciences}
}

@inproceedings{dinh2017sharp,
  title={Sharp minima can generalize for deep nets},
  author={Dinh, Laurent and Pascanu, Razvan and Bengio, Samy and Bengio, Yoshua},
  booktitle={International Conference on Machine Learning},
  pages={1019--1028},
  year={2017},
  organization={PMLR}
}

@article{andriushchenko2023modern,
  title={A modern look at the relationship between sharpness and generalization},
  author={Andriushchenko, Maksym and Croce, Francesco and M{\"u}ller, Maximilian and Hein, Matthias and Flammarion, Nicolas},
  journal={arXiv preprint arXiv:2302.07011},
  year={2023}
}

@article{yang2021taxonomizing,
  title={Taxonomizing local versus global structure in neural network loss landscapes},
  author={Yang, Yaoqing and Hodgkinson, Liam and Theisen, Ryan and Zou, Joe and Gonzalez, Joseph E and Ramchandran, Kannan and Mahoney, Michael W},
  journal={Advances in Neural Information Processing Systems},
  volume={34},
  pages={18722--18733},
  year={2021}
}

@misc{wen2023sharpness,
      title={Sharpness Minimization Algorithms Do Not Only Minimize Sharpness To Achieve Better Generalization}, 
      author={Kaiyue Wen and Zhiyuan Li and Tengyu Ma},
      year={2023},
      eprint={2307.11007},
      archivePrefix={arXiv},
      primaryClass={cs.LG}
}

@article{petzka2021relative,
  title={Relative flatness and generalization},
  author={Petzka, Henning and Kamp, Michael and Adilova, Linara and Sminchisescu, Cristian and Boley, Mario},
  journal={Advances in neural information processing systems},
  volume={34},
  pages={18420--18432},
  year={2021}
}

@misc{tsuzuku2019normalized,
      title={Normalized Flat Minima: Exploring Scale Invariant Definition of Flat Minima for Neural Networks using PAC-Bayesian Analysis}, 
      author={Yusuke Tsuzuku and Issei Sato and Masashi Sugiyama},
      year={2019},
      eprint={1901.04653},
      archivePrefix={arXiv},
      primaryClass={stat.ML}
}

@misc{wu2022alignment,
      title={The alignment property of SGD noise and how it helps select flat minima: A stability analysis}, 
      author={Lei Wu and Mingze Wang and Weijie Su},
      year={2022},
      eprint={2207.02628},
      archivePrefix={arXiv},
      primaryClass={stat.ML}
}

@InProceedings{Wortsman_2022_CVPR,
    author    = {Wortsman, Mitchell and Ilharco, Gabriel and Kim, Jong Wook and Li, Mike and Kornblith, Simon and Roelofs, Rebecca and Lopes, Raphael Gontijo and Hajishirzi, Hannaneh and Farhadi, Ali and Namkoong, Hongseok and Schmidt, Ludwig},
    title     = {Robust Fine-Tuning of Zero-Shot Models},
    booktitle = {Proceedings of the IEEE/CVF Conference on Computer Vision and Pattern Recognition (CVPR)},
    month     = {June},
    year      = {2022},
    pages     = {7959-7971}
}

@article{xiao2017fashionmnist,
  title={Fashion-MNIST: a Novel Image Dataset for Benchmarking Machine Learning Algorithms},
  author={Han Xiao and Kashif Rasul and Roland Vollgraf},
  journal={arXiv preprint arXiv:1708.07747},
  year={2017},
  url={http://arxiv.org/abs/1708.07747}
}

@techreport{Krizhevsky2009learning,
  author = {Alex Krizhevsky},
  title = {Learning Multiple Layers of Features from Tiny Images},
  year = {2009},
  institution = {University of Toronto},
  url = {https://www.cs.toronto.edu/~kriz/learning-features-2009-TR.pdf}
}

@misc{granziol2020flatnessfalsefriend,
      title={Flatness is a False Friend}, 
      author={Diego Granziol},
      year={2020},
      eprint={2006.09091},
      archivePrefix={arXiv},
      primaryClass={stat.ML},
      url={https://arxiv.org/abs/2006.09091}, 
}

@inproceedings{kwon2021asam,
  title={Asam: Adaptive sharpness-aware minimization for scale-invariant learning of deep neural networks},
  author={Kwon, Jungmin and Kim, Jeongseop and Park, Hyunseo and Choi, In Kwon},
  booktitle={International Conference on Machine Learning},
  pages={5905--5914},
  year={2021},
  organization={PMLR}
}

@misc{rw2019timm,
  author = {Ross Wightman},
  title = {PyTorch Image Models},
  year = {2019},
  publisher = {GitHub},
  journal = {GitHub repository},
  doi = {10.5281/zenodo.4414861},
  howpublished = {\url{https://github.com/rwightman/pytorch-image-models}}
}

@misc{mehta2022mobilevitlightweightgeneralpurposemobilefriendly,
      title={MobileViT: Light-weight, General-purpose, and Mobile-friendly Vision Transformer}, 
      author={Sachin Mehta and Mohammad Rastegari},
      year={2022},
      eprint={2110.02178},
      archivePrefix={arXiv},
      primaryClass={cs.CV},
      url={https://arxiv.org/abs/2110.02178}, 
}

@misc{vasu2023fastvitfasthybridvision,
      title={FastViT: A Fast Hybrid Vision Transformer using Structural Reparameterization}, 
      author={Pavan Kumar Anasosalu Vasu and James Gabriel and Jeff Zhu and Oncel Tuzel and Anurag Ranjan},
      year={2023},
      eprint={2303.14189},
      archivePrefix={arXiv},
      primaryClass={cs.CV},
      url={https://arxiv.org/abs/2303.14189}, 
}

@article{hochreiter1997flat,
  title={Flat minima},
  author={Hochreiter, Sepp and Schmidhuber, J{\"u}rgen},
  journal={Neural computation},
  volume={9},
  number={1},
  pages={1--42},
  year={1997},
  publisher={MIT Press}
}

@article{neyshabur2017exploring,
  title={Exploring generalization in deep learning},
  author={Neyshabur, Behnam and Bhojanapalli, Srinadh and McAllester, David and Srebro, Nati},
  journal={Advances in neural information processing systems},
  volume={30},
  year={2017}
}

@article{jiang2019fantastic,
  title={Fantastic generalization measures and where to find them},
  author={Jiang, Yiding and Neyshabur, Behnam and Mobahi, Hossein and Krishnan, Dilip and Bengio, Samy},
  journal={arXiv preprint arXiv:1912.02178},
  year={2019}
}

@inproceedings{andriushchenko2022towards,
  title={Towards understanding sharpness-aware minimization},
  author={Andriushchenko, Maksym and Flammarion, Nicolas},
  booktitle={International Conference on Machine Learning},
  pages={639--668},
  year={2022},
  organization={PMLR}
}

@article{keskar2016large,
  title={On large-batch training for deep learning: Generalization gap and sharp minima},
  author={Keskar, Nitish Shirish and Mudigere, Dheevatsa and Nocedal, Jorge and Smelyanskiy, Mikhail and Tang, Ping Tak Peter},
  journal={arXiv preprint arXiv:1609.04836},
  year={2016}
}

@article{cooper2018loss,
  title={The loss landscape of overparameterized neural networks},
  author={Cooper, Yaim},
  journal={arXiv preprint arXiv:1804.10200},
  year={2018}
}

@article{fehrman2020convergence,
  title={Convergence rates for the stochastic gradient descent method for non-convex objective functions},
  author={Fehrman, Benjamin and Gess, Benjamin and Jentzen, Arnulf},
  journal={Journal of Machine Learning Research},
  volume={21},
  number={136},
  pages={1--48},
  year={2020}
}

@article{foret2020sharpness,
  title={Sharpness-aware minimization for efficiently improving generalization},
  author={Foret, Pierre and Kleiner, Ariel and Mobahi, Hossein and Neyshabur, Behnam},
  journal={arXiv preprint arXiv:2010.01412},
  year={2020}
}

@article{wen2022does,
  title={How does sharpness-aware minimization minimize sharpness?},
  author={Wen, Kaiyue and Ma, Tengyu and Li, Zhiyuan},
  journal={arXiv preprint arXiv:2211.05729},
  year={2022}
}

@article{bartlett2023dynamics,
  title={The dynamics of sharpness-aware minimization: Bouncing across ravines and drifting towards wide minima},
  author={Bartlett, Peter L and Long, Philip M and Bousquet, Olivier},
  journal={Journal of Machine Learning Research},
  volume={24},
  number={316},
  pages={1--36},
  year={2023}
}

@inproceedings{blanc2020implicit,
  title={Implicit regularization for deep neural networks driven by an ornstein-uhlenbeck like process},
  author={Blanc, Guy and Gupta, Neha and Valiant, Gregory and Valiant, Paul},
  booktitle={Conference on learning theory},
  pages={483--513},
  year={2020},
  organization={PMLR}
}

@article{damian2021label,
  title={Label noise sgd provably prefers flat global minimizers},
  author={Damian, Alex and Ma, Tengyu and Lee, Jason D},
  journal={Advances in Neural Information Processing Systems},
  volume={34},
  pages={27449--27461},
  year={2021}
}

@article{jiang2023generalized,
  title={Generalized neural collapse for a large number of classes},
  author={Jiang, Jiachen and Zhou, Jinxin and Wang, Peng and Qu, Qing and Mixon, Dustin and You, Chong and Zhu, Zhihui},
  journal={arXiv preprint arXiv:2310.05351},
  year={2023}
}

@ARTICLE{fang1994inequalities,
  author={Fang, Yuguang and Loparo, K.A. and Xiangbo Feng},
  journal={IEEE Transactions on Automatic Control},
  title={Inequalities for the trace of matrix product},
  year={1994},
  volume={39},
  number={12},
  pages={2489-2490},
  keywords={Linear matrix inequalities;Symmetric matrices;Eigenvalues and eigenfunctions;Control systems;Systems engineering and theory;Matrix decomposition},
  doi={10.1109/9.362841}}
\bibliographystyle{tmlr}

\newpage
\appendix
\renewcommand\thefigure{\thesection.\arabic{figure}}
\section{Linear stability trick on penultimate-layer and middle-layer features} \label{app:penlayer_trick}
We define $g(x)$ to be the penultimate-layer features such that $f(x) = \vW_L g(\vW x) + b$. With slight abuse of notation, we define $J = \f{\partial g(\vW \m{x})}{\partial \vW\m{x}}$. Similar to \Cref{eq:gradient_conj}, we have
\begin{equation} \label{eq:chain_rule}
    \begin{split}
        \norm{\nabla_{\vct{W}} f(\vct{W}\vct{x}; \bar{\bld{\theta}})}_F &= \norm{\vW_L J}_F\norm{\vct{x}}_2 \\
\nabla_{\vct{x}} g(\vct{W}\vct{x}; \bar{\bld{\theta}}) &= J\vct{W}\ ,
    \end{split}
\end{equation}
\begin{lemma}
$\norm{AB}_F^2 \geq \lam_{\min}(A^TA) \norm{B}_F^2$, where $\lam_{\min}$ is the smallest eigenvalue. 
\end{lemma}
\begin{proof}
By the definition of Frobenius norm,
\begin{equation}
\norm{AB}_F^2 = \Tr(AB B^TA^T) = \Tr(A^TABB^T).
\end{equation}
From \citet{fang1994inequalities}, we have that for positive semidefinite matrices $P$ and $Q$,
\begin{equation}
\lam_{\min}(P) \Tr(Q) \leq \Tr(PQ)
\end{equation}
Therefore, 
\begin{equation}
\Tr(A^TABB^T) \geq \lam_{\min}(A^TA) \Tr(BB^T) = \lam_{\min}(A^TA) \norm{B}_F^2
\end{equation}
\end{proof}
As a result, $\norm{\vW_L J}_F \geq \sqrt{\lam_{\min}(\vW_L^T \vW_L)} \norm{J}_F$. Let $d$ be the feature dimension, and $K$ be the number of classes, and $\vW_L\in \R^{K\times d}$. Then, $\lam_{\min}(\vW_L^T \vW_L)$ vanishes when $d>K$, otherwise $\sqrt{\lam_{\min}(\vW_L^T \vW_L)} = \sig_{\min}(\vW_L)$, the smallest singular value of $\vW_L$. Therefore, 

\begin{equation}\label{eq:app_nc}
    \begin{split}
    \norm{\nabla_{\vct{x}} g(\vct{W}\vct{x}; \bar{\bld{\theta}})}_F &= \norm{J \vW}_F\\
    &\leq \norm{J}_F \norm{\vW}_2\\
    &\leq \f{\norm{\vW_L J}_F}{\sqrt{\lam_{\min}(\vW_L^T \vW_L)}}\norm{\vW}_2\\
    &=\f{\norm{\nabla_{\vct{W}} f(\vct{W}\vct{x}; \bar{\bld{\theta}})}_F }{\norm{\vct{x}}_2\sqrt{\lam_{\min}(\vW_L^T \vW_L)}}\norm{\vW}_2
    \end{split}
\end{equation}
More generally, if we consider arbitrary intermediate-layer representations, we write $f(x) = h\circ g(\vW x)$, where $g$ is the transformation from linear transformed input to the intermediate-layer representations and $h$ is the mapping from the representations to the output of the network. Then \Cref{eq:chain_rule} becomes
\begin{equation} 
    \begin{split}
        \norm{\nabla_{\vct{W}} f(\vct{W}\vct{x}; \bar{\bld{\theta}})}_F &= \norm{J_h J_g}_F\norm{\vct{x}}_2 \\
\nabla_{\vct{x}} g(\vct{W}\vct{x}; \bar{\bld{\theta}}) &= J\vct{W}\ ,
    \end{split}
\end{equation}
where $J_g = \f{\partial g(\vW \m{x})}{\partial \vW\m{x}}$ and $J_h = \f{\partial f(\vW \m{x})}{\partial g(\vW\m{x})}$. Therefore, similar to \Cref{eq:app_nc}, we have
\begin{equation}
    \norm{\nabla_{\vct{x}} g(\vct{W}\vct{x}; \bar{\bld{\theta}})}_F =\f{\norm{\nabla_{\vct{W}} f(\vct{W}\vct{x}; \bar{\bld{\theta}})}_F }{\norm{\vct{x}}_2\sqrt{\lam_{\min}(J_h^T J_h)}}\norm{\vW}_2
\end{equation}

\section{Adaptation of Inequality \ref{eq:2-norm} to Residual Layers}\label{app:adapt}
We need to slightly adapt the proof in \Cref{eq:gradient_conj,eq:gradient_bound}. Consider a network whose first layer has a residual connection: $y = g(x + f(Wx))$, where $f$ is the nonlinearity with bias (e.g. $f(x) = \tanh(x + b)$), and $g$ is the rest of the mappings in the network. Then we have

\begin{equation}
\begin{split}    
\Norm{\nabla_W g(x + f(Wx))}_F &=\Norm{JK}_F\Norm{x}_2 \\
\nabla_x g(x + f(Wx)) &= J + JKW
\end{split}
\end{equation}

where $J = \frac{\partial g(x+f(Wx))}{\partial (x+f(Wx))}$ and $K = \frac{\partial f(Wx)}{\partial (Wx)}$. 

Therefore, $||\nabla_x g(x + f(Wx))||_2\leq ||J||_2 + ||JK||_2 ||W||_2 \leq ||J||_2 + \frac{||\nabla_W g(x + f(Wx))||_F}{||x||_2} ||W||_2$. Now, we get the bound for the \textit{difference} between MLS of input and the MLS of input to the next layer: 
\begin{equation}
\Norm{\nabla_x g(x + f(Wx))}_2 - \Norm{J}_2\leq \frac{\Norm{\nabla_W g(x + f(Wx))}_F}{\Norm{x}_2} \Norm{W}_2
\end{equation}
Notice that if we apply this inequality to every residual layer in the network, and sum the left-hand side, we will get a telescoping sum on the left-hand side. Assuming the last layer is linear with weights $W_L$, we get $||\nabla_x g(x + f(W_1x))||_2 - ||W_L||_2 \leq \sum_{l = 1}^{L-1}\frac{||W_l||_2}{||x_l||_2} ||\nabla_W g_l(x_l + f(W_lx_l))||_F$. The right-hand side is bounded by sharpness due to Cauchy, see also \Cref{eq:nmls_proof}.
\section{Proof of \Cref{eq:zloss_sharpness}} \label{app:sharpness}
\begin{lemma}\label{lem:zloss_sharpness}
If $\bld{\tet}$ is an approximate interpolation solution, i.e. $\Norm{f(\m{x}_i, \bld{\theta}) - \m{y}_i}<\eps$ for $i \in \{1,2,\cdots, n\}$, and second derivatives of the network function $\norm{\nabla_{\tet_j}^2 f(\m{x}_i, \bld{\tet})}<M$ is bounded, then
\begin{equation}
    S(\bld{\theta}^*)=\frac{1}{n}\sum_{i=1}^{n}\norm{\nabla_{\bld{\theta}}f(\vct{x}_i,\bld{\theta}^*)}_F^2 + O(\eps)
\end{equation}
\end{lemma}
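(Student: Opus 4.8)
The plan is to compute the Hessian of the quadratic loss directly and split it into the standard Gauss--Newton (outer-product) part and a residual part that the interpolation error controls. Writing the residual $\vct{r}_i = f(\vct{x}_i, \bld{\theta}) - \vct{y}_i$ and $L(\bld{\theta}) = \frac{1}{2n}\sum_{i=1}^{n}\|\vct{r}_i\|^2$, differentiating twice in the coordinates $\theta_j,\theta_k$ gives
\[
\frac{\partial^2 L}{\partial \theta_j \partial \theta_k} = \frac{1}{n}\sum_{i=1}^{n} \sum_{a=1}^{N} \left( \frac{\partial f_a(\vct{x}_i)}{\partial \theta_j}\frac{\partial f_a(\vct{x}_i)}{\partial \theta_k} + r_{i,a}\,\frac{\partial^2 f_a(\vct{x}_i)}{\partial \theta_j \partial \theta_k}\right),
\]
where $a$ indexes the $N$ output coordinates. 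This is the one genuine computation; everything that follows is bookkeeping.

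Next I would take the trace of $H$ by setting $k=j$ and summing over all $P$ parameters. The outer-product term collapses to $\frac{1}{n}\sum_i \sum_a \sum_j (\partial f_a/\partial \theta_j)^2 = \frac{1}{n}\sum_i \|\nabla_{\bld{\theta}} f(\vct{x}_i,\bld{\theta})\|_F^2$, which is exactly the claimed leading term (and reproduces \Cref{eq:zloss_sharpness} when the residuals vanish). It then remains only to show the residual term is $O(\eps)$. Combining the interpolation hypothesis $\|\vct{r}_i\|<\eps$ with Cauchy--Schwarz over the output index, so that $\sum_a |r_{i,a}| \le \sqrt{N}\,\|\vct{r}_i\| < \sqrt{N}\,\eps$, together with the bounded-curvature hypothesis $|\partial^2 f_a/\partial \theta_j^2| \le M$, I would bound
\[
\left| \frac{1}{n}\sum_{i=1}^{n} \sum_{a=1}^{N} r_{i,a} \sum_{j=1}^{P} \frac{\partial^2 f_a(\vct{x}_i)}{\partial \theta_j^2}\right| \;\le\; \sqrt{N}\,M\,P\,\eps \;=\; O(\eps).
\]
Applying this at $\bld{\theta}=\bld{\theta}^*$ (i.e.\ at any approximate solution satisfying the hypotheses) yields the stated equality.

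The only real subtlety is the residual term, so that is where I would be careful. The key point is that the prefactor multiplying $\eps$ must be a genuine constant, independent of which near-solution $\bld{\theta}$ we sit at: the interpolation bound controls $\|\vct{r}_i\|$ uniformly in $i$, while the uniform curvature bound $M$, for a network with a fixed finite number of parameters $P$ and outputs $N$, controls $\sum_j \partial^2 f_a/\partial \theta_j^2$. I would therefore state the hidden constant $\sqrt{N}\,MP$ explicitly, both so the reader sees that the approximation degrades with network size and so that it is clear Cauchy--Schwarz is what cleanly passes from the individual per-coordinate residuals $r_{i,a}$ (which are not separately controlled) to the $\ell^2$ norm $\|\vct{r}_i\|$ that the hypothesis actually bounds.
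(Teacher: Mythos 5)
Your proposal is correct and follows essentially the same route as the paper's proof: differentiate the quadratic loss twice, split the Hessian trace into the Gauss--Newton term $\frac{1}{n}\sum_i \norm{\nabla_{\bld{\theta}}f(\vct{x}_i,\bld{\theta})}_F^2$ plus a residual term, and bound the residual by Cauchy--Schwarz using the interpolation error and the curvature bound. The only difference is cosmetic: you work in explicit coordinates with a componentwise curvature bound (giving the constant $\sqrt{N}MP$), while the paper applies Cauchy--Schwarz directly to $(f(\vct{x}_i,\bld{\theta})-\vct{y}_i)^T\nabla^2_{\bld{\theta}_j}f$ with a vector-norm bound (giving $mM\eps$); both yield the claimed $O(\eps)$.
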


\begin{proof}
    Using basic calculus we get
    \begin{align*}
        S(\bld{\theta}) &= \Tr (\nabla^2 L(\bld{\theta}))\\
        &= \frac{1}{2n} \sum_{i = 1}^n\Tr (\nabla^2_{\bld{\tet}} \Norm{f(\m{x}_i, \bld{\theta}) - \m{y}_i}^2)\\
        &= \frac{1}{2n} \sum_{i = 1}^n\Tr \nabla_{\bld{\tet}}( 2(f(\m{x}_i, \bld{\theta}) - \m{y}_i)^T \nabla_{\bld{\tet}} f(\m{x}_i, \bld{\theta}))\\
        &=\frac{1}{n} \sum_{i = 1}^n\sum_{j = 1}^m \frac{\partial}{\partial \bld{\tet}_j}( (f(\m{x}_i, \bld{\theta}) - \m{y}_i)^T \nabla_{\bld{\tet}} f(\m{x}_i, \bld{\theta}))_j\\
        &=\frac{1}{n} \sum_{i = 1}^n\sum_{j = 1}^m \frac{\partial}{\partial \bld{\tet}_j}(f(\m{x}_i, \bld{\theta}) - \m{y}_i)^T \nabla_{\bld{\tet}_j} f(\m{x}_i, \bld{\theta})\\
        &=\frac{1}{n} \sum_{i = 1}^n\sum_{j = 1}^m \Norm{\nabla_{\bld{\tet}_j} f(\m{x}_i, \bld{\tet})}_2^2 + (f(\m{x}_i, \bld{\theta}) - \m{y}_i)^T \nabla^2_{\bld{\tet}_j} f(\m{x}_i, \bld{\theta})\\
        &= \frac{1}{n} \sum_{i = 1}^n \norm{\nabla_{\bld{\theta}}f(\vct{x}_i,\bld{\theta})}_F^2 + \frac{1}{n} \sum_{i = 1}^n\sum_{j = 1}^m(f(\m{x}_i, \bld{\theta}) - \m{y}_i)^T \nabla^2_{\bld{\tet}_j} f(\m{x}_i, \bld{\theta}).
    \end{align*}
    Therefore 
    \begin{equation}
        \left |S(\bld{\theta}) - \frac{1}{n} \sum_{i = 1}^n \norm{\nabla_{\bld{\theta}}f(\vct{x}_i,\bld{\theta})}_F^2\right | <\frac{1}{n} \sum_{i = 1}^n \sum_{j = 1}^m|(f(\m{x}_i, \bld{\theta}) - \m{y}_i)^T \nabla^2_{\bld{\tet}_j} f(\m{x}_i, \bld{\theta})| < mM\eps = O(\eps).
    \end{equation}
\end{proof}
In other words, when the network reaches zero training error and enters the interpolation phase (i.e., it fits all training data correctly), \Cref{eq:zloss_sharpness} will be a good enough approximation of the sharpness because the quadratic training loss is sufficiently small. 
\section{Proof of \Cref{eq:volume_ineq}, \Cref{prop:vol} and \Cref{prop:nvr}} \label{app:volume}

\subsection{Proof of \Cref{eq:volume_ineq}}
We prove the bound on the local volumetric ratio using the arithmetic-geometric mean inequality. Let $J = \nabla_\vct{x}f \in \mathbb{R}^{N \times M}$ with singular values $\sigma_1, \ldots, \sigma_N > 0$. By the AM-GM inequality applied to the squared singular values:
\begin{equation}
\begin{split}
\sqrt{\det(JJ^T)} &= \sqrt{\prod_{i=1}^N \sigma_i^2} = \prod_{i=1}^N \sigma_i \\
&\leq \left(\frac{1}{N}\sum_{i=1}^N \sigma_i^2\right)^{N/2} \quad \text{(AM-GM inequality)}\\
&= \left(\frac{\tr(JJ^T)}{N}\right)^{N/2} \quad \text{(eigenvalues of $JJ^T$ are $\sigma_i^2$)}\\
&= \left(\frac{\tr(J^TJ)}{N}\right)^{N/2} \quad \text{(trace identity: $\tr(AB) = \tr(BA)$)}\\
&= \left(\frac{\norm{J}_F^2}{N}\right)^{N/2} \quad \text{(Frobenius norm: $\norm{J}_F^2 = \tr(J^TJ)$)}.
\end{split}
\end{equation}
This completes the proof of \Cref{eq:volume_ineq}.

\subsection{Proof of \Cref{prop:vol} and \Cref{prop:nvr}}
For notational simplicity, we write \(f_i :=  f(\vct{x}_i, \bld{\theta}^*)\) in what follows. Because of \Cref{eq:gradient_bound}, we have the following inequality due to Cauchy-Swartz inequality, 
\begin{equation}
\begin{split}
    \frac{1}{n}\sum_{i=1}^{n}\norm{\nabla_{\vct{x}}f_i}_F^k &\leq  \norm{\vct{W}}^k_2 \frac{1}{n}\sum_{i=1}^{n} \f{\norm{\nabla_{\vct{W}} f_i}^k_F}{\norm{\vct{x}_i}^k_2} \\
    &\leq   \f{1}{n}\norm{\vct{W}}_2^k \sqrt{\sum_{i = 1}^n \f{1}{\norm{\m{x}_i}_2^{2k}}} \cdot \sqrt{ \sum_{i = 1}^n \norm{\nabla_\m{W} f_i}_F^{2k}}.
\end{split}
\end{equation}
Since the input weights $\m{W}$ is just a part of all the weights ($\bld{\tet}$) of the network, we have $\norm{\nabla_{\vct{W}} f_i}^k_F\leq \norm{\nabla_{\bld{\tet}} f_i}^k_F$. 

We next show the correctness of \Cref{prop:vol} with a standard lemma.

\begin{lemma} \label{lem:norm_mono}
    For vector $\m{x}$, $\norm{\m{x}}_p\geq \norm{\m{x} }_q$ for $1\leq p \leq q\leq \infty$. 
\end{lemma}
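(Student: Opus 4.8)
The plan is to prove the monotonicity of the $\ell_p$ norms by reducing to a normalized vector and exploiting the elementary fact that raising a number in $[0,1]$ to a larger exponent only makes it smaller. First I would dispose of the trivial case $\m{x} = \m{0}$, where both sides vanish and the inequality holds with equality. I would also treat the endpoint $q = \infty$ separately, since there $\norm{\m{x}}_\infty = \max_i |x_i|$ is itself one of the summands appearing in $\norm{\m{x}}_p^p = \sum_i |x_i|^p$, giving $\norm{\m{x}}_\infty \le \norm{\m{x}}_p$ at once. It therefore suffices to handle finite exponents $1 \le p \le q < \infty$.

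For the main case I would invoke the positive homogeneity shared by every $\ell_r$ norm in order to normalize. Setting $\m{y} = \m{x}/\norm{\m{x}}_p$ gives $\norm{\m{y}}_p = 1$, so in particular each coordinate satisfies $|y_i| \le 1$. Because $q \ge p$ and $|y_i| \in [0,1]$, we have $|y_i|^q \le |y_i|^p$ coordinatewise. Summing over $i$ yields $\sum_i |y_i|^q \le \sum_i |y_i|^p = 1$, and taking the $q$-th root gives $\norm{\m{y}}_q \le 1 = \norm{\m{y}}_p$. Multiplying back through by the scalar $\norm{\m{x}}_p$ and using homogeneity once more recovers $\norm{\m{x}}_q \le \norm{\m{x}}_p$, which is exactly the claim.

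There is essentially no hard step here; the only point demanding care is the bookkeeping at the endpoints together with ensuring the coordinatewise comparison $|y_i|^q \le |y_i|^p$ is legitimate, which requires precisely that each base be at most $1$ — and that is exactly what normalizing in the $p$-norm guarantees. As an alternative I might remark that the inequality also follows in one line from the concavity of $t \mapsto t^{p/q}$ via Jensen's inequality, but the elementary normalization argument above is the cleanest route and avoids appealing to any external inequality.
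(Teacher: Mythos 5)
Your proof is correct, but it takes a genuinely different route from the paper's. The paper first establishes, by a calculus argument on $f(t) = (1+t)^k - t^k - 1$ and then induction, the subadditivity fact $(\sum_n |a_n|)^k \leq \sum_n |a_n|^k$ for $0 < k < 1$, and then obtains the norm inequality by writing $\norm{\m{x}}_q = \left(\sum_n |x_n|^q\right)^{(p/q)\cdot(1/p)}$ and applying that fact with $k = p/q$. You instead use the standard normalization trick: scale so that $\norm{\m{y}}_p = 1$, observe each $|y_i| \le 1$ so that $|y_i|^q \le |y_i|^p$ coordinatewise, sum, take roots, and rescale by homogeneity. Your argument is more elementary — it needs no calculus lemma and no induction, only the monotonicity of powers on $[0,1]$ — and it is also more complete at the boundary: you explicitly dispose of $\m{x} = \m{0}$ and of $q = \infty$, whereas the paper's proof implicitly assumes $q$ finite (the sum $\sum_n |x_n|^q$ is meaningless at $q = \infty$) and nonzero entries play no special role. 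What the paper's route buys in exchange is the intermediate inequality $(\sum_n |a_n|)^k \le \sum_n |a_n|^k$ as a standalone tool, which is the form of concave subadditivity that recurs in this general circle of estimates; your normalization argument proves the norm comparison directly without isolating that reusable fact. Both are valid and complete proofs of the lemma as stated.
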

\begin{proof}
    First we show that for $0<k<1$, we have $(|a|+|b|)^k \leq |a|^k + |b|^k$. It's trivial when either $a$ or $b$ is 0. So W.L.O.G, we can assume that $|a|<|b|$, and divide both sides by $|b|^k$. Therefore it suffices to show that for $0<t<1$, $(1+t)^k<t^k + 1$. Let $f(t) = (1+t)^k -t^k -1$, then $f(0) = 0$, and $f'(t) = k(1+t)^{k-1} - kt^{k-1}$. Because $k-1<0$, $1+t>1$ and $t<1$, $t^{k-1}>1>(1+t)^{k-1}$. Therefore $f'(t)<0$ and $f(t)<0$ for $0<t<1$. Combining all cases, we have $(|a|+|b|)^k \leq |a|^k + |b|^k$ for $0<k<1$. By induction, we have $(\sum_n |a_n|)^k\leq \sum_n |a_n|^k$.

    Now we can prove the lemma using the conclusion above, 
    \begin{equation*}
        \left(\sum_n |x_n|^q\right)^{1/q} = \left(\sum_n |x_n|^q\right)^{p/q\cdot 1/p}  \leq \left(\sum_{n}(|x_n|^q)^{p/q}\right)^{1/p} = \left(\sum_{n}|x_n|^p\right)^{1/p}
    \end{equation*}
\end{proof}

Now we can prove \Cref{prop:vol}
\begin{prop*}
The local volumetric ratio is upper bounded by a sharpness related quantity:
\begin{equation}
dV_{f(\bld{\theta}^*)}\leq \frac{N^{-N/2}}{n}\sum_{i = 1}^n\norm{\nabla_{\vct{x}}f(\vct{x}_i,\bld{\theta}^*)}_F^N\leq \f{1}{n} \sqrt{\sum_{i = 1}^n \f{\norm{\vct{W}}^{2N}_2}{\norm{\m{x}_i}_2^{2N}}}\left(\frac{nS(\bld{\theta}^*)}{N}\right)^{N/2}\
\end{equation}
for all $N\geq 1$.   
\end{prop*}
\begin{proof}  
Take the $x_i$ in Lemma \ref{lem:norm_mono} to be $\norm{\nabla_{\bld{\tet}} f(\m{x}_i, \bld{\tet}^*)}_F^{2}$ and let $p = 1, q = k$, then we get
\begin{equation}
    \left(\sum_{i=1}^{n} (\norm{\nabla_{\bld{\tet}} f_i}^2_F)^{k}\right)^{1/k} \leq \sum_{i=1}^{n} \norm{\nabla_{\bld{\tet}} f_i}^2_F.
\end{equation}
Therefore, 
\begin{equation}\label{eq:case1}
\begin{split}
    \f{1}{n}\norm{\vct{W}}_2^k \sqrt{\sum_{i = 1}^n \f{1}{\norm{\m{x}_i}_2^{2k}}} \cdot \sqrt{ \sum_{i = 1}^n \norm{\nabla_\m{W} f_i}_F^{2k}}&\leq n^{k/2-1}\norm{\vct{W}}^k_2 \sqrt{\sum_{i = 1}^n \f{1}{\norm{\m{x}_i}_2^{2k}}}\left(\frac{1}{n}\sum_{i=1}^{n} \norm{\nabla_{\bld{\tet}} f_i}^2_F\right)^{k/2}\\
    &= n^{k/2-1}\norm{\vct{W}}^k_2 \sqrt{\sum_{i = 1}^n \f{1}{\norm{\m{x}_i}_2^{2k}}}S(\bld{\tet}^*)^{k/2}
\end{split}
\end{equation}
\end{proof}

Next, we show that the first inequality in \Cref{eq:case1} can be tightened by considering all linear layer weights.
\begin{prop*}
    The network volumetric ratio is upper bounded by a sharpness related quantity:
\begin{equation}
     \sum_{l = 1}^L dV_{f_l}\leq \frac{N^{-N/2}}{n}\sum_{l = 1}^L\sum_{i=1}^{n}\norm{\nabla_{\vct{x}^l}f_i^l}_F^N \leq  \f{1}{n}\sqrt{\sum_{l = 1}^L\sum_{i = 1}^n \f{\norm{\vct{W}_l}_2^{2N}}{\norm{\m{x}_i^l}_2^{2N}}} \cdot  \left(\frac{nS(\bld{\theta}^*)}{N}\right)^{N/2}.
\end{equation}
\end{prop*}
\begin{proof}
Recall that the input to the \(l\)-th linear layer is \(x_i^l\) for \(l = 1, 2,\cdots, L\). In particular, $x_i^1$ is the input of the entire network. Similarly, \(\m{W}_l\) is the weight matrix of \(l\)-th linear/convolutional layer. With a slight abuse of notation, we use \(f^l\) to denote the mapping from the activity of \(l\)-th layer to the final output, and \(f_i^l :=  f^l(\vct{x}_i, \bld{\theta}^*)\). We can apply Cauchy-Swartz inequality again to get

\begin{equation} \label{eq:nvol}
\begin{split}
    \frac{1}{n}\sum_{l = 1}^L\sum_{i=1}^{n}\norm{\nabla_{\vct{x}^l}f_i^l}_F^k &\leq   \f{1}{n} \sum_{l = 1}^L\sqrt{\sum_{i = 1}^n \f{\norm{\vct{W}_l}_2^{2k}}{\norm{\m{x}_i^l}_2^{2k}}} \cdot \sqrt{ \sum_{i = 1}^n \norm{\nabla_{\m{W}_l} f_i^l}_F^{2k}}\\
    &\leq \sqrt{\f{1}{n}\sum_{l = 1}^L\sum_{i = 1}^n \f{\norm{\vct{W}_l}_2^{2k}}{\norm{\m{x}_i^l}_2^{2k}}} \cdot \sqrt{ \f{1}{n}\sum_{l = 1}^L\sum_{i = 1}^n \norm{\nabla_{\m{W}_l} f_i^l}_F^{2k}}.\\
\end{split}
\end{equation}

Using Lemma \ref{lem:norm_mono} again we have
\begin{equation}
\begin{split}
    \left(\sum_{l=1}^{L} (\norm{\nabla_{\bld{W}_l} f_i^l}^2_F)^{k}\right)^{1/k} \leq \sum_{l=1}^{L} \norm{\nabla_{\bld{W}_l} f_i^l}^2_F = \norm{\nabla_{\bld{\tet}} f_i}^2_F,\\
    \left(\sum_{i=1}^{n} (\norm{\nabla_{\bld{\tet}} f_i}^2_F)^{k}\right)^{1/k} \leq \sum_{i=1}^{n} \norm{\nabla_{\bld{\tet}} f_i}^2_F = nS(\bld{\tet}^*),
\end{split}
\end{equation}
The second equality holds because both sides represent the same gradients in the computation graph. Therefore from \Cref{eq:nvol}, we have 
\begin{equation}
    \frac{1}{n}\sum_{l = 1}^L\sum_{i=1}^{n}\norm{\nabla_{\vct{x}^l}f_i^l}_F^k \leq  \sqrt{\f{1}{n}\sum_{l = 1}^L\sum_{i = 1}^n \f{\norm{\vct{W}_l}_2^{2k}}{\norm{\m{x}_i^l}_2^{2k}}} \cdot \sqrt{ n^{k-1} S(\bld{\tet}^*)^k}
\end{equation}
\end{proof}



\section{Proof of \Cref{prop:mls} and \Cref{prop:nmls}} \label{app:mls}
Below we give the proof of \Cref{prop:mls}.
\begin{prop*}
The maximum local sensitivity is upper bounded by a sharpness related quantity:
\begin{equation}
    \mls= \f{1}{n}\sum_{i = 1}^n \norm{\nabla_\m{x}f(\m{x}_i, \bt^*)}_2 \leq \norm{\vct{W}}_2\sqrt{\f{1}{n}\sum_{i = 1}^n \f{1}{\norm{\m{x}_i}_2^2}} S(\bld{\theta}^*)^{1/2}\ .
\end{equation}    
\end{prop*}
\begin{proof}
From \Cref{eq:gradient_bound}, we get
\begin{equation} \label{eq:mls_first}
    \mls = \f{1}{n} \sum_{i = 1}^n \norm{\nabla_\m{x} f_i}_2\leq  \norm{\vct{W}}_2 \frac{1}{n}\sum_{i=1}^{n} \f{\norm{\nabla_{\vct{W}} f_i}_F}{\norm{\vct{x}_i}_2}.
\end{equation}
Now the Cauchy-Schwarz inequality tells us that 
\begin{equation}
    \left(\sum_{i = 1}^n \f{\norm{\nabla_\m{W} f_i}}{\norm{\m{x}_i}_2}\right)^2\leq \left(\sum_{i = 1}^n \f{1}{\norm{\m{x}_i}_2^2}\right)\cdot \left(\sum_{i = 1}^n \norm{\nabla_\m{W} f_i}^2_F\right).
\end{equation}
Therefore 
\begin{equation} \label{eq:mls_deriv}
\begin{split}
    \mls &\leq  \norm{\vct{W}}_2 \sqrt{\f{1}{n}\sum_{i = 1}^n \f{1}{\norm{\m{x}_i}_2^2}} \cdot \sqrt{\f{1}{n} \sum_{i = 1}^n \norm{\nabla_\m{W} f_i}_F^2}\\
    &\leq \norm{\vct{W}}_2 \sqrt{\f{1}{n}\sum_{i = 1}^n \f{1}{\norm{\m{x}_i}_2^2}} \cdot S(\bt^*)^{1/2}.
\end{split}
\end{equation}
\end{proof}
Now we can prove \Cref{prop:nmls}.
\begin{prop*}
    The network maximum local sensitivity is upper bounded by a sharpness related quantity:
\begin{equation} 
    \nmls = \f{1}{n}\sum_{l = 1}^L\sum_{i = 1}^n \norm{\nabla_{\m{x}^l}f^l(\m{x}_i^l, \bt^*)}_2 \leq \sqrt{\f{1}{n}\sum_{i = 1}^n \sum_{l = 1}^L\f{\norm{\vct{W}_l}_2^2}{\norm{\m{x}_i^l}^2}} \cdot S(\bld{\tet}^*)^{1/2}.
\end{equation}
\end{prop*}
\begin{proof}
We can apply \Cref{eq:mls_deriv} to every linear layer and again apply the Cauchy-Schwarz inequality to obtain
\begin{equation} \label{eq:nmls_proof}
\begin{split}
    \nmls &=  \f{1}{n}\sum_{l = 1}^L\sum_{i = 1}^n \norm{\nabla_\m{x}f_l(\m{x}_i^l, \bt^*)}_2\\ 
    &\leq  \sum_{l = 1}^L \left( \sqrt{\f{1}{n}\sum_{i = 1}^n \f{\norm{\vct{W}_l}_2^2 }{\norm{\m{x}_i^l}_2^2}} \sqrt{\f{1}{n} \sum_{i = 1}^n \norm{\nabla_{\m{W}_l} f_i^l}_F^2}\, \right)\\
    &\leq \sqrt{\f{1}{n}\sum_{i = 1}^n \sum_{l = 1}^L\f{\norm{\vct{W}_l}_2^2}{\norm{\m{x}_i^l}_2^2}} \sqrt{\f{1}{n} \sum_{i = 1}^n \sum_{l = 1}^L \norm{\nabla_{\m{W}_l} f_i^l}_F^2}\\
    &\leq  \sqrt{\f{1}{n}\sum_{i = 1}^n \sum_{l = 1}^L\f{\norm{\vct{W}_l}_2^2}{\norm{\m{x}_i^l}_2^2}} \cdot S(\bld{\tet}^*)^{1/2}.\\
\end{split}
\end{equation}
Note that the gap in the last inequality is significantly smaller than that of \Cref{eq:mls_deriv} since now we consider all linear weights. 
\end{proof}

\section{Reparametrization-invariant sharpness and input-invariant MLS} \label{app:invariant}

\revised{In this section, we demonstrate that various reparametrization-invariant sharpness metrics proposed in the literature can be understood as measuring robustness of outputs to inputs and internal representations through MLS-like quantities. As discussed in Section 3.4, this provides a novel perspective: reparametrization-invariant sharpness is fundamentally characterized by the robustness of outputs to internal neural representations. We examine two different approaches from the literature—\citet{tsuzuku2019normalized}'s matrix-normalized and normalized sharpness (Section \ref{app:tsuzuku}), and \citet{kwon2021asam, andriushchenko2023modern}'s elementwise-adaptive sharpness (Section \ref{app:elementwise})—showing how each relates to different variants of MLS.}

\subsection{Reparametrization-invariant sharpness in \citet{tsuzuku2019normalized}} \label{app:tsuzuku}
In this appendix, we show that the reparametrization-invariant sharpness metrics introduced in \citet{tsuzuku2019normalized} can be seen as an effort to tighten the bound that we derived above. We formalize two metrics from their work and establish their connections to MLS.

\revised{
\begin{defi}[Matrix-normalized sharpness \citep{tsuzuku2019normalized}]
The \textbf{matrix-normalized sharpness} of a neural network is defined as
\begin{equation}
    S_{\text{matrix}}(\bld{\theta}) = \sum_{l = 1}^L \norm{\vct{W}_l}_2\sqrt{S(\m{W}_l)},
\end{equation}
where $S(\m{W}_l)$ is the trace of Hessian of the loss w.r.t. the weights of the $l$-th layer, and $\norm{\vct{W}_l}_2$ is the spectral norm of the weight matrix at layer $l$.
\end{defi}
}

\revised{
\begin{prop}
The matrix-normalized sharpness upper-bounds weighted MLS across layers:
\begin{equation} \label{eq:matrix_norm_sharpness}
    \sum_{l = 1}^L \overline{\m{x}}^l \cdot \mls^l \leq S_{\text{matrix}}(\bld{\theta}),
\end{equation}
where $\overline{\m{x}}^l = \left(\f{1}{n}\sum_{i = 1}^n \f{1}{\norm{\m{x}_i^l}_2^2}\right)^{-\f{1}{2}}$ is the inverse of the average input norm at layer $l$, and $\mls^l$ is the MLS at layer $l$.
\end{prop}
}

\revised{
\begin{proof}
From \Cref{eq:mls_deriv} we have
\begin{equation}
    \sum_{l = 1}^L \overline{\m{x}}^l \cdot \mls^l \leq \sum_{l = 1}^L \norm{\vct{W}_l}_2 \sqrt{\f{1}{n} \sum_{i = 1}^n \norm{\nabla_{\m{W}_l} f_i^l}_F^2} \approx \sum_{l = 1}^L \norm{\vct{W}_l}_2\sqrt{S(\m{W}_l)} = S_{\text{matrix}}(\bld{\theta}).
\end{equation}
Note that a similar inequality holds if we use Frobenius norm instead of 2-norm of the weights.
\end{proof}
} 

\revised{
\begin{defi}[Normalized sharpness \citep{tsuzuku2019normalized}]
The \textbf{normalized sharpness} of a neural network is defined as the solution to the optimization problem:
\begin{equation} \label{eq:opt}
    S_{\text{normalized}}(\bld{\theta}) = \min_{\bld{\sigma}, \bld{\sigma'}} \sum_{i, j} \left( \frac{\partial^2 L}{\partial W_{i,j} \partial W_{i,j}} (\sigma_i \sigma_j')^2 + \frac{W_{i,j}^2}{4\lam^2(\sigma_i \sigma_j')^2} \right),
\end{equation}
where $\bld{\sigma}, \bld{\sigma'}$ are scaling vectors and $\lam$ is a hyperparameter.
\end{defi}
}

\revised{
\begin{prop}
The normalized sharpness optimization problem is equivalent to minimizing an upper bound on a scale-invariant MLS-like quantity:
\begin{equation}
    S_{\text{normalized}}(\bld{\theta}) \geq \f{1}{\lam} \norm{\diag(\bld{\sig}'^{-1})\nabla_\m{x} f}_F \norm{\diag(\bld{\sig}') \m{x}}_2,
\end{equation}
where the quantity on the right is invariant under the transformation $\m{Wx} \to \m{W}\diag(\bld{\sig}^{-1})(\diag(\bld{\sig})\m{x})$.
\end{prop}
}

\revised{
\begin{proof}
Note that by Lemma \ref{lem:zloss_sharpness}, $\frac{\partial^2 L}{\partial W_{i,j} \partial W_{i,j}} \approx \norm{\nabla_{\m{W}_{i,j}} f}^2_2$. Moreover, we have 
\begin{equation} \label{eq:opt_solv}
\begin{split}
    \sum_{i, j} \left( \norm{\nabla_{\m{W}_{i,j}} f}^2 (\sigma_i \sigma_j')^2 + \frac{W_{i,j}^2}{4\lam^2(\sigma_i \sigma_j')^2} \right)&\geq \f{1}{\lam}\sqrt{\sum_{i, j} (\nabla_{\m{W}_{i,j}} f)^2 (\sigma_i \sigma_j')^2}\cdot\sqrt{\sum_{i, j}  \frac{W_{i,j}^2}{(\sigma_i \sigma_j')^2}} \\
    &\geq \f{1}{\lam} \norm{\diag(\bld{\sig}) J}_F \norm{\diag(\bld{\sig}') \m{x}}_2 \norm{\diag(\bld{\sig}^{-1})\m{W}\diag(\bld{\sig}'^{-1})}_F\\
    &\geq  \f{1}{\lam} \norm{\diag(\bld{\sig}'^{-1})W^TJ}_F \norm{\diag(\bld{\sig}') \m{x}}_2 \\
    &= \f{1}{\lam} \norm{\diag(\bld{\sig}'^{-1})\nabla_\m{x} f}_F \norm{\diag(\bld{\sig}') \m{x}}_2,
\end{split}
\end{equation}
where $J = \frac{\partial f(\vct{W}\vct{x}; \bar{\bld{\theta}})}{\partial (\vct{W}\vct{x})}$ (see some of the calculations in \Cref{eq:gradient_conj}).
Therefore, the optimization problem \Cref{eq:opt} is equivalent to choosing $\bld{\sig}, \bld{\sig'}$ to minimize the upper bound on a scale-invariant MLS-like quantity (the quantity is invariant under the transformation of the first layer from $\m{Wx}$ to $\m{W}\diag(\bld{\sig}^{-1})(\diag(\bld{\sig})\m{x})$, where $\diag(\bld{\sig})\m{x}$ becomes the new input).
\end{proof}
}

For simplicity, we do not scale the original dataset in our work and only compare MLS within the same dataset. As a result, we can characterize those reparametrization-invariant sharpness metrics by the robustness of output to the input. If we consider all linear weights in the network, then those metrics indicate the robustness of output to internal network representations.

\revised{In summary, we have shown that \citet{tsuzuku2019normalized}'s reparametrization-invariant sharpness metrics can be interpreted through our MLS framework: their matrix-normalized sharpness corresponds exactly to the upper bound on weighted MLS across layers, while their normalized sharpness minimizes a scale-invariant MLS-like quantity. This establishes that these metrics, while designed for reparametrization invariance, are fundamentally measuring the robustness of outputs to inputs and internal representations.}

\subsection{Reparametrization-invariant sharpness upper-bounds input-invariant MLS} \label{app:elementwise}

\revised{Having shown that \citet{tsuzuku2019normalized}'s metrics tighten our standard MLS/NMLS bounds, we now examine a different approach to reparametrization-invariant sharpness. In this subsection, we prove a formal result relating elementwise-adaptive sharpness from \citet{kwon2021asam, andriushchenko2023modern} to a different variant of MLS: input-invariant MLS. This demonstrates that different reparametrization-invariant sharpness metrics from the literature can be understood as measuring robustness of outputs to inputs through different MLS variants.}

In this appendix, we consider the adaptive average-case n-sharpness considered in \citet{kwon2021asam, andriushchenko2023modern}:
\begin{equation}\label{eq:adap_sharp}
    S^{\rho}_{\text{avg}}(\mathbf{w}, |\m{w}|) \triangleq \f{2}{\rho^2}\mathbb{E}_{S \sim P_n, \\ \delta \sim \mathcal{N}(0, \rho^2 \text{diag}(\mathbf{|w|}^2))} \left[ L_S(\mathbf{w} + \delta) - L_S(\mathbf{w}) \right],
\end{equation}
which is shown to be \textit{elementwise} adaptive sharpness in \citet{andriushchenko2023modern}. They also show that for a thrice differentiable loss, $L(w)$, the average-case elementwise adaptive sharpness can be written as 
\begin{equation} \label{eq:sadapt_approx}
    S^{\rho}_{\text{avg}}(\mathbf{w}, |\mathbf{w}|) = \mathbb{E}_{S \sim P_n} \left[ \Tr\left( \nabla^2 L_S(\mathbf{w}) \odot |\mathbf{w}| |\mathbf{w}|^{\top} \right) \right] + O(\rho).
\end{equation}
\begin{defi}
We define the \textbf{Elementwise-Adaptive Sharpness} $\sadapt$ to be 
\begin{equation}    
\sadapt(\m{w})\triangleq\lim_{\rho\to 0}S^{\rho}_{\text{avg}}(\mathbf{w}, |\mathbf{w}|) = \mathbb{E}_{S \sim P_n} \left[ \Tr\left( \nabla^2 L_S(\mathbf{w}) \odot |\mathbf{w}| |\mathbf{w}|^{\top} \right)\right]
\end{equation}
\end{defi}
In this appendix, we focus on the property of $\sadapt$ instead of the approximation \Cref{eq:sadapt_approx}. Adapting the proof of $\Cref{lem:zloss_sharpness}$, we have the following lemma. 
\begin{lemma}\label{lem:adapt_sharp}
    If $\bld{\tet}$ is an approximate interpolation solution, i.e. $\Norm{f(\m{x}_i, \bld{\theta}) - \m{y}_i}<\eps$ for $i \in \{1,2,\cdots, n\}$, $|\btet_j|^2\norm{\nabla_{\tet_j}^2 f(\m{x}_i, \bld{\tet})}<M$ for all $j$, and $L$ is MSE loss, then
\begin{equation}
    S_{\text{adaptive}}(\bld{\theta}^*) = \frac{1}{n} \sum_{i = 1}^n\sum_{j = 1}^m |\btet_j|^2\Norm{\nabla_{\bld{\tet}_j} f(\m{x}_i, \bld{\tet})}_2^2 + O(\eps),
\end{equation}
where $m$ is the number of parameters.
\end{lemma}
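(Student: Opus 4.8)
The plan is to follow the proof of \Cref{lem:zloss_sharpness} almost verbatim, with the single modification of carrying through the weight $|\btet_j|^2$ that the Hadamard product with $|\btet||\btet|^\top$ places on each diagonal Hessian entry. First I would reduce the expectation to the full training loss: since $\Tr(\cdot)$ and $\odot$ are linear and $\mathbb{E}_{S\sim P_n}[\nabla^2 L_S(\btet)] = \nabla^2 L(\btet)$ for the full empirical risk $L$, we get $\sadapt(\btet) = \Tr\left(\nabla^2 L(\btet) \odot |\btet||\btet|^\top\right)$. Because $|\btet||\btet|^\top$ has $(j,k)$ entry $|\btet_j||\btet_k|$, the Hadamard product merely rescales entries, and taking the trace isolates the diagonal, so
\[
\sadapt(\btet) = \sum_{j=1}^m |\btet_j|^2 \left(\nabla^2 L(\btet)\right)_{jj} = \sum_{j=1}^m |\btet_j|^2 \frac{\partial^2 L}{\partial \btet_j^2}.
\]

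Next I would reuse the per-coordinate second-derivative identity already computed inside the proof of \Cref{lem:zloss_sharpness}. For the MSE loss $L = \frac{1}{2n}\sum_i \Norm{f(\m{x}_i,\btet) - \m{y}_i}^2$, differentiating twice in $\btet_j$ gives
\[
\frac{\partial^2 L}{\partial \btet_j^2} = \frac{1}{n}\sum_{i=1}^n \left[\Norm{\nabla_{\btet_j} f(\m{x}_i,\btet)}_2^2 + (f(\m{x}_i,\btet) - \m{y}_i)^\top \nabla^2_{\btet_j} f(\m{x}_i,\btet)\right].
\]
Substituting this into the expression for $\sadapt(\btet)$ and separating the two summands produces exactly the claimed leading term $\frac{1}{n}\sum_{i}\sum_{j} |\btet_j|^2 \Norm{\nabla_{\btet_j} f(\m{x}_i,\btet)}_2^2$ together with a remainder $R = \frac{1}{n}\sum_{i}\sum_{j} |\btet_j|^2 (f_i - \m{y}_i)^\top \nabla^2_{\btet_j} f_i$.

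It then remains to bound $R$ by $O(\eps)$. For this I would apply Cauchy--Schwarz to each inner product, $|(f_i - \m{y}_i)^\top \nabla^2_{\btet_j} f_i| \leq \Norm{f_i - \m{y}_i}\,\Norm{\nabla^2_{\btet_j} f_i}$, and then invoke the interpolation hypothesis $\Norm{f_i - \m{y}_i} < \eps$ together with the modified boundedness assumption $|\btet_j|^2 \Norm{\nabla^2_{\btet_j} f_i} < M$. Each of the $mn$ summands is then at most $M\eps$, so $|R| < mM\eps = O(\eps)$, which closes the argument. The proof presents no genuine obstacle beyond the bookkeeping already done for \Cref{lem:zloss_sharpness}; the one point that genuinely requires care is recognizing that the hypothesis is stated as the \emph{weighted} bound $|\btet_j|^2 \Norm{\nabla^2_{\btet_j} f} < M$ rather than the unweighted $\Norm{\nabla^2_{\btet_j} f} < M$ of \Cref{lem:zloss_sharpness}, and that this is precisely the quantity needed to absorb the extra $|\btet_j|^2$ factor so that the remainder stays $O(\eps)$ uniformly across coordinates.
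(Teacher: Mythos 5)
Your proposal is correct and follows essentially the same route as the paper's own proof: expand the Hadamard-weighted trace to isolate the $|\btet_j|^2$-weighted diagonal Hessian entries, differentiate the MSE loss twice per coordinate to split off the gradient-squared term, and bound the residual $\frac{1}{n}\sum_{i,j}|\btet_j|^2(f_i-\m{y}_i)^\top\nabla^2_{\btet_j}f_i$ by $mM\eps$ via Cauchy--Schwarz, the interpolation hypothesis, and the weighted curvature bound. Your closing observation --- that the hypothesis deliberately weights the second-derivative bound by $|\btet_j|^2$ precisely so the Hadamard factor is absorbed --- is exactly the point of the paper's adaptation of \Cref{lem:zloss_sharpness}.
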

\begin{proof}
    Using basic calculus we get
    \begin{align*}
        \sadapt(\bld{\theta}) &= \frac{1}{2n} \sum_{i = 1}^n\Tr (\nabla^2_{\bld{\tet}} \Norm{f(\m{x}_i, \bld{\theta}) - \m{y}_i}^2\odot |\btet| |\btet|^{\top} )\\
        &= \frac{1}{2n} \sum_{i = 1}^n\Tr \nabla_{\bld{\tet}}( 2(f(\m{x}_i, \bld{\theta}) - \m{y}_i)^T \nabla_{\bld{\tet}} f(\m{x}_i, \bld{\theta}))\odot |\btet| |\btet|^{\top} \\
        &=\frac{1}{n} \sum_{i = 1}^n\sum_{j = 1}^m |\btet_j|^2\frac{\partial}{\partial \bld{\tet}_j}( (f(\m{x}_i, \bld{\theta}) - \m{y}_i)^T \nabla_{\bld{\tet}} f(\m{x}_i, \bld{\theta}))_j\\
        &=\frac{1}{n} \sum_{i = 1}^n\sum_{j = 1}^m |\btet_j|^2\frac{\partial}{\partial \bld{\tet}_j}(f(\m{x}_i, \bld{\theta}) - \m{y}_i)^T \nabla_{\bld{\tet}_j} f(\m{x}_i, \bld{\theta})\\
        &=\frac{1}{n} \sum_{i = 1}^n\sum_{j = 1}^m |\btet_j|^2\Norm{\nabla_{\bld{\tet}_j} f(\m{x}_i, \bld{\tet})}_2^2 + |\btet_j|^2(f(\m{x}_i, \bld{\theta}) - \m{y}_i)^T \nabla^2_{\bld{\tet}_j} f(\m{x}_i, \bld{\theta})\\
        &=\frac{1}{n} \sum_{i = 1}^n\sum_{j = 1}^m |\btet_j|^2\Norm{\nabla_{\bld{\tet}_j} f(\m{x}_i, \bld{\tet})}_2^2 + \frac{1}{n} \sum_{i = 1}^n \sum_{j = 1}^m|\btet_j|^2(f(\m{x}_i, \bld{\theta}) - \m{y}_i)^T \nabla^2_{\bld{\tet}_j} f(\m{x}_i, \bld{\theta})\\
    \end{align*}
    Therefore 
    \begin{equation}
        \left |\sadapt(\bld{\theta}) - \frac{1}{n} \sum_{i = 1}^n \sum_{j = 1}^m |\btet_j|^2\Norm{\nabla_{\bld{\tet}_j} f(\m{x}_i, \bld{\tet})}_2^2\right | < \frac{1}{n} \sum_{i = 1}^n \sum_{j = 1}^m|(f(\m{x}_i, \bld{\theta}) - \m{y}_i)^T |\btet_j|^2\nabla^2_{\bld{\tet}_j} f(\m{x}_i, \bld{\theta})| < mM\eps = O(\eps).
    \end{equation}
\end{proof}
\begin{defi} \label{def:iimls}
    We define the \textbf{Input-invariant MLS} of a network $f:\R^M\to \R^N$ to be 
    \begin{equation}
        \f{1}{n}\sum_{i = 1}^n\sum_{p = 1}^N\Norm{\nabla_{x_p^{(i)}}f}_2^2(x_p^{(i)})^2\ ,
    \end{equation}
    where $x_p^{(i)}$ is the p-th entry of i-th training sample. 
\end{defi}
It turns out that again the adaptive sharpness upper bounds the input-invariant MLS.
\begin{prop}
    Assuming that the condition of \Cref{lem:adapt_sharp} holds, then elementwise-adaptive sharpness upper-bounds input-invariant MLS:
    \begin{equation}
    \f{1}{n}\sum_{i = 1}^n\sum_{j = 1}^m |\btet_j|^2\Norm{\nabla_{\bld{\tet}_j} f(\m{x}_i, \bld{\tet})}_2^2\geq \f{1}{nd}\sum_{i = 1}^n\sum_{p = 1}^N\Norm{\nabla_{x_p^{(i)}}f}_2^2(x_p^{(i)})^2
    \end{equation}
\end{prop}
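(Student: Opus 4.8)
The plan is to exploit the fact that the left-hand side sums over \emph{all} parameters $\bld{\tet}_j$, so by nonnegativity of every summand it dominates the partial sum that keeps only the first-layer weights $\m{W}$. Hence it suffices to prove the claimed inequality with the left-hand side replaced by the restricted quantity $\f{1}{n}\sum_{i}\sum_{j,k} W_{jk}^2\norm{\nabla_{W_{jk}} f_i}_2^2$; dropping all other parameter blocks can only help.

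Next I would reuse the chain-rule bookkeeping of \Cref{eq:gradient_conj}. Writing $f(\m{x};\bld{\tet}) = g(\m{W}\m{x};\bar{\bld{\tet}})$ and letting $J_j := \partial f/\partial(\m{W}\m{x})_j \in \R^M$ be the $j$-th column block of the first-layer Jacobian, the identity $(\m{W}\m{x})_j = \sum_k W_{jk}x_k$ gives the two relations $\nabla_{W_{jk}} f = J_j\,x_k$ and $\nabla_{x_p} f = \sum_j W_{jp}\,J_j$. Substituting these turns the restricted left-hand side into $\sum_{j,k} W_{jk}^2\norm{J_j}_2^2 x_k^2$, while the integrand of the input-invariant MLS becomes $\sum_p \norm{\sum_j W_{jp} J_j}_2^2 x_p^2$.

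The crux is the elementary estimate $\norm{\sum_{j=1}^d \m{v}_j}_2^2 \le d\sum_{j=1}^d \norm{\m{v}_j}_2^2$, a coordinatewise consequence of Cauchy--Schwarz, applied with $\m{v}_j = W_{jp} J_j$ and with $d$ the number of rows of $\m{W}$ (the width of the first hidden layer, over which the index $j$ ranges). This gives $\norm{\nabla_{x_p} f}_2^2 \le d\sum_j W_{jp}^2\norm{J_j}_2^2$, hence $\f{1}{d}\sum_p \norm{\nabla_{x_p} f}_2^2 x_p^2 \le \sum_{j,p} W_{jp}^2\norm{J_j}_2^2 x_p^2$, which is precisely the restricted left-hand side after relabeling $p\mapsto k$. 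Averaging over the $n$ training samples closes the argument.

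The one point that needs care is correctly identifying the factor $d$ as the number of rows of $\m{W}$ (equivalently the dimension of the first-layer pre-activation indexed by $j$), since this is exactly where the $1/d$ prefactor on the right-hand side originates; the multidimensional output affects only the bookkeeping, in that each $J_j$ is a vector and products are replaced by squared $2$-norms. Note that no $O(\eps)$ term appears in this step: the interpolation approximation enters only through \Cref{lem:adapt_sharp}, which is what identifies the left-hand side with the elementwise-adaptive sharpness $\sadapt(\bld{\tet}^*)$.
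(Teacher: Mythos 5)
Your proposal is correct and follows essentially the same route as the paper's proof: restrict the parameter sum to the first-layer weights $\m{W}$ (which the paper leaves implicit by setting $\btet = \m{W}$), apply the chain-rule identities of \Cref{eq:gradient_conj}, and use the Cauchy--Schwarz estimate $\bigl(\sum_{k=1}^d a_k\bigr)^2 \le d\sum_{k=1}^d a_k^2$ over the pre-activation index, which is exactly where the factor $1/d$ (with $d$ the first-layer width) arises in \Cref{eq:adapt_equiv}. Your write-up is in fact cleaner than the paper's, which conflates the input index with $p$ in its display, but there is no substantive difference in the argument.
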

\begin{proof}
Now we adapt the linear stability trick. For $\btet = \m{W}$, the first-layer weights, we have
\begin{equation} \label{eq:adapt_equiv}
\begin{split}
    \sum_{j = 1}^m |\btet_j|^2\Norm{\nabla_{\bld{\tet}_j} f(\m{x}, \bld{\tet})}_2^2 &= \sum_{i,j,k} J_{jk}^2\m{W}_{ki}^2 x_{p}^2 \\
    &= \sum_{i,j} \left(\sum_{k = 1}^d J_{jk}^2 \m{W}_{ki}^2\right)x_p^2 \\
    &\geq \f{1}{d}\sum_i\Norm{\nabla_{x_p}f}_2^2x_p^2
\end{split}
\end{equation}
where same as in \Cref{eq:gradient_conj}, $J = \frac{\partial f(\vct{W}\vct{x}; \bar{\bld{\theta}})}{\partial (\vct{W}\vct{x})}$, $\nabla_{\vct{x}} f(\vct{W}\vct{x}; \bar{\bld{\theta}}) = J\vct{W}$, and $x_p$ is the p-th entry of $\m{x}$. Taking the sample mean of both sides proves the proposition.

\end{proof}

\section{Numerical approximation of normalized MLS and elementwise-adaptive sharpness} \label{app:approx}
In this appendix, we detail how we approximate the normalized MLS and adaptive sharpness in \Cref{subsec:vit}. Note that for all network $f$ the last layer is the sigmoid function, so the output is bounded in $(0,1)$, and we use MSE loss to be consistent with the rest of the paper. 

For the adaptive sharpness, we adopt the definition in \citet{andriushchenko2023modern} and use the sample mean to approximate the expectation in \Cref{eq:adap_sharp}. Therefore, for network $f(\m{w})$, 
\begin{equation}
    \sadapt(f) = \f{1}{nm} \sum_{i = 1}^n \sum_{j = 1}^m L(\m{x}_i; \m{w} + \del_j) - L(\m{x}_i; \m{w}),
\end{equation}
where $\del\sim \gN(0, 0.01\diag(|w|^2))$.

For normalized MLS, we first reiterate the definition from the main text. 
We use normalized MLS below as an approximation to the input-invariant MLS (\Cref{def:iimls}), because the latter is computationally prohibitive for modern large ViTs. On the other hand, there is an efficient way to estimate normalized MLS as detailed below. 
\begin{defi}
    We define the \textbf{normalized MLS} as $\f{1}{n} \sum_{i = 1}^n\Norm{\m{x}_i}_2^2 \Norm{\nabla_{\m{x}_i}f}^2_2$
\end{defi}
Therefore, to approximate normalized MLS, we need to approximate $\Norm{\nabla_{\m{x}_i}f}_2$. By definition of matrix 2-norm, 
\begin{equation}
\Norm{\nabla_{\m{x}}f}_2 = \sup_{\del}\f{\Norm{\nabla_{\m{x}}f\ \del}_2}{\Norm{\del}_2}\approx \max_{\del}\f{\Norm{f(\m{x} + \del) - f(\m{x})}_2}{\Norm{\del}_2}.
\end{equation}
To solve this optimization problem, we start from a randomly sampled vector $\del$ that has the same shape as the network input, and we update $\del$ using gradient descent.  

\section{Empirical analysis of the bound} \label{sec:bound}
\subsection{Tightness of the bound}
In this section, we mainly explore the tightness of the bound in \Cref{eq:mls} for reasons discussed in \Cref{subsec:mls}. 
First we rewrite \Cref{eq:mls} as 
\begin{equation}\label{eq:tightness}
    \begin{split}
        \mls &= \f{1}{n}\sum_{i = 1}^n \norm{\nabla_\m{x}f(\m{x}_i, \bt^*)}_2 \hspace{4em} :=A\\
        &\leq \f{\norm{\vct{W}}_2}{n}\sum_{i = 1}^n \f{\norm{\nabla_{\m{W}} f(\m{x}_i, \bt^*)}_F}{\norm{\m{x}_i}_2} \hspace{4em} :=B\\
        &\leq \norm{\vct{W}}_2\sqrt{\f{1}{n}\sum_{i = 1}^n \f{1}{\norm{\m{x}_i}_2^2}} \sqrt{\f{1}{n}\sum_{i = 1}^n \norm{\nabla_{\m{W}} f(\m{x}_i, \bt^*)}_F^2}\hspace{4em} :=C \\
        &\leq \norm{\vct{W}}_2\sqrt{\f{1}{n}\sum_{i = 1}^n \f{1}{\norm{\m{x}_i}_2^2}} S(\bld{\theta}^*)^{1/2} \hspace{4em} := D
    \end{split}
\end{equation}
Thus \Cref{eq:mls} consists of 3 different steps of relaxations. We analyze them one by one:
\begin{enumerate}
    \item ($A\leq B$) The equality holds when $\norm{W^TJ}_2 = \norm{W}_2 \norm{J}_2$ and $\Norm{J}_F = \Norm{J}_2$, where $J = \frac{\partial f(\vct{W}\vct{x}; \bar{\bld{\theta}})}{\partial (\vct{W}\vct{x})}$. The former equality requires that $W$ and $J$ have the same left singular vectors. The latter requires $J$ to have zero singular values except for the largest singular value. Since $J$ depends on the specific neural network architecture and training process, we test the tightness of this bound empirically (\Cref{fig:fashion_bound}).  
    \item ($B\leq C$) The equality requires $\f{\norm{\nabla_{\m{W}} f(\m{x}_i, \bt^*)}_F}{\norm{\m{x}_i}_2}$ to be the same for all $i$. In other words, the bound is tight when $\f{\norm{\nabla_{\m{W}} f(\m{x}_i, \bt^*)}_F}{\norm{\m{x}_i}_2}$ does not vary too much from sample to sample. 
    \item ($C\leq D$) The equality holds if the model is linear, i.e. $\bt = \m{W}$. 
\end{enumerate}
We empirically verify the tightness of the above bounds in \Cref{fig:fashion_bound}

\begin{figure}[!htb]
    \centering
    \includegraphics[width=.9\textwidth,keepaspectratio]{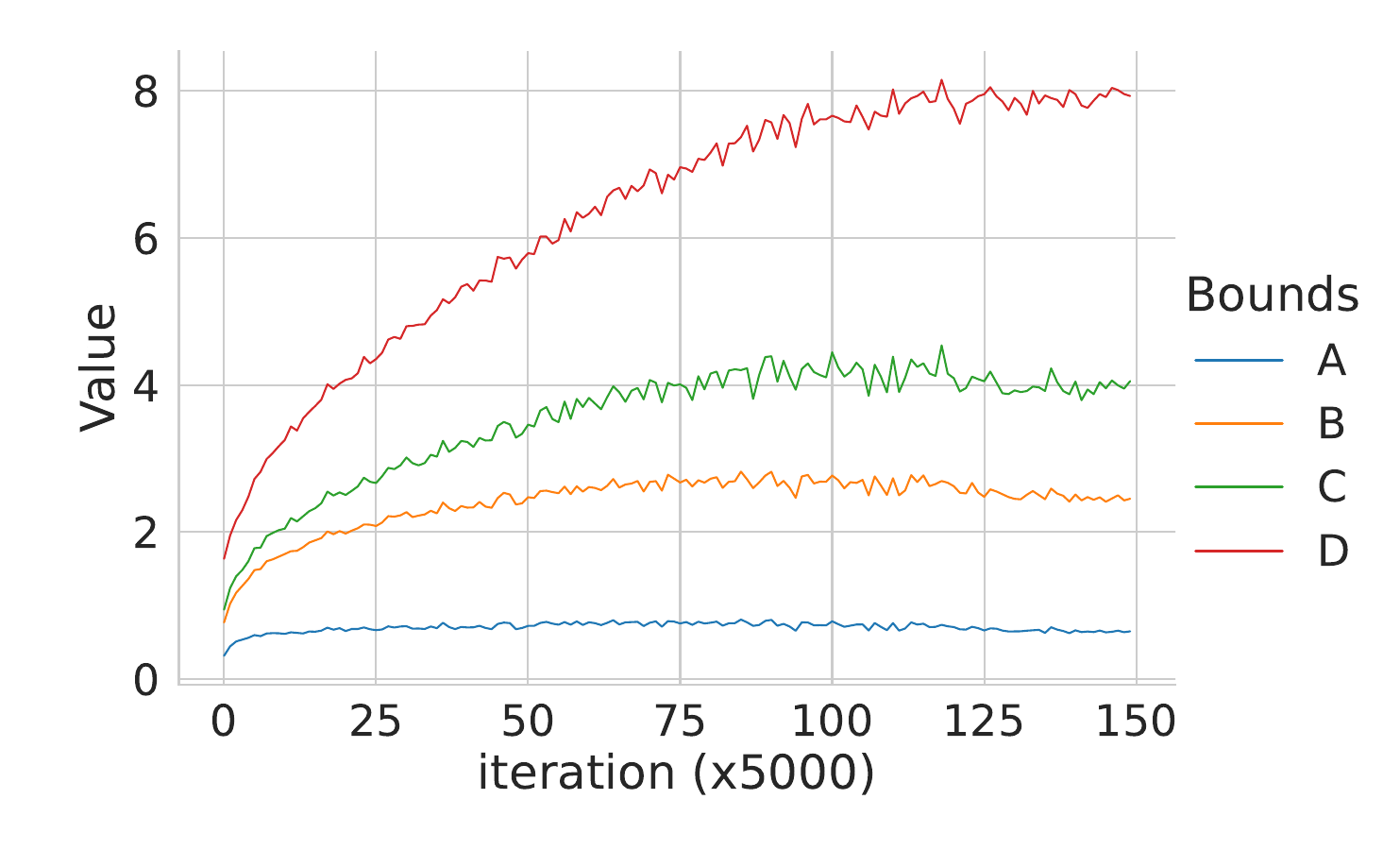}
    \caption{\textbf{Empirical tightness of the bounds.} We empirically verify that the inequalities in \Cref{eq:tightness} hold and test their tightness. The results are shown for a fully connected feedforward network trained on the FashionMNIST dataset. The quantities A, B, C, and D are defined in \Cref{eq:tightness}. We see that the gap between C and D is large compared to the gap between A and B or B and C. This indicates that partial sharpness $\norm{\nabla_{\m{W}} f(\m{x}_i, \bt^*)}_F$ (sensitivity of the loss w.r.t. only the input weights) is more indicative of the change in the maximum local sensitivity (A). Indeed, correlation analysis shows that bound C is positively correlated with MLS while bound D, perhaps surprisingly, is negatively correlated with MLS (\Cref{fig:fashion_corr_FNN}).}
    \label{fig:fashion_bound}
\end{figure}

\subsection{Correlation analysis} \label{subsec:corr}
We empirically show how different metrics correlate with each other, and how these correlations can be predicted from our bounds. We train 100 VGG-11 networks with different batch sizes, learning rates, and random initialization to classify images from the CIFAR-10 dataset, and plot pairwise scatter plots between different quantities at the end of the training: local dimensionality, sharpness (square root of \Cref{eq:zloss_sharpness}), log volume (\Cref{eq:logvol}), MLS (\Cref{eq:mls}), NMLS (\Cref{eq:nmls}), generalization gap (gen gap), D (\Cref{eq:tightness}), bound (right-hand side of \Cref{eq:nmls}) and relative sharpness \citep{petzka2021relative} (see \Cref{fig:cifar_corr}). We only include CIFAR-10 data with 2 labels to ensure that the final training accuracy is close to 100\%. 

\begin{figure}[!htb]
    \centering
    \includegraphics[width=\textwidth,keepaspectratio]{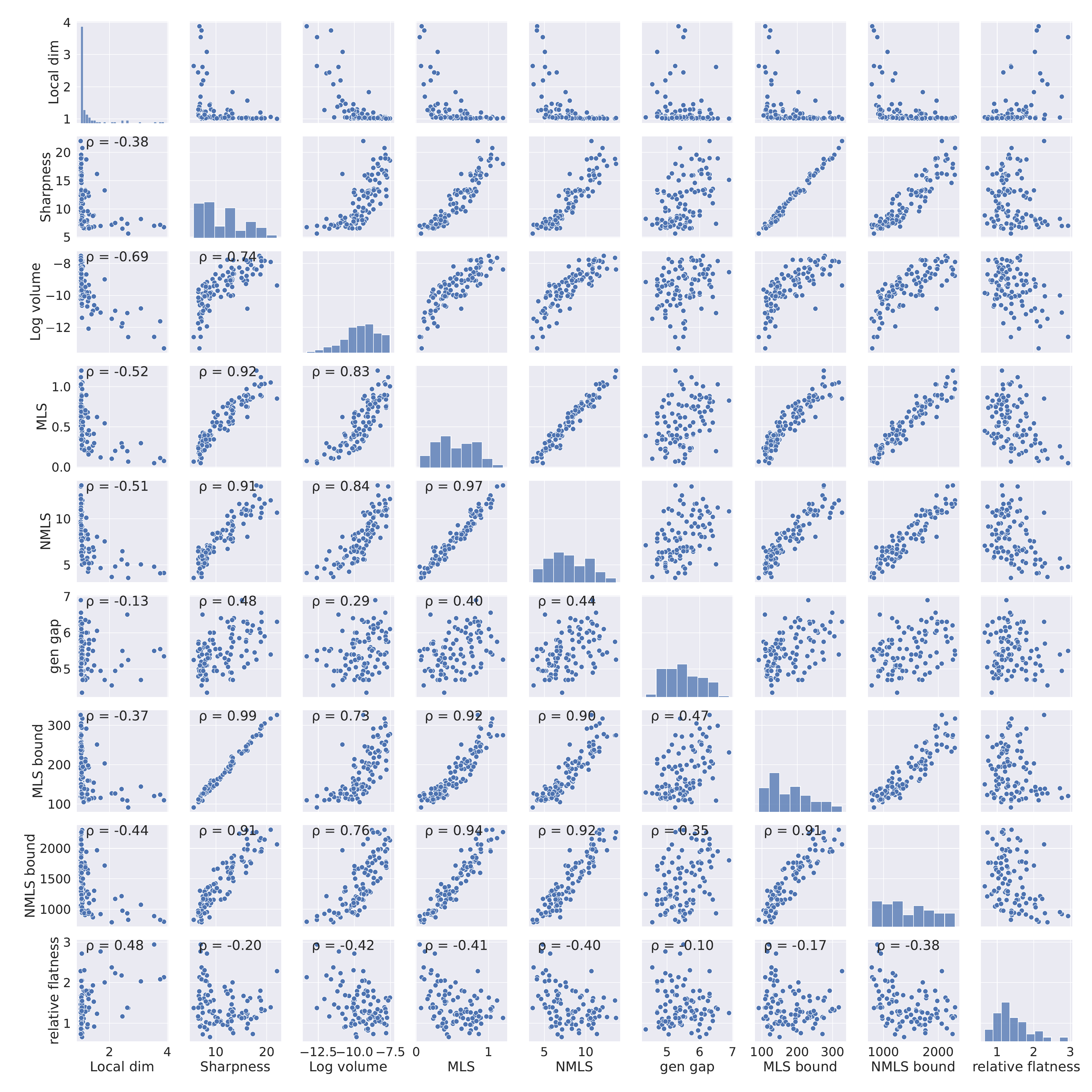}
    \caption{\textbf{Pairwise correlation among different metrics.} We trained 100 different VGG-11 networks on the CIFAR-10 dataset using vanilla SGD with different learning rates, batch sizes, and random initializations and plot pairwise scatter plots between different quantities: local dimensionality, sharpness (square root of \Cref{eq:zloss_sharpness}), log volume (\Cref{eq:logvol}), MLS (\Cref{eq:mls}), NMLS (\Cref{eq:nmls}), generalization gap (gen gap), MLS bound (\Cref{prop:mls}), NMLS bound (\Cref{prop:nmls}) and relative sharpness (\citep{petzka2021relative}). The Pearson correlation coefficient $\rho$ is shown in the top-left corner for each pair of quantities. See \Cref{subsec:corr} for a summary of the findings in this figure. }
    \label{fig:cifar_corr}
\end{figure}
We repeat the analysis on MLPs and LeNets trained on the FashionMNIST dataset and the CIFAR-10 dataset (\Cref{fig:fashion_corr_FNN} and \Cref{fig:cifar_corr_lenet}). 
We find that
\begin{enumerate}
    \item The bound over NMLS, MLS, and NMLS introduced in \Cref{eq:nmls} and \Cref{eq:mls} consistently correlates positively with the generalization gap.  
    \item Although the bound in \Cref{eq:main} is loose, log volume correlates well with sharpness and MLS. 
    \item Sharpness is positively correlated with the generalization gap, indicating that little reparametrization effect \citep{dinh2017sharp} is happening during training, i.e. the network weights do not change too much during training. This is consistent with observations in \citet{ma_linear_2021}. 
    \item The bound derived in \Cref{eq:nmls} correlates positively with NMLS in all experiments. 
    \item MLS that only consider the first layer weights can sometimes negatively correlate with the bound derived in \Cref{eq:mls} (\Cref{fig:fashion_corr_FNN}).
    \item Relative flatness that only consider the last layer weights introduced in \citep{petzka2021relative} shows weak (even negative) correlation with the generalization gap. Note that ``relative flatness" is a misnomer that is easier understood as ``relative \textit{sharpness}", and is supposed to be \textit{positively} correlated with the generalization gap.
\end{enumerate}
\begin{figure}[!htb]
    \centering
    \includegraphics[width=\textwidth,keepaspectratio]{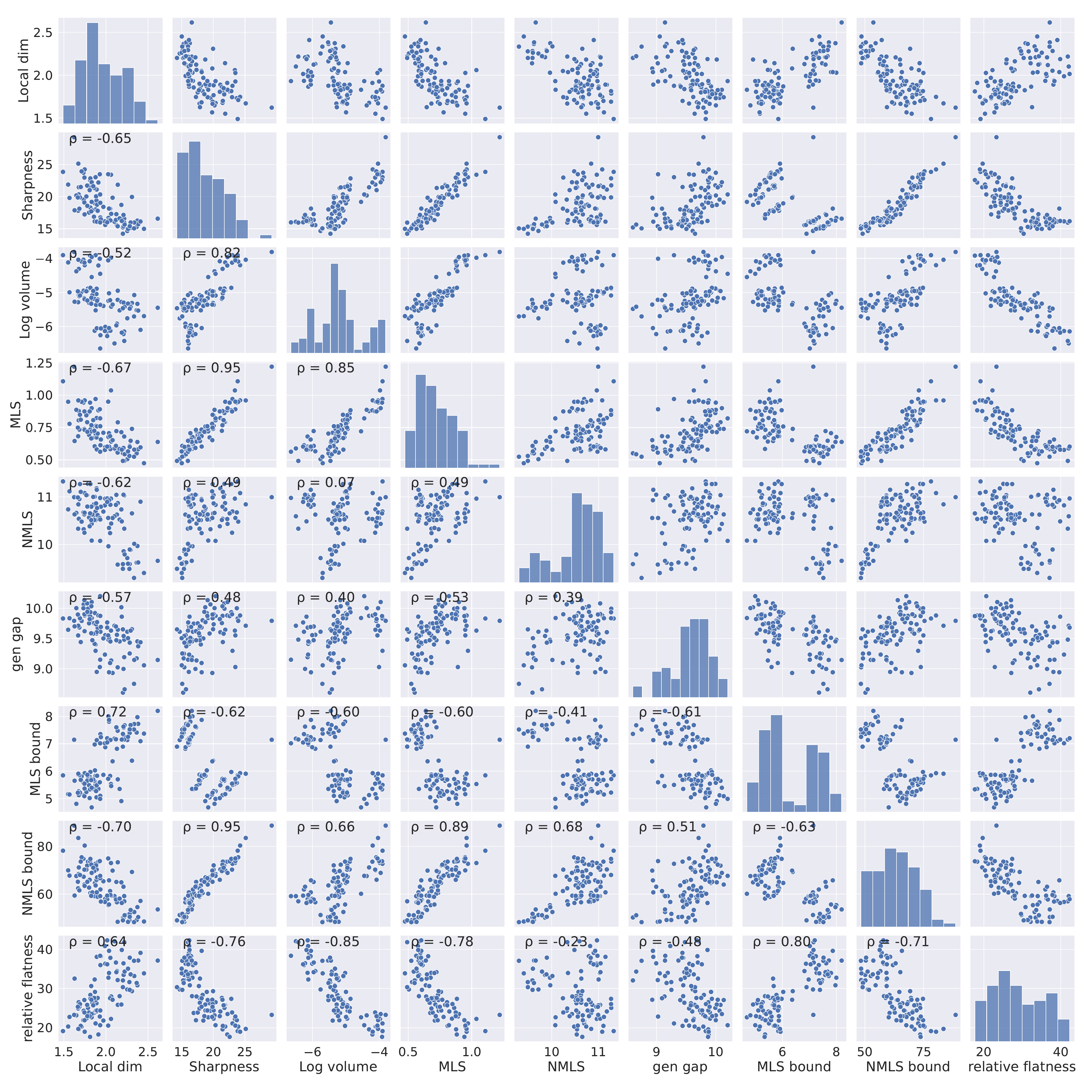}
    \caption{\textbf{Pairwise correlation among different metrics.} We trained 100 different 4-layer MLPs on the FashionMNIST dataset using vanilla SGD with different learning rates, batch size, and random initializations and plot pairwise scatter plots between different quantities: local dimensionality, sharpness (square root of \Cref{eq:zloss_sharpness}), log volume (\Cref{eq:logvol}), MLS (\Cref{eq:mls}), NMLS (\Cref{eq:nmls}), generalization gap (gen gap), MLS bound (\Cref{prop:mls}), NMLS bound (\Cref{prop:nmls}) and relative sharpness (\citep{petzka2021relative}). The Pearson correlation coefficient $\rho$ is shown in the top-left corner for each pair of quantities. See \Cref{subsec:corr} for a summary of the findings in this figure.   }
    \label{fig:fashion_corr_FNN}
\end{figure}

\begin{figure}[!htb]
    \centering
    \includegraphics[width=\textwidth,keepaspectratio]{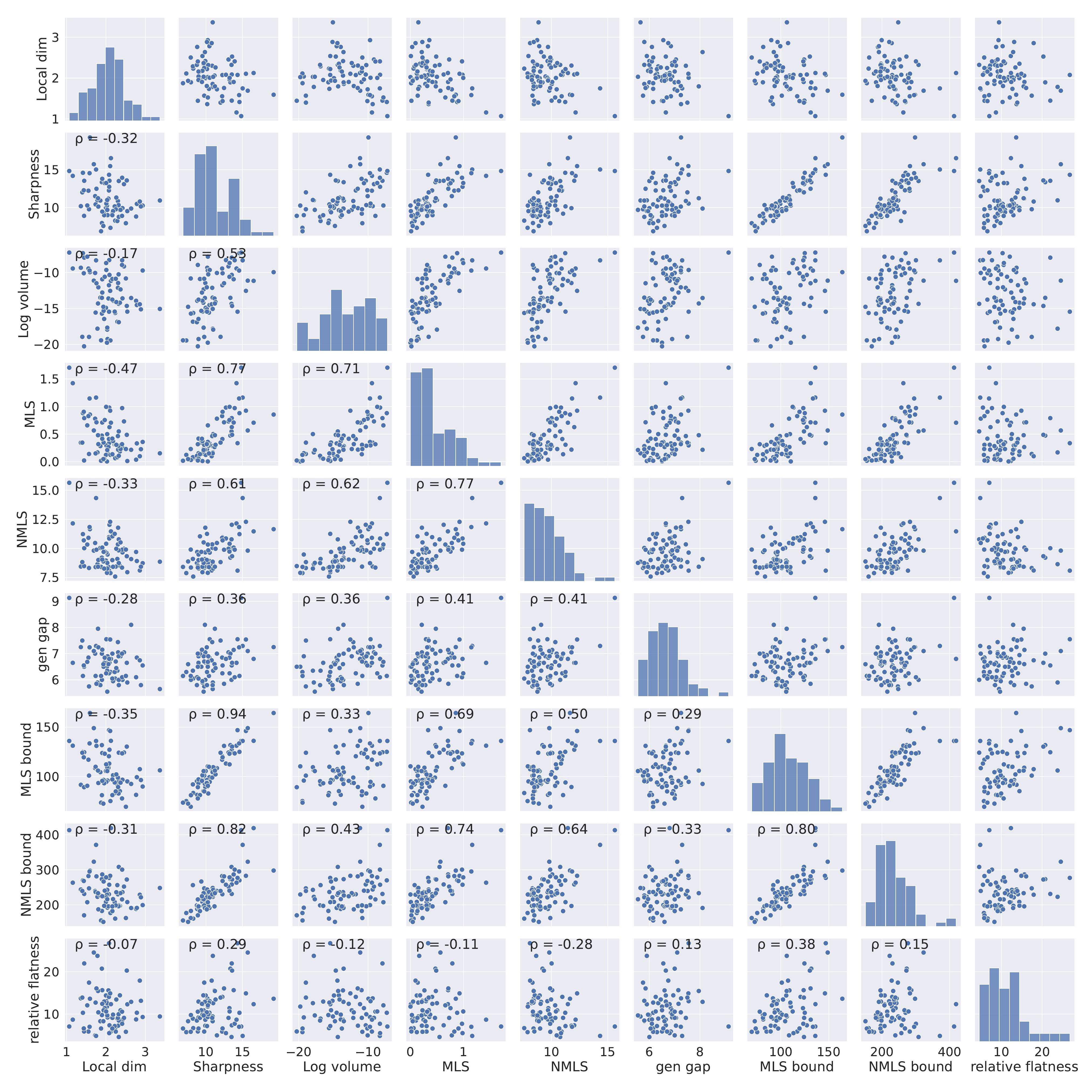}
    \caption{\textbf{Pairwise correlation among different metrics.} We trained 100 different LeNets on the CIFAR-10 dataset using vanilla SGD with different learning rates, batch size, and random initializations and plot pairwise scatter plots between different quantities: local dimensionality, sharpness (square root of \Cref{eq:zloss_sharpness}), log volume (\Cref{eq:logvol}), MLS (\Cref{eq:mls}), NMLS (\Cref{eq:nmls}), generalization gap (gen gap), MLS bound (\Cref{prop:mls}), NMLS bound (\Cref{prop:nmls}) and relative sharpness (\citep{petzka2021relative}). The Pearson correlation coefficient $\rho$ is shown in the top-left corner for each pair of quantities. See \Cref{subsec:corr} for a summary of the findings in this figure.   }
    \label{fig:cifar_corr_lenet}
\end{figure}
\FloatBarrier
\section{Additional experiments} \label{app:addexp}

\begin{figure}[!htb]
    \centering
    \includegraphics[width=1\textwidth,keepaspectratio]{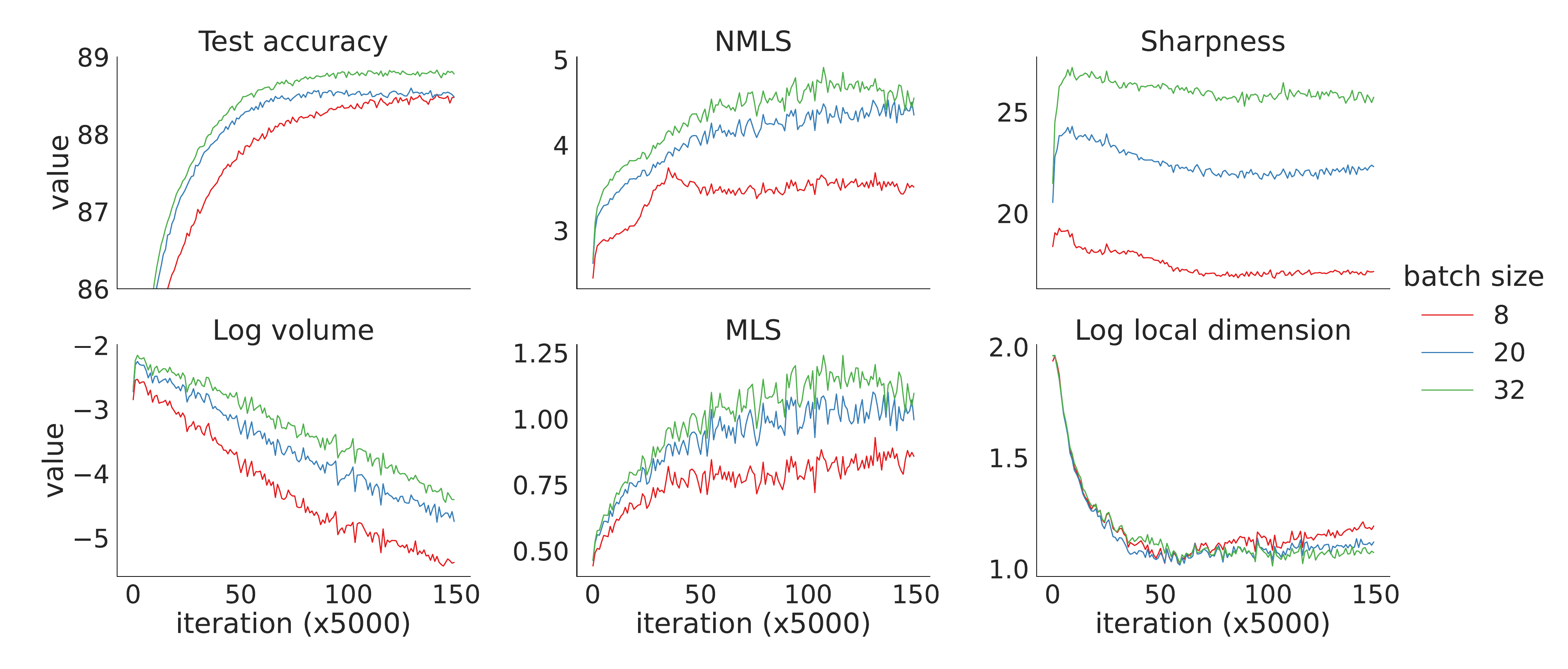}
    \caption{Trends in key variables across SGD training of a 4-layer MLP with fixed learning rate (equal to 0.1) and varying batch size (8, 20, and 32). MLS/NMLS closely follows the trend of sharpness during the training. From left to right: test accuracy, NMLS, sharpness (square root of \Cref{eq:zloss_sharpness}), log volumetric ratio (\Cref{eq:logvol}), MLS (\Cref{eq:mls}), and local dimensionality of the network output (\Cref{eq:dim}).}
    \label{fig:fashion_lr}
\end{figure}
\begin{figure}[!htb]
    \centering
    \includegraphics[width=1\textwidth,keepaspectratio]{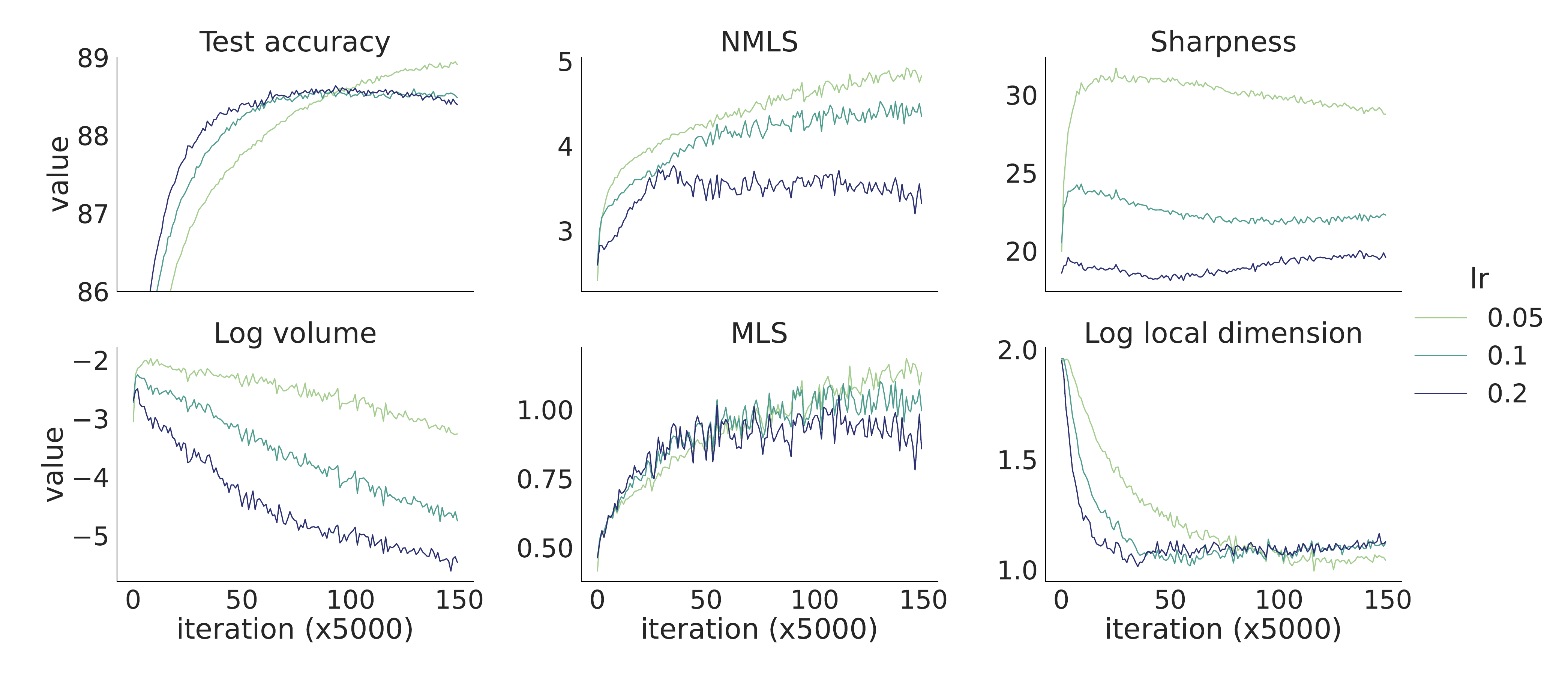}
    \caption{Trends in key variables across SGD training of a 4-layer MLP with fixed batch size (equal to 20) and varying learning rates (0.05, 0.1 and 0.2). MLS/NMLS closely follows the trend of sharpness during the training. From left to right: test accuracy, NMLS, sharpness (square root of \Cref{eq:zloss_sharpness}), log volumetric ratio (\Cref{eq:logvol}), MLS (\Cref{eq:mls}), and local dimensionality of the network output (\Cref{eq:dim}). }
    \label{fig:fashion_batch}
\end{figure}

\section{Sharpness and compression on test set data}

Even though \Cref{eq:zloss_sharpness} is exact for interpolation solutions only (i.e., those with zero loss), we found that the test loss is small enough (\Cref{fig:3}) so that it should be a good approximation for test data as well. Therefore we analyzed our simulations to study trends in sharpness and volume for these held-out test data as well (\Cref{fig:3}).
We observed that this sharpness increased rather than diminished as a result of training. We hypothesized that sharpness could correlate with the difficulty of classifying testing points. This was supported by the fact that the sharpness of misclassified test data was even greater than that of all test data. Again we see that MLS has the same trend as the sharpness. Despite this increase in sharpness, the volume followed the same pattern as the training set. This suggests that compression in representation space is a robust phenomenon that can be driven by additional phenomena beyond sharpness.  Nevertheless, the compression still is weaker for misclassified test samples that have higher sharpness than other test samples. Overall, these results emphasize an interesting distinction between how sharpness evolves for training vs. test data.

\begin{figure*}[!h]
    \centering
    \includegraphics[width=1.0\textwidth,keepaspectratio]{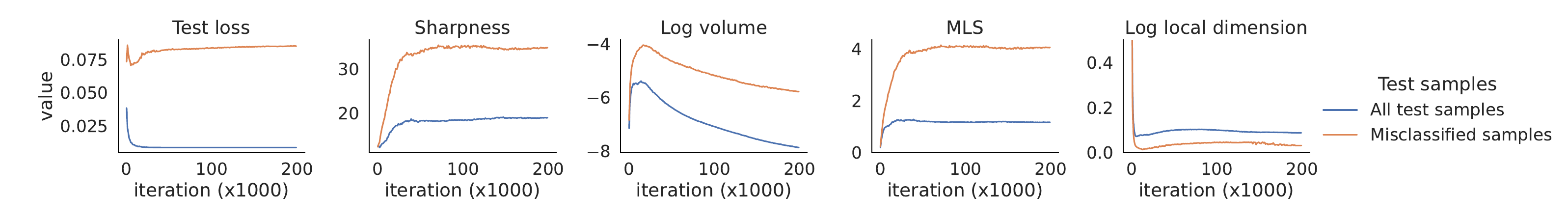}
     \caption{Trends in key variables across SGD training of the VGG-11 network with fixed learning rate (equal to 0.1) and batch size (equal to 20) for samples of the test set. After the loss is minimized, we compute sharpness and volume on the test set. Moreover, the same quantities are computed separately over the entire test set or only on samples that are misclassified. In order from left to right in row-wise order: test loss, sharpness (\Cref{eq:sharpness}), log volumetric ratio (\Cref{eq:logvol}), MLS, and local dimensionality of the network output (\Cref{eq:dim}).}
    \label{fig:3}
\end{figure*}
\section{Computational resources and code availability} \label{app:comp}
All experiments can be run on one NVIDIA Quadro RTX 6000 GPU. The code is available at \url{https://github.com/chinsengi/sharpness-compression}.
\end{document}